    \let\citet\@firstofone
\newcommand{\note}[2][]{}
\newcommand{\arthur}[2][]{}
\newcommand{\pagetodo}[2][]{}
\newcommand{\mikael}[2][]{}
\newcommand{\yuxin}[2][]{}
\newtheorem{theorem}{Theorem}[section]
\newtheorem{lemma}[theorem]{Lemma}
\theoremstyle{definition}
\newtheorem{definition}[theorem]{Definition}
\newtheorem{observation}[theorem]{Observation}
\crefname{algorithm}{Algorithm}{Algorithms}
\crefname{theorem}{Theorem}{Theorems}
\crefname{lemma}{Lemma}{Lemmas}
\crefname{proposition}{Proposition}{Propositions}
\crefname{corollary}{Corollary}{Corollaries}
\crefname{conjecture}{Conjecture}{Conjectures}
\crefname{claim}{Claim}{Claims}
\crefname{definition}{Definition}{Definitions}
\crefname{example}{Example}{Examples}
\crefname{remark}{Remark}{Remarks}
\crefname{algocf}{Algorithm}{Algorithms}
\providecommand{\csxdefall}[2]{\csxdef{#1}{#2}} %
\newcounter{templabel}
\newcommand{\templabel}{%
    \stepcounter{templabel}%
    \csxdefall{lasttemplabel}{temp:\thetemplabel}%
    \label{temp:\thetemplabel}%
}
\newcommand{\llasttemplabel}{temp:\the\numexpr\value{templabel} - 1\relax}
\newcommand{\lllasttemplabel}{temp:\the\numexpr\value{templabel} - 2\relax}
\newcommand{\llllasttemplabel}{temp:\the\numexpr\value{templabel} - 3\relax}
\newcommand*{\bbig}[1]{\Big{#1}}
\newcommand*{\bbbig}[1]{\bigg{#1}}
\newcommand*{\bbbbig}[1]{\Bigg{#1}}
\newcommand*{\bbbbigl}[1]{\Biggl{#1}}
\newcommand*{\bbbbigr}[1]{\Biggr{#1}}
\newcommand{\placeholderarg}{{\mathchoice{\:}{\:}{\,}{\,}\cdot\mathchoice{\:}{\:}{\,}{\,}}}
\DeclarePairedDelimiter\Par()
\DeclarePairedDelimiter\Br[]
\DeclarePairedDelimiter\Ceil\lceil\rceil
\DeclarePairedDelimiter\Floor\lfloor\rfloor
\DeclarePairedDelimiter\Angle\langle\rangle
\DeclarePairedDelimiterX\Set[1]\{\}{%
    #1%
}
\DeclarePairedDelimiterX\MSet[1]\{\}{\mskip-5mu\delimsize\{%
    #1%
\delimsize\}\mskip-5mu}
\DeclarePairedDelimiterXPP\ParWithGiven[1]{}{(}{)}{}{%
    #1%
}
\DeclarePairedDelimiterXPP\ParWithGGiven[1]{}{(}{)}{}{%
    #1%
}
\DeclarePairedDelimiterXPP\BrWithGiven[1]{}{[}{]}{}{%
    #1%
}
\NewDocumentCommand\Prob{ e{_^} }{
    \operatorname{\mathbb{P}}
    \IfNoValueF{#1}{\sb{#1}}
    \IfNoValueF{#2}{\errmessage{Not supposed to have superscript}\sp{#2}}
    \BrWithGiven
}
\NewDocumentCommand\Ev{ e{_^} }{
    \operatorname{\mathbb{E}}
    \IfNoValueF{#1}{\sb{#1}}
    \IfNoValueF{#2}{\errmessage{Not supposed to have superscript}\sp{#2}}
    \BrWithGiven
}
\NewDocumentCommand\Var{ e{_^} }{
    \operatorname{Var}
    \IfNoValueF{#1}{\sb{#1}}
    \IfNoValueF{#2}{\errmessage{Not supposed to have superscript}\sp{#2}}
    \ParWithGiven
}
\NewDocumentCommand\Cov{ e{_^} }{
    \operatorname{Cov}
    \IfNoValueF{#1}{\sb{#1}}
    \IfNoValueF{#2}{\errmessage{Not supposed to have superscript}\sp{#2}}
    \ParWithGiven
}
\NewDocumentCommand\KL{ e{_^} }{
    \operatorname{D_{KL}}
    \IfNoValueF{#1}{\errmessage{Not supposed to have subscript}\sb{#1}}
    \IfNoValueF{#2}{\errmessage{Not supposed to have superscript}\sp{#2}}
    \ParWithGGiven
}
\newcommand*{\methodname}[1]{\textsc{#1}}
\DeclareMathOperator{\bigO}{\mathcal{O}}
\DeclareMathOperator{\bigOtilde}{\tilde{\mathcal{O}}}
\DeclareMathOperator{\bigOmega}{\Omega}
\DeclareMathOperator{\bigTheta}{\Theta}
\DeclareMathOperator{\sign}{sign}
\DeclareRobustCommand{\iid}{%
  \relax
  \ifmmode
    \mathrel{\overset{\makebox[0pt]{\mbox{\normalfont\tiny\sffamily iid}}}{\sim}}%
  \else
    i.i.d.\ %
  \fi
}
\DeclarePairedDelimiter\abs{\lvert}{\rvert}
\DeclarePairedDelimiterXPP\normzero[1]{}\lVert\rVert{_0}{#1}
\DeclarePairedDelimiterXPP\normone[1]{}\lVert\rVert{_1}{#1}
\DeclarePairedDelimiterXPP\normtwo[1]{}\lVert\rVert{_2}{#1}
\DeclarePairedDelimiterXPP\norminf[1]{}\lVert\rVert{_\infty}{#1}
\DeclarePairedDelimiterXPP\normmax[1]{}\lVert\rVert{_\mathrm{max}}{#1}
\DeclarePairedDelimiterXPP\normspec[1]{}\lVert\rVert{_\mathrm{spectral}}{#1}
\let\normmax\norminf
\NewDocumentCommand\BallClosed{ e{_^} }{
    \operatorname{\mathfrak{B}}_]
    \IfNoValueF{#1}{\errmessage{Not supposed to have subscript}\sb{#1}}
    \IfNoValueF{#2}{\sp{#2}}
    \Par
}
\NewDocumentCommand\BallOpen{ e{_^} }{
    \operatorname{\mathfrak{B}}_)
    \IfNoValueF{#1}{\errmessage{Not supposed to have subscript}\sb{#1}}
    \IfNoValueF{#2}{\sp{#2}}
    \Par
}
\newcommand*{\eps}{\varepsilon}
\newcommand*{\indicator}[1]{\operatorname{\mathbbm{1}}\Set*{#1}}    %
\NewDocumentCommand{\event}{ O{Event} }{\mathfrak{#1}}
\newcommand*{\negate}[1]{\overline{#1}}
\NewDocumentCommand{\nevent}{ O{Event} }{\negate{\event[#1]}}
\renewcommand*{\restriction}[1]{\mathord{\vert}_{#1}} %
\DeclareMathOperator*{\argmin}{arg\,min\,}
\DeclareMathOperator*{\argmax}{arg\,max\,}
\DeclareMathOperator{\DistribOver}{\Delta}
\DeclareMathOperator{\Convex}{conv}
\DeclareMathOperator{\Loss}{\mathcal{L}}
\DeclareMathOperator{\Corr}{corr}
\DeclareMathOperator{\VC}{VC}
\DeclareMathOperator{\Weaklearner}{\mathcal{W}}
\newcommand*{\InputSpace}{\mathcal{X}}
\newcommand*{\LabelSpace}{\mathcal{Y}}
\newcommand{\ind}{\mathbbm{1}}
\newcommand{\e}{\mathbb{E}}
\newcommand{\p}{\mathbb{P}}
\newcommand{\fs}{f^{\star}}
\DeclareMathOperator{\Ln}{\mathrm{Ln}}
\let\ls\Loss
\DeclareMathOperator{\err}{\mathrm{err}}
\DeclareMathOperator{\dlh}{\Convex(\mathcal{H})}
\let\cor\Corr
\DeclareMathOperator{\fat}{\mathrm{fat}}
\title{Revisiting Agnostic Boosting}
\author{%
  Arthur {da Cunha}\\
  Aarhus University\\
  \texttt{dac@cs.au.dk}
  \And
  Mikael M{\o}ller H{\o}gsgaard\\
  Aarhus University\\
  \texttt{hogsgaard@cs.au.dk}
  \And
  Andrea Paudice\\
  Aarhus University\\
  \texttt{apaudice@cs.au.dk}
  \And
  Yuxin Sun\\
  Aarhus University\\
  \texttt{yxsau@cs.au.dk}
}
\begin{document}
  \maketitle

  \begin{abstract}%
    Boosting is a key method in statistical learning, allowing for converting weak learners into strong ones.
While well studied in the realizable case, the statistical properties of weak-to-strong learning remain less understood in the agnostic setting, where there are no assumptions on the distribution of the labels.
In this work, we propose a new agnostic boosting algorithm with substantially improved sample complexity compared to prior works under very general assumptions.
Our approach is based on a reduction to the realizable case, followed by a margin-based filtering of high-quality hypotheses.
Furthermore, we show a nearly-matching lower bound, settling the sample complexity of agnostic boosting up to logarithmic factors.

  \end{abstract}

  \section{Introduction}\label{sec:intro}

Binary classification under the Probably Approximately Correct (PAC) learning model is perhaps the most fundamental paradigm of statistical learning theory.

In the \emph{realizable} version of the problem,
we consider an input space $\InputSpace$,
a known hypothesis class $\fF \subseteq \Set{\pm 1}^\InputSpace$, %
an \emph{unknown} target classifier $f \in \fF$,
and a training sequence $\rsS = \Par[\big]{(\rx_1, f(\rx_1)), \ldots, (\rx_m, f(\rx_m))} \in (\InputSpace \times \{\pm 1\})^m$ of independent samples drawn from an \emph{unknown} but fixed distribution $\dD$ and each labeled according to $f$.
The objective is to ensure that,
for any desired accuracy and confidence parameters, $\eps, \delta > 0$,
the algorithm can,
with probability at least $1 - \delta$,
learn from a training sequence of size $m(\eps, \delta)$ and find a classifier $h$ with an expected error, $\Prob_{\rx \sim \dD}{h(\rx) \ne f(\rx)}$, less than $\eps$.
A learning algorithm achieving such is called a \emph{strong learner} and the amount of training data $m(\eps, \delta)$ necessary to reach this goal is called its \emph{sample complexity}.

It is known that in the realizable setting the \emph{Empirical Risk Minimization} procedure (\methodname{ERM}) of choosing any hypothesis $ h\in \fF $ minimizing the empirical loss $ \sum_{(x, y) \in \rsS} \indicator{h(x) \ne y}$, which is $ 0 $ in the realizable case, yields a strong learner---as long as $\fF$ has bounded VC dimension.

An interesting question posed by Kearns and Valiant is whether it is possible to obtain a strong learner starting from humbler requirements \citep{Kearns88,KearnsV89}.
Namely, the authors consider \emph{$\gamma$--weak learners} that are only guaranteed to produce classifiers $w$ from a base class $\fH$ with error lower than $1/2 - \gamma$ for some $\gamma > 0$.
More precisely, for any distribution $\dQ$ over $\InputSpace$,
the weak learner produces, with probability at least $1 - \delta_0$, hypothesis $w$ such that
\begin{align}
  \Prob_{\rx \sim \dQ}{w(\rx) \ne f(\rx)}
  < 1/2 - \gamma
  \label{eq:gamma-weaklearner}
  ,
\end{align}
thus having $\gamma$ \emph{advantage} over random guessing.
Answering whether such simple learners could be boosted to achieve arbitrarily good generalization led to intense research and, eventually, to many such \emph{weak-to-strong} learning algorithms, including the celebrated \methodname{AdaBoost} \citep{FreundS97}.

Realizability can, however, be too strong of a hypothesis.
To capture a broader scenario, we consider the \emph{agnostic} learning model \citep{Haussler92,KearnsSS94}, where the samples are generated by an arbitrary distribution $\dD$ over $\InputSpace \times \LabelSpace$.
Thus, obtaining a learner with arbitrarily small error
\begin{align}
  \err_{\dD}(h)
  \coloneqq \Prob_{(\rx, \ry) \sim \dD}{h(\rx) \ne \ry}
\end{align}
in this setting is not always possible.
Accordingly, the goal of the learner is to find a classifier $h$ with error close to that of the best classifier in a reference class $\fF$.
Formally, in the agnostic setting, we say that a learning algorithm is a \emph{strong learner} if, for any $\eps, \delta > 0$, given $m(\eps,\delta)$ examples from $\dD$, with probability at least $1 - \delta$ the learner outputs a classifier $h$ such that $\err_{\dD}(h) \le \inf_{f \in \fF} \err_{\dD}(f) + \eps$.
Hereon, we assume without loss of generality that the infimum error is achieved by some $f^\star \in \fF$.
It is worth noting that \methodname{ERM} is still a strong learner in this setting.

A natural question, first posed by \cite{Ben-DavidLM01}, is whether it is possible to boost the performance of weak learners in the agnostic setting.
To formalize this question,
it is useful to measure the performance of hypotheses in terms of correlation under the data distribution:
\begin{align}
  \Corr_{\dD}(h)
  \coloneqq \Ev_{(\rx, \ry) \sim \dD}{\ry \cdot h(\rx)}
  .
\end{align}
Notice that for binary hypothesis $h$, $\Corr_{\dD}(h) = 1 - 2 \err_{\dD}(h)$
so maximizing the correlation is equivalent to minimizing the error.
With that, we have the following definition.

\begin{definition}[Agnostic Weak Learner]\label{def:agnosticweaklearner}
  Let $\gamma, \eps_0, \delta_0 \in [0, 1]$,
  $m_0 \in \N$,
  $\fF \subseteq \Set{\pm 1}^{\InputSpace}$,
  and $\fH \subseteq [-1, 1]^{\InputSpace}$.
  A learning algorithm $\Weaklearner\colon (\InputSpace \times \Set{\pm 1})^* \to \fH$ is an agnostic weak learner with
  advantage parameters $\gamma$ and $\eps_0$,
  failure probability $\delta_0$,
  sample complexity $m_0$,
  reference hypothesis class $\fF$
  and base hypothesis class $\fH$
  iff:
  for any distribution $\fD$ over $\InputSpace \times \Set{\pm 1}$,
  given sample $\rsS \sim \fD^{m_0}$,
  with probability at least $1 - \delta_0$
  over $\rsS$
  the hypothesis $\rw = \Weaklearner(\rsS)$ satisfies that
  \begin{align}
    \Corr_{\fD}(\rw)
    \ge \gamma \cdot \sup_{f \in \fF} \Corr_{\fD}(f) - \eps_0
    .
    \label{eq:agnosticweaklearner}
  \end{align}
  For short,
  we call such an algorithm a \emph{$(\gamma, \eps_0, \delta_0, m_0, \fF, \fH)$ agnostic weak learner},
  and we may omit some of those parameters when the context allows for no ambiguity.
\end{definition}

At the cost of some verbosity,
the definition above is quite general.
Indeed,
it encompasses the original definition from \citet{Ben-DavidLM01},
and, from it, one can readily recover the usual definition of weak learner.\footnotemark{}
The generality of \cref{def:agnosticweaklearner} aims to capture as much as possible of the diverse set of alternatives proposed in the literature,
thus allowing for a fairer comparison with previous works, as discussed in \cref{sec:related}.
Despite the broad definition, we obtain the following lower bound on the sample complexity of learning under the agnostic model stemming from \cref{def:agnosticweaklearner}.
\footnotetext{To recover the definition of \citet{Ben-DavidLM01}, it suffices to restrict definition \cref{def:agnosticweaklearner} to the case $\gamma = 1$ and $\fH$ being a binary class.
  If, instead, $\fD$ is realized by some $f \in \fF$, so that $\sup_{f \in \fF} \Corr_{\fD}(f) = 1$,
  we can use that $\Corr_{\fD}(h) = 1 - 2 \err_{\fD}(h)$ to obtain a $(\frac{\gamma-\eps_0}{2})$--weak learner, as in \cref{eq:gamma-weaklearner}.
}%

\begin{theorem}\label{thm:lowerbound}
  There exist universal constants $C_1, C_2, C_3, C_{4} > 0$ for which the following holds.
  Given any $L \in (0,1)$,
  any  $\gamma, \eps_0, \delta_0 \in (0, 1]$,
  and any integer $d \ge C_1 \ln(1/\gamma^{2})$,
  for $m_0 = \Ceil[\big]{C_2 d \ln\Par{\frac{1}{\delta_{0}\gamma^{2}}} / \Par{\eps_{0}^{2} \ln\Par{\frac{1}{\gamma^{2}}}}}$
  there exist
  domain $\InputSpace$,
  reference class $\fF \subseteq \Set{\pm 1}^{\InputSpace}$,
  and base class $\fH \subseteq \Set{\pm 1}^{\InputSpace}$ with $\VC(\fH) \le d$,
  such that
  there exists a $(\gamma, \eps_0, \delta_0, m_0, \fF, \fH)$ agnostic weak learner
  and, yet, the following also holds.
  For any learning algorithm $\cA\colon (\cX\times \Set{\pm 1} )^* \to \Set{\pm 1}^\cX$
  there exists a distribution $\dD$ over $\InputSpace \times \Set{\pm 1}$
  such that $\Corr_{\dD}(\fs) = L$
  and for sample size $m \geq C_{3}\frac{d}{\gamma^{2}(1-L)}\frac{1}{L^{2}}$
  we have that
  \begin{align}
    \Ev_{\rsS \sim \dD^m}*{\Corr_{\dD}(\cA(\rsS))}
    \le \Corr_{\dD}(\fs) - \sqrt{C_4 (1-\Corr_{\dD}(\fs)) \cdot \frac{d}{(\gamma-\eps_{0})^{2} m \ln(1/\gamma)}}
    .
  \end{align}
\end{theorem}

We know from classic results that agnostically learning relative to a reference class with VC dimension $d$ implies an excess error of $\bigOmega(\sqrt{d/m})$.
Accordingly, the basic idea behind the bound above is to construct a base class $\fH$ with $\VC(\fH) \le d$ that is sufficient to agnostically weak learn a reference class $\fF$ with VC dimension of at least $d/\gamma^{2}$
so that learning relative to $\fF$ would incur an excess error of $\bigOmega(\sqrt{d/(\gamma^{2}m)})$.
However, with our construction we were only able to show that $\VC(\fF) \ge d/(\gamma^{2} \ln(1/\gamma))$, leading to the extra logarithmic factor in the bound.
Our argument draws inspiration from \citet{NogaAGHM23} and is deferred to \cref{sec:lowerbound}, which also contains versions of the theorem with different trade-offs.

We show that the bound from \cref{thm:lowerbound} is nearly tight by providing an algorithm that matches it up to logarithmic factors.
The following summarizes the statistical properties of the method.

\begin{theorem}\label{thm:main}
  There exist universal constants $C, c > 0$ and learning algorithm $\cA$ such that the following holds.
  Let $\Weaklearner$ be a $(\gamma, \eps_0, \delta_0, m_0, \fF, \fH)$ agnostic weak learner.
  If $\gamma > \eps_0$ and $\delta_0 < 1$,
  then,
  for all $\delta \in (0, 1)$,
  $m \in \N$,
  and distribution $\dD$ over $\InputSpace \times \Set{\pm 1}$,
  given training sequence $\rsS \sim \dD^m$,
  we have that $\cA$ given $\Par*{\rsS, \Weaklearner, \delta, \delta_0, m_0}$,
  with probability at least $1 - \delta$ over $\rsS$ and the internal randomness of the algorithm,
  returns $\rv$ satisfying that
  \begin{gather}
    \Corr_{\cD}(\sign(\rv))
    \ge \Corr_{\cD}(\fs) -\sqrt{C (1-\Corr_{\cD}(\fs))\cdot \beta} - C \cdot \beta
    ,
    \label{eq:main}
  \end{gather}
  where
  \begin{math}
    \beta
    = \frac{\hat{d}}{(\gamma-\eps_{0})^{2} m} \cdot \Ln^{3/2}\Par[\bbig]{\frac{(\gamma-\eps_{0})^{2} m}{\hat{d}}} + \frac{1}{m} \ln\frac{\ln m}{\delta}
  \end{math}
  with
  \begin{math}
    \Ln(x)
    \coloneqq \ln(\max\Set{x, e})
  \end{math}
  and $\hat{d} = \fat_{c(\gamma-\eps_{0})}(\fH)$
  being the fat-shattering\footnotemark{} dimension of $\fH$ at level $c(\gamma-\eps_{0})$.
  \footnotetext{We recall the definition of the fat-shattering dimension and other notations in \cref{sec:notation}.}
\end{theorem}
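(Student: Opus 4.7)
The plan is to combine (i) a realizable-case boosting reduction that produces an empirical margin guarantee with respect to the best hypothesis $\fs$, and (ii) a margin-based generalization bound over $\Convex(\fH)$ expressed through the fat-shattering dimension, sharpened by a variance-aware analysis that delivers the optimistic term $\sqrt{C(1-\Corr_{\cD}(\fs))\cdot\beta}$.

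First I would set up the reduction. Consider the hypothetical ``relabeled'' distribution $\dD^{\star}$ obtained from $\dD$ by replacing each label $\ry$ by $\fs(\rx)$. On $\dD^{\star}$ and on any reweighting of it, \cref{def:agnosticweaklearner} guarantees that $\Weaklearner$ produces a hypothesis of correlation at least $\gamma - \eps_0$ with $\fs$, acting as a realizable weak learner with advantage $\gamma-\eps_0$. Although $\dD^{\star}$ is not accessible to $\cA$, one can simulate a standard realizable booster on the original sample $\rsS$: correlations on reweighted empirical distributions decompose into an $\fs$-agreement part (handled by the weak-learner guarantee) plus a mean-zero noise part (handled by Bernstein-type concentration tied to $\err_\cD(\fs)$). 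Running such a booster for $T = \Theta(1/(\gamma-\eps_0)^2)$ rounds yields $\rv = \sum_{t\le T}\alpha_t \rw_t \in \Convex(\fH)$ whose empirical $(c(\gamma-\eps_0)/16)$-margin against $\fs$ holds on nearly all training points.

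The second step lifts this empirical guarantee to population correlation. Since $\rv \in \Convex(\fH)$, the fat-shattering dimension $\hat d = \fat_{c(\gamma-\eps_0)/32}(\fH)$ controls its capacity at the relevant margin scale, and a classical margin-based uniform convergence argument bounds the population fraction of points where $\ry\cdot\rv(\rx)$ falls below the threshold by $\tilde O(\hat d/((\gamma-\eps_0)^{2}m))$ up to logarithmic factors, which accounts for the first summand of $\beta$. To sharpen the resulting $\sqrt{\beta}$ generalization term into the variance-aware $\sqrt{C(1-\Corr_\cD(\fs))\,\beta}$ form, I would apply a Bernstein / local-Rademacher refinement: since $\sign(\rv)$ can disagree with $\fs$ only on low-margin points, the variance of the excess-correlation loss is controlled by $1-\Corr_\cD(\fs) = 2\err_\cD(\fs)$, and the Bernstein bound then yields the claimed inequality. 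A margin-based filtering step, applied on a hold-out fold of $\rsS$ to $O(\log m)$ candidate truncations of $\rv$, contributes the $\Ln(\Ln m/\delta)/m$ term via a union bound.

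The main obstacle will be the variance-aware upgrade: classical margin-based uniform convergence alone yields only the slow $\sqrt{\hat d/m}$ rate, so obtaining the optimistic $\sqrt{(1-\Corr_\cD(\fs))\beta}$ requires either a local Rademacher inequality tailored to real-valued convex combinations of $\fH$ at scale $(\gamma-\eps_0)$, or an explicit peeling argument tracking the second moment of the margin loss. Keeping the complexity entering via $\fat_{\cdot}(\fH)$ at a level proportional to $(\gamma-\eps_0)$, rather than at a smaller scale that would inflate $\beta$, is the delicate technical point, and controlling the $\fs$-agreement-vs-noise decomposition of the reweighted empirical correlations uniformly across all $T$ boosting rounds is what ties the two steps together.
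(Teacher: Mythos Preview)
Your proposal has a genuine gap in the reduction step. The paper does \emph{not} run a single booster on the original labels and then ``decompose'' the weak-learner correlations into an $\fs$-agreement part plus noise. Instead, the first step of \cref{alg:main} (the loop at \cref{line:main:B1for}) enumerates \emph{all} $2^{m/3}$ relabelings of the first fold and runs the modified \methodname{AdaBoost} on each, producing a bag $\rfB_1$ of $2^{m/3}$ voting classifiers. One of these relabelings is exactly $\rsS_{1,\fs}$; on any reweighting $\dQ$ of it one has $\sup_{f\in\fF}\Corr_{\dQ}(f)=1$, so the weak-learner guarantee becomes a genuine realizable advantage $\gamma-\eps_0$, and \cref{lem:adaboostvari} yields a classifier $\rv_g\in\rfB_1$ with zero empirical $(\theta/8)$-margin loss against $\fs$. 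Your simulation idea cannot reproduce this: when $\Weaklearner$ is called on a reweighted distribution $\rD_t$ built from the \emph{original} labels, \cref{eq:agnosticweaklearner} controls correlation with those labels, not with $\fs$; moreover the weights $\rD_t$ are themselves driven by the original labels, so $\Corr_{\rD_t}(\fs)$ need not stay near $1$ across rounds and nothing forces the weak hypotheses to track $\fs$. This is precisely the obstacle that the brute-force enumeration (following \citet{HopkinsKLM24}) sidesteps; it is also why the subsequent filtering step must cut $2^{m/3}$ candidates down to $O(\log m)$ via empirical margin-loss minimization at a grid of levels, rather than merely select among ``truncations'' of a single $\rv$.

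A second discrepancy concerns how the optimistic term arises. The $\sqrt{(1-\Corr_\cD(\fs))\,\beta}$ factor does not come from a Bernstein or local-Rademacher argument on the final excess loss. It comes from the margin bound \cref{lem:marginbound}, which already has the optimistic shape $\Loss_\dD^\gamma(v) \le \Loss_{\rsS}^{\gamma'}(v) + C'\bigl(\sqrt{\Loss_{\rsS}^{\gamma'}(v)\cdot\beta/m} + \beta/m\bigr)$. Because the empirical $(\theta/8)$-margin loss of $\rv_g$ on the $\fs$-relabeling is \emph{zero}, this gives $\Loss_{\dD_{\fs}}^{\theta/16}(\rv_g) = O(\beta/m)$ outright; the $\err_\dD(\fs)$ factor then enters via the elementary inequality $\Loss_\dD^{\theta/16}(v) \le \err_\dD(\fs) + \Loss_{\dD_{\fs}}^{\theta/16}(v)$, and the square root reappears when \cref{lem:marginbound} is applied a second time on the filtering fold $\rsS_2$. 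Without the zero-empirical-margin starting point that only the enumeration delivers, your route to the optimistic rate has no foothold.
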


The bound above improves on previous results by a polynomial factor, as detailed in \cref{sec:related}.
The theorem incorporates the fact that \cref{def:agnosticweaklearner} allows for non-binary weak hypotheses, expressing the bounds in terms of the fat-shattering dimension of $\fH$, which reduces to $\VC(\fH)$ in the binary case.
Moreover, the bound in \cref{thm:main} desirably interpolates between the agnostic, $\Corr_{\cD}(\fs) < 1$, and the realizable, $\Corr_{\cD}(\fs) = 1$, settings.
Lastly, since random guessing leads to correlation $0$, the theorem above establishes that any non-trivial weak learner (with $\eps_0 < \gamma$) can be boosted to a strong learner in the agnostic setting.
As discussed in \cref{sec:related}, previous results were not sufficient to ensure this general fact.

We highlight that the method underlying \cref{thm:main} attains the performance ensured by the theorem under remarkably mild assumptions.
In addition to leveraging a fairly general weak learner (see \cref{def:agnosticweaklearner}),
the method does not require direct access to the reference class $\fF$ or the base class $\fH$, relying only on the weak hypotheses returned by $\Weaklearner$.
The technique also does not require knowledge of the advantage parameters $\gamma$ and $\eps_0$, thus preserving the characteristic adaptability of boosting.

As detailed in \cref{sec:overview}, while nearly optimal in terms of sample complexity, the algorithm proposed is not computationally efficient.
Hence, on this regard, it is comparable to \methodname{ERM} which can be shown to be a strong agnostic learner with near-optimal sample complexity \citep{understandingmachinelearning} (albeit, demanding direct access to the reference class $\fF$ and a way to bound its VC dimension).
However, while \methodname{ERM} is bound to be inefficient even for simple classes (unless P = NP) \citep{BartlettB99,Ben-DavidEL00},
boosting can, in principle, attain optimal sample complexity within manageable computational cost.
By showing that agnostic weak-to-strong learning is possible with nearly optimal sample complexity under minimal assumptions, we take a step towards agnostic boosting algorithms that are both computationally and statistically efficient.

  \subsection{Related Works}\label{sec:related}

The work that is closest to ours is the recent contribution by \citet{neuripsAgBoosting}.
Notably, it employs a definition of agnostic weak learner very close to \cref{def:agnosticweaklearner}.
The authors devise a method requiring $\bigOtilde\Par[\big]{\VC(\fH) / (\eps^{3}\gamma^{3})}$ samples to produce a classifier $\rv$ satisfying
\begin{align}
  \Corr_{\fD}(\rv)
  \ge \Corr_{\fD}(\fs) - \frac{2\eps_0}{\gamma} - \eps
  \label{eq:neuripsAgBoosting}
\end{align}
with high probability.\footnote{Concurrent work by the same authors shows that with $\bigOtilde(\VC(\cH)/(\eps^{2}\gamma^{2}))$ labelled and $\bigOtilde(\VC(\cH)/(\eps^{3}\gamma^{3}))$ unlabelled samples one can achieve the bound in \cref{eq:neuripsAgBoosting}. See \citet{ghai2025sampleoptimalagnosticboostingunlabeled}.}
Crucially, for their algorithm to yield a strong learner, the advantage parameters ($\gamma$ and $\eps_0$) must satisfy that $\eps_0 = \bigO(\eps\gamma)$.
In contrast, \cref{thm:main} shows that as long as $\gamma > \eps_0$, that is, for any non-trivial weak learner, it is possible to achieve weak-to-strong learning in the agnostic setting.
Under the further mild assumption that $\gamma \ge \eps_0/2$ we obtain a sample complexity of order $\bigOtilde\Par[\big]{(1-\Corr_{\fD}(\fs)) \cdot \hat{d} / (\eps^{2}\gamma^{2}) + \hat{d} / (\eps\gamma^{2})}$,
improving on \citet{neuripsAgBoosting} by a polynomial factor.
Also, \cref{thm:main} provides a bound in terms of the fat-shattering dimension of the base class $\fH$, which reduces to the VC dimension when only considering binary classifiers while also allowing more general hypothesis classes.

A body of agnostic boosting literature diverges from the original definition of weak learner from \citet{Ben-DavidLM01}.
Most notably, the works \citet{KalaiK09,BrukhimCHM20,Feldman10} propose agnostic boosting methods based on the re-labeling of examples rather than re-weighting, as in traditional boosting and in the method we present here.
While those works all introduce different definitions of weak learner, the re-labeling strategy allows those definitions to only require that all distributions have the same marginal over $\InputSpace$ as the data distribution $\dD$.
Except for this aspect, we tried to make our definition as general as possible, also to better encompass the alternatives.
In the following, we strive to compare our results while accounting for the different definitions used by others.
We compile and further discuss the multiple definitions of agnostic weak learners in \cref{app:otherdefinitions}.

Another recent work, by \citet{BrukhimCHM20}, proposes, under a different empirical weak learning assumption (cf.\ \cref{app:otherdefinitions}), an algorithm that after $T$ rounds of boosting produces a classifier $v$ with expected empirical correlation satisfying that
\begin{align}
  \Ev[\bbbig]{\frac{1}{m}\sum_{i=1}^m y_i v(x_i)}
  \ge \sup_{f\in\fF} \Ev[\bbbig]{\frac{1}{m}\sum_{i=1}^m y_i f(x_i)} - \Par*{\frac{\eps_0}{\gamma}+\bigO\Par*{\frac{1}{\gamma\sqrt{T}}}},
  \label{eq:brukhimchm20}
\end{align}
where $\eps_0$ plays a similar role as in \cref{def:agnosticweaklearner}.
While this is a bound on the empirical performance of the classifier,
the authors argue that one can obtain a generalization bound up to an $\eps$ term with a sample complexity of $m = \bigOtilde(Tm_{0}/\eps^2)$,
where $m_0$ is the sample complexity of their weak learner, usually assumed to be $\Theta(1/\eps_0^2)$.
Overall, this yield $m = \bigOtilde(T/(\eps^2\eps_0^2))$.
Now, for their algorithm to be a strong learner one would have to set $T = \bigTheta\Par{1/(\eps^{2}\gamma^{2})}$ and assume that $\eps_0 = \bigTheta(\eps\gamma)$, implying a sample complexity of $m = \bigOtilde(1/(\eps^{6}\gamma^{4}))$.

The work of \citet{Feldman10} is the hardest to compare to ours as the authors employ a definition not encompassed by \cref{def:agnosticweaklearner}.
Under a definition parameterized by $\alpha$ and $\gamma$,
they propose a learning algorithm yielding a classifier $v$ such that
\begin{align}
  \err_{\cD}(v)=\inf_{f\in\fF} \err_{\cD}(f)+2\alpha+\eps
  .
\end{align}
Our understanding is that the associated sample complexity is of order $\bigO\Par{1/\gamma^4+1/\eps^4}$, where $\gamma \le \alpha$.
As in the previous cases, to obtain a strong learning guarantee one has constrain the weak learner non-trivially.
For \citet{Feldman10}, the authors require that $\alpha = \bigTheta(\eps)$.
Moreover, the sample complexity of their proposed method is $\bigO(1/\eps^4)$ regardless of the weak learning guarantee.
In contrast, \cref{thm:main} does not require that $\gamma \le \eps$,
so when the advantage $\gamma-\eps_0$ is constant, the theorem ensures a sample complexity of order $\bigOtilde(d/\eps^2).$

Nonetheless, we stress that \citet{neuripsAgBoosting,BrukhimCHM20,Feldman10,KalaiK09} propose computationally efficient algorithms, bringing insights both to the computational and statistical aspects of agnostic boosting, while we only consider the latter.

Besides the works mentioned above, the literature on agnostic boosting includes several other with some of the most related to ours being \citet{gavinsky2003optimally,KalaiS05,KalaiMV08,long2008adaptive,chen2016communication}.

  \section{Additional Notations}\label{sec:notation}

We let $\Ln\colon \R \to [1, \infty)$ be the truncated logarithm, given by $x \mapsto \ln(\max\Set{x, e})$.
Given a set $\sA$, we let $\sA^* \coloneqq \bigcup_{n=0}^\infty \sA^n$ be the set of all finite sequences of elements of $\sA$.
The notation $\DistribOver(\sA)$ stands for the set of all probability distributions over $\sA$.
For a distribution $\dD \in \DistribOver(\sA)$ and a integer $ m\geq 1 $, we let $\dD^m$ be the distribution over $\sA^m$ obtained by taking $m$ independent samples from $\dD$.
Given real-valued functions $f, g$ and $\alpha, \beta \in \R$, we denote by $\alpha f + \beta g$ the mapping $x \mapsto \alpha f(x) + \beta g(x)$.
We represent the set of convex combinations of at most $T \in \N$ functions from a family $\fF$ as $\Convex^T(\fF)$.
That is,
\begin{align}
  \Convex^T(\fF)
  \coloneqq \Set[\bbig]{\sum_{i \in [T]} \alpha_i f_i : \alpha_i \in [0, 1], f_i \in \fF, \sum_{i \in [T]} \alpha_i = 1}
  .
\end{align}
For the entire convex hull, we write
\begin{math}
  \Convex(\fF)
  \coloneqq \cup_{t = 1}^\infty \Convex^t(\fF)
  .
\end{math}
We let $\sign(x) = \indicator{x \geq 0} - \indicator{x < 0}$ and $\sign(f)$ denote the mapping $x \mapsto \sign(f(x))$.
Notably, $\sign(0) = 1$.

For a classifier $h\colon \InputSpace \to \{\pm 1\}$ and a distribution $\dD$ over $\InputSpace \times \{\pm 1\}$,
we define $\dD_{f}$ as the distribution over $\InputSpace \times \{\pm 1\}$ such that
\begin{math}
  \Prob_{(\rx, \ry) \sim \dD_{\fs}}{A}
  = \Prob_{(\rx, \ry) \sim \dD}{(\rx, \fs(\rx))\in A}
\end{math}
for all measurable sets $A \subseteq \InputSpace \times \{\pm 1\}$.
That is, $\dD_{f}$ has the same distribution as $\dD$ over $\InputSpace$ but with the labels given by $f$.

Given an input space $\InputSpace$, which for simplicity we always assume to be countable,
let $\dD \in \DistribOver(\InputSpace \times \R)$.
For any $\lambda \ge 0$ we let the $\lambda$-margin loss of a hypothesis $g\colon \InputSpace \to \R$ with respect to $\dD$ be given by
\begin{align}
  \Loss_{\dD}^\lambda(g)
  &= \Prob_{(\rx, \ry) \sim \dD}{\ry \cdot g(\rx) \le \lambda}
\end{align}
with the shorthand
\begin{math}
  \Loss_{\dD}(g)
  \coloneqq \Loss_{\dD}^0(g)
  .
\end{math}
Despite the generality of this definition, we reserve the notation $\err_{\dD}(\placeholderarg)$ for the error of binary classifiers.

Given function class $\fH \subseteq \R^{\cX}$ and $\alpha > 0$ we define the fat-shattering dimension of $\fH$ at level $\alpha$ as the largest natural number $d = \fat_{\alpha}(\fH)$ such that there exist points $x_{1}, \ldots, x_{d} \in \InputSpace$ and level sets $r_{1}, \ldots, r_{d} \in \R$ satisfying the following:
For all $b \in \{\pm 1\}^{d}$ there exists $h_b \in \fH$ such that for all $i \in [d]$ it holds that
\begin{math}
  h_b(x_{i}) \ge r_{i} + \alpha
\end{math}
if $b_{i} = 1$;
and
\begin{math}
  h_b(x_{i}) \le r_{i} - \alpha
\end{math}
if $b_{i} = -1$.
In words,
whenever $d = \fat_{\alpha}(\fH)$ there exist a set of points and a set of levels, each of size $d$, such that the hypotheses in $\fH$ can oscillate around those with margin $\alpha$.

Finally, we adopt the convention that $\argmin$ and analogous functions resolve ties arbitrarily so to return a single element even when multiple ones realize the extremum under consideration.
Also, whenever we write a set or sequence in place of a distribution, we mean the uniform distribution over that set or sequence.
As an example, $\err_{\sS}(\placeholderarg)$ refers to the empirical error of hypotheses on the sequence $\sS$.
As the reader may have noticed, we use boldface letters to denote random variables.%

  \section{Our argument}\label{sec:overview}

In this section, we overview the arguments underlying the proof of \cref{thm:main} and the associated method, \cref{alg:main}.
The detailed proofs are deferred to the appendices.

We start, however, with a brief overview of the proof of \cref{thm:lowerbound}.
We know from classic results that agnostically learning relative to a reference class with VC dimension $d$ implies an excess error of $\bigOmega(\sqrt{d/m})$.
Accordingly, the basic idea behind the \cref{thm:lowerbound} is to construct a base class $\fH$ with $\VC(\fH) \le d$ that is sufficient to agnostically weak learn a reference class $\fF$ with VC dimension of at least $d/\gamma^{2}$
so that learning relative to $\fF$ would incur an excess error of $\bigOmega(\sqrt{d/(\gamma^{2}m)})$.
However, with our construction we were only able to show that $\VC(\fF) \ge d/(\gamma^{2} \ln(1/\gamma))$, leading to the extra logarithmic factor in the bound.
Our argument draws inspiration from \citet{NogaAGHM23} and is deferred to \cref{sec:lowerbound}, which also contains versions of the theorem with different trade-offs.%

We now turn our attention to \cref{thm:main}.

As we shall see, the difference between the advantage parameters $\gamma$ and $\eps_0$ from \cref{def:agnosticweaklearner} plays a role similar to that of the \emph{advantage} in the realizable setting.
Throughout this section, we let $\theta \coloneqq \gamma - \eps_0$ to both highlight this analogy and simplify the notation.

The argument is based on the abstract framework proposed in the seminal work \citet{HopkinsKLM24} to derive agnostic learning algorithms from their realizable counterparts.
Their framework can be summarized into two steps:
\begin{enumerate}
  \item Run the realizable learner on all possible re-labelings of the training set;
  \item Among the hypotheses generated in the first step, return the one with the lowest empirical error on a validation dataset.
\end{enumerate}
Some notes about this framework are in order.
First, notice that the first step already requires exponential time, thus \citet{HopkinsKLM24} focuses only on statistical and information theoretic aspects of the problem, forgoing computational considerations, and so does our work.
Second, while the framework above is quite abstract and, thus, suites many settings, obtaining concrete results from it usually requires some extensions and adaptations, as illustrated by many of the results in \citet{HopkinsKLM24}.
This note is especially pertinent when aiming for (near) optimal bounds, which is the case for our work, and we will detail the adaptations we made to the framework later in the text.

Our proposed method can be decomposed into three steps.
Our argument mirrors this separation, being organized as follows:
\begin{itemize}
  \item We first show how to reduce the problem to the realizable boosting setting, which allows us to apply more standard techniques.
    We leverage those to show that enumeration can produce an exponentially large set of hypotheses containing, with high probability, a classifier with good generalization guarantees.
  \item Then, we filter those hypotheses to obtain a new set with much smaller size (logarithmic), while preserving at least one good hypothesis with high probability.
  \item In the final step, we identify the good classifier via a standard validation procedure over the set of hypotheses obtained in the previous step.
\end{itemize}

\subsection{Reduction to the realizable setting}

Consider a training sequence $\rsS = \Par[\big]{(\rx_{1},\ry_{1}),\ldots,(\rx_{m},\ry_{m})} \sim \dD^{m}$
and, given $f \in \fF$, let $\rsS_{f} = \Par[\big]{(\rx_{1},f(\rx_{1})),\ldots,(\rx_{m},f(\rx_{m}))}$ be the re-labeling of $\rsS$ according to $f$.
Notice that for any $f \in \fF$ we have that $\sup_{f \in \fF} \Corr_{\dQ}(f) = 1$ for any $\dQ \in \DistribOver(\rsS_{f})$.
Hence, an agnostic weak learner $\Weaklearner$ as in \cref{def:agnosticweaklearner} will,
with probability at least $1 - \delta_0$ over a training sequence $\rsS' \sim \dQ^{m_0}$,
output a hypothesis with correlation at least $\gamma - \eps_0$ under the given distribution.
As this is equivalent to having a (realizable) weak learner with advantage $\theta/2$,
standard realizable boosting methods can produce, with high probability, a voting classifier that approximates $f$ well---relative to the data distribution $\dD$.

Accordingly, we begin by providing a variation of \methodname{AdaBoost} adapted to our settings (see \cref{alg:adaboost}).
It starts with a confidence amplification step to ensure that the weak learner outputs hypotheses with sufficient correlation with probability of at least $1 - \delta/T$ rather than the $1 - \delta_0$ ensured by \cref{def:agnosticweaklearner}.
Then, the algorithm performs a boosting step based on correlations to accommodate for weak hypotheses with the continuous range $[-1, 1]$ instead of the usual $\{\pm 1\}$.

\begin{algorithm2e}[ht]
  \DontPrintSemicolon
  \caption{Modified \methodname{AdaBoost}} \label{alg:adaboost}
  \SetKwInOut{Input}{Input}\SetKwInOut{Output}{Output}
  \Input{Training sequence $\sS = \Par[\big]{(x_1, y_1), \ldots, (x_m, y_m)}$,
    weak learner $\Weaklearner$ and its sample complexity $m_0$,
    number of iterations $T$,
    confidence parameters $\delta, \delta_0 \in (0, 1)$
  }
  $D_1 = \Par[\big]{\frac{1}{m}, \ldots, \frac{1}{m}}$ \label{line:adaboost:initD}\\
  \For{$t \gets 1$ \KwTo $T$}{
    \tcp{Confidence amplification}
    $k \gets \Ceil[\big]{\frac{8}{1 - \delta_{0}} \cdot \ln\frac{10eT}{\delta}}$ \\
    \For{$\ell \gets 1$ \KwTo $k$}{
      Sample $\rsS_{t}^{\ell}$ according to $\rD_t^{m_{0}}$
       \label{line:adaboost:subsample}
       \tcp*{$m_{0}$ \iid samples from $\rD_{t}$}
      $\rh_{t}^{\ell} \gets \Weaklearner(\rsS_{t}^{\ell})$ \\
    }
    $\rh_t \gets \argmax_{\rh \in \Set{\rh_{t}^{1}, \ldots, \rh_{t}^{k}}} \Set{\Corr_{\rD_t}(\rh)}$ \\
    \tcp{Correlation based boosting step}
    $\rc_t \gets \Corr_{\rD_t}(\rh_t)$ \\
    $\ralpha_t \gets \frac{1}{2} \ln\frac{1 + \rc_t}{1-\rc_t}$ \\
    \For{$i \gets 1$ \KwTo $m$}{
      $\rD_{t+1}(i) \gets \rD_t(i) \exp\Par*{-\ralpha_t y_i \rh_t(x_i)}$ \\
    }
    $\rZ_{t} \gets \sum_{i=1}^{m} \rD_{t+1}(i)$ \\
    $\rD_{t + 1} \gets \rD_{t + 1}/\rZ_{t}$ \label{line:adaboost:updateD}\\
  }
  \Return Voting classifier $\rv = \frac{1}{\sum_{t=1}^T \ralpha_t} \cdot \sum_{t=1}^T \ralpha_t \rh_t$ \\
\end{algorithm2e}

As usual for classic boosting methods,
we provide a margin-based argument for the generalization properties of the classifier output by \cref{alg:adaboost}.
Suitably, our first lemma ensures that the algorithm outputs a hypothesis with large margins on the input training sequence.

\begin{lemma}[Realizable Learning Gaurantee of \cref{alg:adaboost}]\label{lem:adaboostvari}
  Let $\gamma', \delta_0 \in (0, 1)$,
  and given $m, m_0 \in \N$,
  let $S \in (\InputSpace \times \{\pm 1\})^m$.
  If a learning algorithm $\Weaklearner\colon (\InputSpace \times \Set{\pm 1})^* \to \fH \subseteq [-1, 1]^\InputSpace$ is such that
  for any $\dQ \in \DistribOver(S)$
  with probability at least $1 - \delta_0$ over a sample $\rsS' \sim \dQ^{m_0}$
  the hypothesis $\rh = \Weaklearner(\rsS')$ satisfies $\Corr_{\dQ}(\rh) \ge \gamma'$,
  then,
  for $T \ge \lceil 32\ln(em)/\gamma'^{2} \rceil$,
  running \cref{alg:adaboost} on input $(S, \Weaklearner, m_0, T, \delta, \delta_0)$ yields a voting classifier $\rv \in \Convex(\fH)$ such that
  with probability at least $1 - \delta$ over the random draws from \cref{line:adaboost:subsample}
  it holds that $y \rv(x) > \gamma'/8$ for all $(x, y) \in S$.
\end{lemma}
The proof of \cref{lem:adaboostvari} is based on the standard analysis of \methodname{AdaBoost} (e.g., \citet{boostingbook}) and is deferred to the appendix.

As we discussed, for re-labelings of the training sequence according to a function $f$ in the reference class $\fF$,
a $(\gamma, \eps_0)$ agnostic weak learner with $\fF$ as reference class
behaves like the weak learner in \cref{lem:adaboostvari} with $\gamma' = \gamma - \eps_0 \eqqcolon \theta$.
We are interested in the re-labeling $\rsS_{\fs}$,
where $\fs$ is such that $\Corr_{\dD}(\fs) = \sup_{f \in \fF} \Corr_{\dD}(f)$, with $\dD$ being the true data distribution.
So, we will regard $\fs$ as our target function since approximating it concludes the proof of \cref{thm:main}.
As we lack direct access to $\fs$,
the first step of our proposed method (\cref{alg:main}) is to run \cref{alg:adaboost} on all possible re-labelings of $\rsS_1$ and accumulate the obtained hypotheses in a bag, $\rfB_1$ (cf.\ \textbf{for} loop starting at \cref{line:main:B1for}).
Let $\rv_g$ be the hypothesis obtained when running \cref{alg:adaboost} on the re-labeling $\rsS_{\fs}$.
The subsequent steps of \cref{alg:main} are designed to find $\rv_g$ within $\rfB_1$.

\begin{algorithm2e}[ht]
  \DontPrintSemicolon
  \caption{Agnostic boosting algorithm}\label{alg:main}
  \SetKwInOut{Input}{Input}\SetKwInOut{Output}{Output}
  \Input{Training sequence $\sS \in (\InputSpace \times \Set{\pm 1})^m$ (with $m$ multiple of $3$)\footnotemark,
  weak learner $\Weaklearner$,
  confidence parameters $\delta_0, \delta \in (0, 1)$}
  $\fB_1 \gets \emptyset$, $\fB_2 \gets \emptyset$ \\
  Let \( S_1 \), \( S_2 \), and \( S_3 \) be the first, second, and third thirds of \( \rS \), respectively\\
  \tcp{Reduce to realizable case}
  \ForEach{$\cY \in \Set{\pm 1}^{m/3}$}{ \label{line:main:B1for}
    $\sS' \gets \Par[\big]{(\sS_1\restriction{\InputSpace})_i, \cY_i}_{i=1}^{m/3}$ \tcp*{Re-label $\sS_1$ with $\cY$}
    $T \gets \Ceil*{32m \ln\Par{em}}$ \tcp*{Number of rounds}
    $\rv \gets \methodname{Algorithm\ref{alg:adaboost}}\Par*{\sS', \Weaklearner, m_0, T, \delta/10, \delta_0}$ \label{line:main:runadaboost}\\
    $\rfB_1 \gets \rfB_1 \cup \Set{\rv}$ \\
  }
  \tcp{Filter out good hypotheses without knowledge of $\gamma$}
  \ForEach{$\gamma' \in \Set{1, 1/2, 1/4, \ldots, 1/2^{\Ceil{\log_2\sqrt{m}}}}$}{ \label{line:main:B2for}
    $\rv^*_{\gamma'} \gets \argmin_{\rv \in \rfB_1} \Set{\Loss_{\sS_2}^{\gamma'}(\rv)}$ \tcp*{Minimizer of the $\gamma'$-margin loss}
    $\rfB_2 \gets \rfB_2 \cup \Set{\rv^*_{\gamma'}}$ \\
  }
  \tcp{Return hypothesis with the lowest validation error}
  \Return $\rv = \argmin_{\rv \in \rfB_2} \Set{\Loss_{\sS_3}(\rv)}$
\end{algorithm2e}
\footnotetext{We assume that $m$ is a multiple of $3$ merely for simplicity.}

Leveraging \cref{lem:adaboostvari},
we have that with probability at least $1 - \delta/10$ over the randomness used in \cref{alg:adaboost},
there exists $\rv_g \in \rfB_1$ such that
\begin{align}
  \Loss_{\rsS_{1, \fs}}^{\theta/8}(\rv_g)
  = 0
  .
  \label{eq:overview:0empiricalmargin}
\end{align}
That is, $\rv_g$ has zero \emph{empirical} $\theta/8$-margin loss on $\rsS_{1, \fs}$, which is the re-labeling of $\rsS_1$ according to $\fs$.
In the following lemma, we convert this into a bound on the \emph{population} loss of $\rv_g$.

\begin{lemma}\label{lem:marginbound}
  There exist universal constants $ C' \ge 1 $ and $\hat{c} > 0$ for which the following holds.
  For all margin levels $0 \le \gamma < \gamma' \le 1$,
  hypothesis class $\fH \subseteq [-1, 1]^{\InputSpace}$,
  and distribution $\dD \in \DistribOver(\cX \times \{\pm 1\})$,
  it holds with probability at least $1 - \delta$ over $\rsS \sim \dD^m$ that for all $v \in \dlh$
  \begin{gather}
    \Loss_{\dD}^{\gamma}(v)
    \le \Loss_{\rsS}^{\gamma'}(v)
    + C' \Par[\bbig]{\sqrt{\Loss_{\rsS}^{\gamma'}(v) \cdot \frac{\beta}{m}}
      + \frac{\beta}{m}
    }
    ,
  \end{gather}
  where
  \begin{math}
    \beta
    = \frac{d}{(\gamma'-\gamma)^{2}} \Ln^{3/2}\Par[\bbig]{\frac{(\gamma'-\gamma)^{2}m}{d} \frac{\gamma'}{\gamma'-\gamma}}
      + \ln\frac{1}{\delta}
  \end{math}
  with
  \begin{math}
    d = \fat_{\hat{c}(\gamma'-\gamma)}(\fH)
    .
  \end{math}
\end{lemma}
The proof of \cref{lem:marginbound} is based on techniques similar to those used in \citet{hogsgaard2025improvedmargingeneralizationbounds}, and it is deferred to \cref{app:marginbound}.
The argument requires controlling the complexity of $\Convex(\fH)$.
Specifically, we show for $\alpha > 0$ that $\fat_{\alpha}(\Convex(\fH)) = \bigO(\fat_{c\alpha}(\fH)/\alpha^2)$ for some universal constant $c > 0$.
Our strategy to achieve that builds on \citet[Lemma~9]{LarsenR22}.
We believe this result could be of independent interest.

Applying \cref{lem:marginbound} with $\gamma = \theta/16$, $\gamma' = \theta/8$ and any distribution $\dQ$,
we have that with probability at least $1 - \delta/10$ over $\rsS' \sim \dQ^{m}$
it holds simultaneously for all $v \in \Convex(\fH)$ that
\begin{align}
  \Loss_{\dQ}^{\theta/16}(v)
  = \Loss_{\rsS'}^{\theta/8}(v) + \bigOtilde\Par[\bbig]{\sqrt{\Loss_{\rsS'}^{\theta/8}(v) \cdot \frac{d}{m \theta^2}} + \frac{d}{m \theta^2}}
  ,
\end{align}
with $d = \fat_{\hat{c}\theta/16}(\fH)$.
Now consider the distribution $\dD_{\fs}$ associated with $\rsS_{\fs}$, i.e., obtained by re-labeling samples from $\dD$ according to $\fs$.
Setting $\dQ = \dD_{\fs}$, $\rsS' = \rsS_{1, \fs}$,
and using a union bound to have the event associated with \cref{eq:overview:0empiricalmargin} also hold,
we obtain that with probability at least $1 - 2\delta/10$ over $\rsS_1$ and the randomness used in \cref{alg:adaboost},
there exists $\rv_g \in \rfB_1$ such that
\begin{align}
  \Loss_{\dD_{\fs}}^{\theta/16}(\rv_g)
  = \bigOtilde\Par[\bbig]{\frac{d}{m \theta^2}}
  .
  \label{eq:overview:LDf16}
\end{align}

Still,
\cref{eq:overview:LDf16} bounds the loss of $\rv_g$ relative to distribution $\dD_{\fs}$
while we are interested in the loss on the data distribution $\dD$.
We convert between these by noticing that given $(x, y) \in \InputSpace \times \Set{\pm 1}$,
if $y \cdot \rv_g(x) \le \theta/16$,
then
either $\fs(x) \neq y$, so that $\fs(x) \cdot y \le \theta/16$;
or $\fs(x) = y$, so that $\fs(x) \cdot \rv_g(x) \le \theta/16$.
Thus,
\begin{align}
  \Loss_{\dD}^{\theta/16}(\rv_g)
  &\le \err_{\dD}(\fs) + \Loss_{\dD_\fs}^{\theta/16}(\rv_g)
  = \err_{\dD}(\fs) + \bigOtilde\Par[\bbig]{\frac{d}{m \theta^2}}
  ,
  \label{eq:overview:step1:final}
\end{align}
where the last equality follows from \cref{eq:overview:LDf16}
which holds with probability at least $1 - 2\delta/10$.

\subsection{Filtering the hypotheses}

For this step, we use the second \emph{independent} portion of the training data, $\rsS_2 \sim \dD^{m/3}$.
We saw that with high probability $\rfB_1$ contains a good hypothesis $\rv_g$, so our goal is to distinguish $\rv_g$---or some hypothesis at least as good---from the other hypotheses in $\rfB_1$.
A naïve way to do so would be to use $\rsS_2$ as a validation set, leveraging its independency.
To succeed, we would need to ensure that all hypotheses in $\rfB_1$ have generalization error close to their error on $\rsS_2$.
Alas, since $\abs{\rfB_1} = 2^{m/3}$, a union bound over the entire bag would lead to an extra term of order $\bigTheta(\ln(2^m/\delta)/m) = \bigTheta(1)$ in the final bound, making it vacuous.
Thus, to proceed we must first reduce the size of $\rfB_1$ while keeping at least one good hypothesis.

Given \cref{eq:overview:step1:final},
we know that $\rfB_1$ contains, with high probability, a hypothesis with relatively low generalization loss relative to margin $\theta/16$.
However, crucially, we do not assume knowledge of $\gamma$ or $\eps_0$, and, thus, of $\theta = \gamma - \eps_0$.
We overcome this by considering a range of possible values for the margin and storing the minimizer of the empirical loss relative to each value.
More precisely,
we consider each $\gamma' \in \Set{1, 1/2, 1/4, \ldots, 1/\sqrt{m}}$
where we can dismiss smaller margin values since the upper bound in \cref{thm:main} becomes vacuous---greater than $1$---for $\gamma' - \eps_0 < 1/\sqrt{m}$.
Then,
in the \textbf{for} loop starting at \cref{line:main:B2for} of \cref{alg:main},
we let
\begin{align}
  \rfB_2
  &= \Set[\bbig]{\argmin_{\rv \in \rfB_1} \Loss_{\rsS_2}^{\gamma'}(\rv) : \gamma' \in \Set{1, 1/2, 1/4, \ldots, 1/2^{\Ceil{\log_2\sqrt{m}}}}}
  ,
\end{align}
so that $\abs{\rfB_2} = \bigO(\ln m)$
and for some $\gamma_g' \in (\theta/32, \theta/16]$ we have $\rv_g' \coloneqq \argmin_{\rv \in \rfB_1} \Loss_{\rsS_2}^{\gamma_g'}(\rv) \in \rfB_2$.

From here, we follow a chain of inequalities between different losses.
We start by applying \cref{lem:marginbound} once more,
this time with sample $\rsS_2$ and margin levels $\gamma = 0$ and $\gamma' = \gamma_g'$.
We obtain that,
with probability at least $1 - \delta/10$ over $\rsS_2$ it holds for $\rv_g'$, in particular, that
\begin{align}
  \Loss_{\dD}(\rv_g')
  = \Loss_{\dD}^{0}(\rv_g')
  = \Loss_{\rsS_2}^{\gamma'}(\rv_g')
  + \bigOtilde\Par[\bbbbig]{\sqrt{\Loss_{\rsS_2}^{\gamma'}(\rv_g') \cdot \frac{d'}{(\gamma')^{2}m}} + \frac{d'}{(\gamma')^{2}m}}
  ,
  \label{eq:overview:step2:beforechain}
\end{align}
where $d' = \fat_{\hat{c}\gamma_{g}'}(\fH)$.
Recalling that $\rv_g'$ is the minimizer of $\Loss_{\rsS_2}^{\gamma_g'}$ over $\rfB_1$ and that $\rv_g$ belongs to $\rfB_1$,
we must, then, have that
\begin{math}
  \Loss_{\rsS_2}^{\gamma_g'}(\rv_g')
  \le \Loss_{\rsS_2}^{\gamma_g'}(\rv_g)
  .
\end{math}
Now we leverage that $\gamma' \in (\theta/32, \theta/16]$:
on the one hand,
$\gamma_{g}' > \theta/32$ so,
as the fat-shattering dimension is decreasing in its level parameter,
$d' = \fat_{\hat{c}\gamma_g'}(\fH) \le \fat_{\hat{c}\theta/32}(\fH) \eqqcolon \hat{d}$;
on the other hand,
$\gamma'_g \le \theta/16$,
thus $\Loss_{\rsS_{2}}^{\gamma'_g}(\rv_g) \le \Loss_{\rsS_{2}}^{\theta/16}(\rv_g)$.
Finally, we use a standard concentration bound (\cref{lem:theben}) to ensure that,
with probability at least $1 - \delta/10$ over $\rsS_{2}$,
it holds that
\begin{math}
  \Loss_{\rsS_{2}}^{\theta/16}(\rv_g)
  = \Loss_{\dD}^{\theta/16}(\rv_g)
    + \bigOtilde\Par[\big]{\sqrt{\Loss_{\dD}^{\theta/16}(\rv_g) \cdot \ln(1/\delta) / m} + \ln(1/\delta) / m}
  ,
\end{math}
and, thus, by the chain of inequalities above, that
\begin{align}
  \Loss_{\rsS_{2}}^{\theta/16}(\rv_g')
  &= \Loss_{\dD}^{\theta/16}(\rv_g)
    + \bigOtilde\Par[\bbbbig]{\sqrt{\frac{\Loss_{\dD}^{\theta/16}(\rv_g)}{m} \cdot \ln\frac{1}{\delta}} + \frac{1}{m} \ln\frac{1}{\delta}}
  .
  \label{eq:overview:step2:afterchain}
\end{align}

Combining Equations~\ref{eq:overview:step1:final}, \ref{eq:overview:step2:beforechain} and \ref{eq:overview:step2:afterchain},
and taking a union bound over the respective events required to hold simultaneously,
the appropriate calculations yield the following lemma, which summarizes the progress made so far.

\begin{lemma}\label{lem:secondstep}
  There exist universal constants $C \ge 1$ and $\hat{c} > 0$ for which the following holds.
  After the \textbf{for} loop starting at \cref{line:main:B2for} of \cref{alg:main} it holds
  that $\abs{\rfB_2} \le \log_2 m$
  and
  with probability at least $1 - \delta/2$ over $\rsS_{1}$, $\rsS_{2}$ and randomness used in \cref{alg:adaboost}
  that $\rfB_{2}$ contains a voting classifier $v$ such that
  \begin{gather}
    \Loss_{\dD}(v)
    \le \err_{\dD}(\fs) + \sqrt{C\err_{\dD}(\fs) \cdot \beta} + C \cdot \beta
    ,
  \end{gather}
  where
  \begin{math}
    \beta
    = \frac{\hat{d}}{\theta^{2} m} \cdot \Ln^{3/2}\Par[\bbig]{\frac{\theta^{2} m}{\hat{d}}} + \ln\frac{10}{\delta}
  \end{math}
  with $\hat{d} = \fat_{\hat{c}\theta/32}(\fH)$.
\end{lemma}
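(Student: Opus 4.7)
The plan is to formalize the three-step outline from \cref{sec:overview}: apply \cref{lem:adaboostvari} to the re-labelled sample $\rsS_{1, \fs}$ to exhibit a good hypothesis $v_g \in \rfB_1$; apply \cref{lem:marginbound} twice (once on $\rsS_1$ under $\dD_{\fs}$ to bound the population margin loss of $v_g$, and once on $\rsS_2$ under $\dD$ to convert the empirical margin loss of the dyadic minimizer into its population error); and use a Bernstein-style concentration bound (\cref{lem:theben}) to transfer the population margin loss of $v_g$ back to its empirical counterpart on $\rsS_2$. The cardinality bound $|\rfB_2| \le \log_2 m$ is immediate, since the \textbf{for} loop at \cref{line:main:B2for} iterates over $\lceil \log_2 \sqrt{m} \rceil + 1 \le \log_2 m$ values of $\gamma'$ and appends one classifier per iteration.

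For the probabilistic part, set $\theta = \gamma - \eps_0$. Invoking \cref{lem:adaboostvari} with $\gamma' = \theta$ on the re-labelling $\rsS_{1, \fs}$ (which is realizable by $\fs$) and using $T = \lceil 32 m \ln(em) \rceil$ rounds ensures, with probability at least $1 - \delta/10$ over the internal randomness of the matching call to \cref{alg:adaboost}, a voting classifier $v_g \in \rfB_1$ with $\Loss_{\rsS_{1, \fs}}^{\theta/8}(v_g) = 0$. Applying \cref{lem:marginbound} to $\dD_{\fs}$ with margin levels $\gamma = \theta/16$ and $\gamma' = \theta/8$ then yields, with probability at least $1 - \delta/10$ over $\rsS_1$, a uniform bound $\Loss_{\dD_{\fs}}^{\theta/16}(v) = \bigOtilde(\hat d/(m\theta^2))$ for every $v \in \Convex(\fH)$ with zero empirical $\theta/8$-margin loss on $\rsS_{1, \fs}$, using that $\fat_{\hat{c} \theta/16}(\fH) \le \fat_{\hat{c} \theta/32}(\fH) = \hat d$ by monotonicity of the fat-shattering dimension. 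The pointwise implication that $y v_g(x) \le \theta/16$ forces either $\fs(x) \ne y$ or $\fs(x) v_g(x) \le \theta/16$ then yields $\Loss_{\dD}^{\theta/16}(v_g) \le \err_{\dD}(\fs) + \bigOtilde(\hat d/(m\theta^2))$.

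For the filter step, fix $\gamma_g'$ in the dyadic grid with $\gamma_g' \in (\theta/32, \theta/16]$ and let $v_g' = \argmin_{v \in \rfB_1} \Loss_{\rsS_2}^{\gamma_g'}(v) \in \rfB_2$. Three inequalities are combined. (i) Since $v_g \in \rfB_1$ and the margin loss is non-decreasing in the margin parameter, $\Loss_{\rsS_2}^{\gamma_g'}(v_g') \le \Loss_{\rsS_2}^{\gamma_g'}(v_g) \le \Loss_{\rsS_2}^{\theta/16}(v_g)$. (ii) By \cref{lem:theben} (a Bernstein-style bound for a single fixed classifier), with probability at least $1 - \delta/10$ over $\rsS_2$, $\Loss_{\rsS_2}^{\theta/16}(v_g)$ is at most $\Loss_{\dD}^{\theta/16}(v_g) + \bigOtilde(\sqrt{\Loss_{\dD}^{\theta/16}(v_g) \ln(1/\delta)/m} + \ln(1/\delta)/m)$. (iii) By \cref{lem:marginbound} applied with $\gamma = 0$, $\gamma' = \gamma_g'$ to the independent sample $\rsS_2$, with probability at least $1 - \delta/10$, $\Loss_\dD(v_g')$ is controlled by $\Loss_{\rsS_2}^{\gamma_g'}(v_g')$ with slack governed by $\fat_{\hat{c} \gamma_g'}(\fH) \le \hat d$ and $1/(\gamma_g')^2 = O(1/\theta^2)$, again thanks to $\gamma_g' > \theta/32$.

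Chaining these inequalities replaces $\Loss_{\rsS_2}^{\gamma_g'}(v_g')$ in (iii) by the bound from (i)+(ii), which in turn invokes the bound on $\Loss_{\dD}^{\theta/16}(v_g)$ from the previous step. The main obstacle is the algebraic bookkeeping: each step has the shape $a \le b + C(\sqrt{a \beta} + \beta)$ where $b$ itself satisfies an analogous inequality, so one must repeatedly apply $\sqrt{xy} \le x/2 + y/2$ and $\sqrt{x + y} \le \sqrt{x} + \sqrt{y}$ to collapse the nested square roots into the advertised form $\err_\dD(\fs) + \sqrt{C \err_\dD(\fs) \beta} + C \beta$, absorbing logarithmic and constant factors into $\beta$. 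A union bound over the four probabilistic events (one from \cref{alg:adaboost}, two from \cref{lem:marginbound}, one from \cref{lem:theben}) gives total failure probability at most $4 \delta / 10 \le \delta/2$.
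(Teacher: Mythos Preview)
Your proposal is correct and follows essentially the same approach as the paper: the paper likewise combines \cref{lem:adaboostvari} and \cref{lem:marginbound} on $\rsS_{1,\fs}$ to control $\Loss_{\dD_{\fs}}^{\theta/16}(v_g)$, then uses \cref{lem:theben} on $\rsS_2$ together with a second application of \cref{lem:marginbound} (with margin levels $0$ and $\gamma_g'$) and the minimizer/monotonicity chain to bound $\Loss_\dD(v_g')$, unioning over the same four events for total failure probability $4\delta/10 \le \delta/2$. The algebraic collapse of nested square roots via $\sqrt{ab}\le a+b$ and $\sqrt{a+b}\le\sqrt a+\sqrt b$ that you anticipate is exactly what the paper carries out in detail.
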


\subsection{Extracting the final classifier}

For the final step, we use the third and last independent portion of the training data, $\rsS_3 \sim \dD^{m/3}$, as a validation set.
We saw previously that naïvely using a validation set to extract a good hypothesis from $\rfB_1$ would not work due to the exponential size of $\rfB_1$.
In the previous step, we overcame this limitation by reducing the number of hypotheses to $\bigO(\ln m)$ while, with high probability, preserving a hypothesis with good generalization properties.
Therefore, we can now use $\rsS_3$ to select the best hypothesis from $\rfB_2$.

Given $v \in \rfB_2$,
we can use standard concentration results (\cref{lem:theben}) to bound the probability that the empirical loss of $v$ on $\rsS_3$ is close to its true loss $\Loss_{\dD}(v)$.
By the union bound,
with probability at least $1 - \delta/2$ over $\rsS_3$,
\begin{align}
  \Loss_{\dD}(v)
  &= \Loss_{\rsS_3}(v) + \bigOtilde\Par[\bbig]{\sqrt{\frac{\Loss_{\rsS_3}(v)}{m} \cdot \ln\frac{\abs{\rfB_2}}{\delta}} + \frac{1}{m} \cdot \ln\frac{\abs{\rfB_2}}{\delta}}
\shortintertext{and}
  \Loss_{\rsS_3}(v)
  &= \Loss_{\dD}(v) + \bigOtilde\Par[\bbig]{\sqrt{\frac{\Loss_{\dD}(v)}{m} \cdot \ln\frac{\abs{\rfB_2}}{\delta}} + \frac{1}{m} \cdot \ln\frac{\abs{\rfB_2}}{\delta}}
\end{align}
for all $v \in \rfB_2$, simultaneously.
Finally, under the event associated with \cref{lem:secondstep},
$\rfB_2$ contains a hypothesis $\rv_g'$ with low generalization error.
Leveraging this and the equalities above, we can bound the generalization error of $\argmin_{v \in \rfB_2} \Loss_{\rsS_3}(v)$,
which is the final classifier output by \cref{alg:main}.

The complete argument proves our main result, stated here in terms of errors rather than correlations.
\begin{theorem}\label{thm:main-errorversion}
  There exist universal constants $C, c > 0$ such that the following holds.
  Let $\Weaklearner$ be a $(\gamma, \eps_0, \delta_0, m_0, \fF, \fH)$ agnostic weak learner.
  If $\gamma > \eps_0$ and $\delta_0 < 1$,
  then,
  for all $\delta \in (0, 1)$,
  $m \in \N$,
  and $\dD \in \DistribOver(\InputSpace \times \Set{\pm 1})$,
  given training sequence $\rsS \sim \dD^m$,
  we have that \cref{alg:main} on inputs $\Par*{\rsS, \Weaklearner, \delta, \delta_0, m_0}$ returns,
  with probability at least $1 - \delta$ over $\rsS$ and the internal randomness of the algorithm,
  the output $\rv$ of \cref{alg:main} satisfies that
  \begin{gather}
    \ls_{\cD}(\rv)
    \le \err_{\cD}(\fs) + \sqrt{C \err_{\cD}(\fs) \cdot \beta} + C \cdot \beta
    ,
  \end{gather}
  where
  \begin{math}
    \beta
    = \frac{\hat{d}}{(\gamma-\eps_{0})^{2} m} \cdot \Ln^{3/2}\Par[\bbig]{\frac{(\gamma-\eps_{0})^{2} m}{\hat{d}}} + \frac{1}{m} \ln\frac{\ln m}{\delta}
  \end{math}
  with $\hat{d} = \fat_{c(\gamma-\eps_{0})/32}(\fH)$.
\end{theorem}
To recover \cref{thm:main}, the bound in terms of correlation, we use that $ \ls_{\cD}(\rv)\geq \err_\cD(\sign(v))$, since $\sign(0)=1$, and for the binary hypotheses, $\fs$ and $ \sign(v)$ we have that $\err_\cD(\sign(v))=(1-\cor(\sign(v)))/2$ and $\err_\cD(\fs)=(1-\cor(\fs))/2$.

  \section{Conclusion and Future Work}\label{sec:conclusion}

In this work, we provided a novel algorithm for agnostic weak-to-strong learning and proved that it achieves nearly optimal sample complexity under fairly general assumptions.
Notoriously,
while previous works set varying conditions on the relationships between parameters,
\cref{thm:main} recovers the iconic generality of classic boosting by allowing for any non-trivial setting.
Furthermore, our algorithm bound incorporates the error of the best hypotheses in the reference class, interpolating between the agnostic boosting setting and the realizable boosting setting, which, to the best of our knowledge, is not the case for any previous bounds.
This also implies an even better sample complexity when $\cor(\fs)$ is large.

As for future work directions,
providing efficient algorithms with sample complexity close to the error rates of \cref{thm:lowerbound} is the most natural next step.
Besides that, we conjecture that the logarithmic factors in our bounds could be removed, as in the realizable case (e.g., \citet{LarsenR22}).
Finally, given the line of works stemming from \citet{KalaiK09}, it is also pressing to further improve the sample complexity of agnostic boosting algorithms based on sample re-labelings.

  \begin{ack}
    While this work was carried out, Arthur da Cunha, Mikael M{\o}ller H{\o}gsgaard, and Yuxin Sun were supported by the European Union (ERC, TUCLA, 101125203).
Views and opinions expressed are however those of the author(s) only and do not necessarily reflect those of the European Union or the European Research Council.
Neither the European Union nor the granting authority can be held responsible for them.
Arthur da Cunha and Mikael M{\o}ller H{\o}gsgaard were also supported by Independent Research Fund Denmark (DFF) Sapere Aude Research Leader grant No.\ 9064-00068B.

Andrea Paudice is supported by Novo Nordisk Fonden Start Package Grant No.\ NNF24OC0094365 (Actionable Performance Guarantees in Machine Learning).

  \end{ack}

  \bibliography{ref}

  \appendix

  \section{Notes on different concepts of agnostic weak learning}\label{app:otherdefinitions}

\subsection{Distribution-Free Boosting}
The notion of a weak agnostic learner was introduced by \citet{Ben-DavidLM01}.
The following is a small extension of their definition, where we added the possibility of failure to the weak learner and made explicit the base hypothesis class $\cH$.

\begin{definition}[Agnostic Weak Learner of \citet{Ben-DavidLM01}]
  Given $\eps_0, \delta_0 \in [0, 1]$,
  a learning algorithm $\Weaklearner\colon (\InputSpace \times \Set{\pm 1})^* \to \Set{\pm 1}^{\cX}$ is a \emph{$(\eps_0, \delta_0)$--agnostic weak learner}
  with sample complexity $m_0 \in \N$
  with respect to reference hypothesis class $\fF \subseteq \{\pm 1\}^{\cX}$ and base hypothesis class $\fH \subseteq \{\pm 1\}^{\cX}$
  iff:
  For all $\dD \in \DistribOver(\InputSpace \times \Set{\pm 1})$,
  given sample $\rsS \sim \dD^{m_{0}}$,
  with probability at least $1 - \delta_0$ over the randomness of $\rsS$,
  the hypothesis $\rv = \Weaklearner(\rsS) \in \fH$ satisfies that
  \begin{align}
      \err_{\dD}(\rv)
      \le \inf_{f \in \fF} \err_{\dD}(f) + \eps_0
      ,
  \end{align}
  or, equivalently,
  \begin{align}
    \Corr_{\dD}(\rv)
    &=1-2\err_{\dD}(\rv)
    \\&\ge 1-2\inf_{f\in \fF}\err_{\dD}(f)-2\eps_{0}
    \\&= 1+2\sup_{f\in \fF}(-\err_{\dD}(f))-2\eps_{0}
    \\&= \sup_{f \in \fF} \Corr_{\dD}(f)-2\eps_{0}
    .
  \end{align}
\end{definition}

This definition is quite close to that of \citet{neuripsAgBoosting}, and to ours.
This becomes evident if one employs correlations instead of $\err$, weakens the required correlation between $\rv$ and $\fF$, and lets the weak learner output hypotheses with outputs in $[-1,1]$.\footnote{It is not fully clear whether weak learners in \citet{neuripsAgBoosting} are binary or real-valued.}

\begin{definition}[Agnostic Weak Learner used in this article]
  Let $\gamma, \eps_0, \delta_0 \in [0, 1]$,
  $m_0 \in \N$,
  $\fF \subseteq \Set{\pm 1}^{\InputSpace}$,
  and $\fH \subseteq [-1, 1]^{\InputSpace}$.
  A learning algorithm $\Weaklearner\colon (\InputSpace \times \Set{\pm 1})^* \to \fH$ is an agnostic weak learner with
  advantage parameters $\gamma$ and $\eps_0$,
  failure probability $\delta_0$,
  sample complexity $m_0$,
  reference hypothesis class $\fF$,
  and base hypothesis class $\fH$
  iff:
  For any distribution $\fD$ over $\InputSpace \times \Set{\pm 1}$,
  given sample $\rsS \sim \fD^{m_0}$,
  with probability at least $1 - \delta_0$ over the randomness of $\rsS$,
  the hypothesis $\rw = \Weaklearner(\rsS)$ satisfies that
  \begin{align}
    \Corr_{\fD}(\rw)
    \ge \gamma \cdot \sup_{f \in \fF} \Corr_{\fD}(f) - \eps_0
    .
  \end{align}
\end{definition}
If $\cH$ is binary-valued $\cH \subset \{\pm 1\}^{\cX}$, the above is equivalent to
\begin{align}
  \err_{\dD}(\rv)
  &=\frac{1}{2}(1 - \Corr_{\cD}(\rv))
  \\&\le \frac{1}{2}(1 - \gamma\sup_{f\in \fF}\Corr_{\cD}(f)+\eps_{0})
  \\&= \frac{1}{2} - \frac{\gamma}{2}\sup_{f\in\fF}\Corr_{\cD}(f)+\frac{\eps_{0}}{2}
  \\&= \frac{1}{2} + \gamma\inf_{f\in\fF}\err_{\cD}(f)-\frac{\gamma}{2}+\frac{\eps_{0}}{2}
  .
\end{align}
This observation shows that the definition captures that of \citet{Ben-DavidLM01} when $\gamma = 1$ and the base hypothesis class is binary.

\subsection{Distribution-Dependent Boosting}
\subsubsection{\citet{KalaiK09}}
\citet{KalaiK09} use the following definition of a weak learner, where we have made the sample complexity a specific parameter.
This is an alternative to directly writing ``over its random input'' as in \citet[Definition~1]{KalaiK09}.
\begin{definition}[Agnostic Weak Learner \citep{KalaiK09}]
  Given $\gamma, \eps_0, \delta_{0}\in (0, 1)$, and $\dD \in \DistribOver(\InputSpace)$,
  a learning algorithm $\Weaklearner\colon (\InputSpace \times [-1, 1])^* \to [-1, 1]^{\cX}$ is a \emph{$(\gamma, \eps_0, \delta_{0},\dD)$--agnostic weak learner}
  with sample complexity $m_0 \in \N$
  with respect to reference hypothesis class $\fF$ and distribution $\cD$
  iff:
  For any re-labeling function $g\in [-1,1]^{\cX}$, given sample $\rsS \sim \dD^m$ with $m \ge m_0$,
  with probability at least $1 - \delta_0$ over the randomness of $\rsS$
  the hypothesis $\rv = \Weaklearner(\rsS) \in \fH$ satisfies that
  \begin{align}
    \Ev_{\rx \sim \cD}{g(\rx)\rv(\rx)}
    \ge \gamma \cdot \sup_{f \in \fF} \Ev_{\rx \sim \cD}{g(\rx)f(\rx)} - \eps_0
    .
  \end{align}
\end{definition}
The authors mention that one can think of $m_{0}$ as being of the order of $1/\eps_{0}^{2}$.

The above is a distribution-specific notion of weak learning in that the marginal over $\cX$ is fixed, while the conditional distribution over the labels can vary.
Theorem~1 of \citet{KalaiK09} states that there exists an algorithm $\cA$ which, given access to a $(\gamma,\eps_{0},\delta,\dD)$--agnostic weak learner $\cW$, produces, with probability at least $1-4\delta T$, a hypothesis $v\in \{\pm 1\}^{\cX}$ such that $\Ev_{(\rx,\ry)\sim\dD}{v(\rx)\ry} \ge \sup_{f \in \fF} \Ev_{(\rx,\ry) \sim \dD}{f(\rx)y} - \eps_0/\gamma-\eps$.
Doing so requires $T = 29/(\gamma^{2}\eps^{2})$ calls to the weak learner, each requiring $m_{0}$ samples.
Hence, the total sample complexity is $\bigO(m_{0}/(\gamma^{2}\eps^{2}))$, which for $m_{0} = 1/\eps_{0}^{2}$ is of order $\bigO(1/(\gamma^{2}\eps^{2}\eps_{0}^2))$.

\subsubsection{\citet{Feldman10}}
Let $\cD' \in \DistribOver(\cX)$, and $\phi \in[-1,1]^{\cX}$.
\citet{Feldman10} defines a distribution $A = (\cD',\phi)$ over examples $\cX \times \{\pm 1\}$, in the following way:
First, a point $x \in \cX$ is drawn according to $\cD'$,
then, $x$ is labeled $1$ with probability $(\phi(x)+1)/2$, and labeled $-1$ otherwise.
With this notation, \citet{Feldman10} employs the following definition:
For $0 < \gamma \le \alpha \le 1/2$ and a distribution $A$, an algorithm $\cW$ is an $(\alpha,\gamma)$--weak agnostic learner iff it produces a hypothesis $h \in \cH \subseteq \{\pm 1\}^{\cX}$ such that $\Prob_{(\rx,\ry) \sim A}{h(\rx) \ne \ry} \le 1/2-\gamma$ whenever $\inf_{f\in\cF}\Prob_{(\rx,\ry) \sim A}{f(\rx) \ne \ry} \leq 1/2-\alpha$, where $\cF$ is a reference hypothesis class.
The algorithm proposed by \citet{Feldman10} works by re-labeling, so it only requires a distribution-specific weak learning notion.

The weak learning notion used by \citet{KalaiK09} implies the definition of \citet{Feldman10} if, in the definition of \citet{KalaiK09}, we have $\cH$ and $\cF$ consisting of binary-valued hypotheses, $\gamma > \eps_{0}$ (the non-trivial case), and $\delta_{0}=0$.
Specifically, for any $\alpha \in [0,1/2]$, any $(\gamma, \eps_{0}, 0, \cD)$--agnostic weak learner in the sense of \citet{KalaiK09} is a $(\alpha, \gamma\alpha - \eps_{0}/2)$--agnostic learner in the sense of \citet{Feldman10}.
Notice that such a learner is only better than random guessing when $\gamma\alpha - \eps_{0}/2> 0$, that is, when $\alpha > \eps_{0}/(2\gamma)$.
The reduction between definitions come from noticing that, for distribution $A=(\cD,g)$ defined as in \citet{Feldman10} but using function $g$ from the definition of \citet{KalaiK09}, and for any $h\in \{\pm 1\}^{\cX}$, we have that
\begin{align}
 \Ev_{(\rx,\ry)\sim (\cD,g)}{h(\rx)\ry}
  &=\Ev_{\rx\sim \cD}*{\left(\frac{g(\rx)+1}{2}\right)h(\rx)-\left(1-\frac{g(\rx)+1}{2}\right)h(\rx)}
  \\&=\Ev_{\rx\sim \cD}{g(\rx)h(\rx)}
  .
\end{align}
Hence, as $\Ev_{(\rx,\ry) \sim (\cD,g)}{h(\rx)\ry} = \cor_{A}(h) = 1-2\err_{A}(h)$ for binary-valued $h$, the condition of \citet{KalaiK09} implies that
\begin{gather}
  \Ev_{\rx \sim \cD}{g(\rx)\rv(\rx)}
  \ge \gamma \cdot \sup_{f \in \fF} \Ev_{\rx \sim \cD}{g(\rx)f(\rx)} - \eps_0
  ,
\shortintertext{so}
  1 - 2\err_{A}(\rv)
  \ge \gamma \cdot (1-2\inf_{f\in\cF}\err_{A}(f))-\eps_{0}
  ,
\shortintertext{thus}
  \err_{A}(\rv)
  \le 1/2-\gamma/2+\gamma\inf_{f\in\cF}\err_{A}(f)+\eps_{0}/2
  .
\end{gather}
Hence, if $\inf_{f\in\cF}\err_{A}(f) \le 1/2-\alpha$, then the weak learner returns a hypothesis $\rv$ such that $\err_{A}(\rv) \le 1/2 + \eps_{0}/2 - \gamma\alpha$, as claimed.

For any $0<\eps<1$, algorithm $\cA$ from \citet[Theorem 3.1]{Feldman10}, given access to a $(\alpha,\gamma)$--weak learner $\cW$, produces a classifier $v \in \{\pm 1\}^{\cX}$ (using at most $1/\gamma^{2}$ queries to $\cW$) such that
\begin{align}
  \err_{\cD}(v)
  = \inf_{f\in\cF} \err_{\cD}(f) + 2\alpha + \eps
  .
\end{align}
That is, to obtain a strong learner (i.e., error at most $2\eps$, and then re-scaling $\eps$ to $\eps/2$), one must have $\alpha = \eps$.

As mentioned, the execution of $\cA$ queries $\cW$ at most $1/\gamma^2$ times to get an output hypothesis $h$ \citep[proof of Theorem~3.1]{Feldman10}.
After each such query, $\cA$ checks whether $\Prob_{(\rx,\ry) \sim A_{t}}{h(\rx) \ne \ry}\le 1/2-\gamma$, where $A_{t}$ is the distribution at step $t$.
To the best of our knowledge, this check requires $\Omega(1/\gamma^2)$ examples from $A_{t}$ since the threshold could be close to $1/2$.
Furthermore, $\cA$ also preforms at most $1/\eps^{2}$ times a step called ``balancing''.
Each balancing requires checking whether $\Prob_{(\rx,\ry) \sim A_{t}}{g(\rx) \ne \ry} \le 1/2-\eps/2$ for a hypothesis $g$, so, by the same reasoning, it should require $\Omega(1/\eps^{2})$ samples.
Thus, in total the algorithm needs $\bigO(1/\gamma^{4})$ samples for checking the error of the weak learners output, and $\bigO(1/\eps^{4})$ samples for checking the error of the hypothesis returned by the balancing step, yielding a sample complexity of $\bigO(1/\eps^{4}+1/\gamma^4)$.
Here, we do not take into account the samples needed for the weak learner calls, which is $m_{0}/\gamma^{2}$ if the sample complexity of the weak learner is $m_{0}$.

Furthermore, if the algorithm by \citet{Feldman10} is given a $(\gamma,\eps_{0},0,\cD)$--agnostic weak learner in the sense of \citet{KalaiK09}, then, to get a strong learner with error at most $2\eps$ one has to set $\alpha = \eps$.
So, the weak learner only gives a non-trivial guarantee when $\eps > \eps_{0}/(2\gamma)$, implying that one has to set $\eps_{0} = O(\eps\gamma)$.
For $\eps_{0} = \Theta(\eps\gamma)$, this gives a $(\eps,\gamma'=\Theta(\eps\gamma))$--agnostic weak learner in the setting of \citet{Feldman10}, which in turn implies that the sample complexity of the weak learner is $m_{0} = 1/\eps_{0}^{2} = 1/(4\eps^{2}\gamma^{2})$ and that the total sample complexity becomes $\bigO(1/\eps^{4}+1/(\gamma')^{4}+m_{0}/\gamma'^{2}) = \bigO(1/(\gamma^{4}\eps^{4}))$.

\subsubsection{\citet{BrukhimCHM20}}
\citet{BrukhimCHM20} uses the following empirical notion of a agnostic weak learning algorithm.
\begin{definition}[Agnostic Weak Learner of {\citet[Defintion~6]{BrukhimCHM20}}]
Let $\fF\subseteq\{\pm1\}^{\fX}$ and let $X=(x_1,\ldots,x_m)\in\fX^m$ denote an unlabeled sample.
A learning algorithm $\fW$ is
a \emph{$(\gamma, \eps_0,m_{0})$--agnostic weak learner} for $\fF$ with respect to $X$ if for any labels $Y = (y_1,\ldots,y_m)\in\{\pm1\}^m$,
\begin{align}
  \Ev_{\rsS'}*{\frac{1}{m}\sum_{i=1}^m y_i\fW(\rsS')(x_i)}
  &\ge \gamma \sup_{f\in\fF} \left[\frac{1}{m}\sum_{i=1}^m y_if(x_i)\right]-\epsilon_0
  ,
\end{align}
where $\rsS' = ((\rx'_1, \ry'_1),\ldots,(\rx'_{m_0},\ry'_{m_0}))$ is an independent sample of size $m_0$ drawn from the uniform distribution over $S=((x_1,y_1),\ldots,(x_m,y_m))$.
\end{definition}

The following result states that the correlation of the output hypothesis is competitive with the best hypothesis in the reference class $\fF$.
\begin{theorem}[{\citet[Theorem~7]{BrukhimCHM20}}]
There exists a boosting algorithm $\fA$ that
given a $(\gamma,\epsilon_0,m_0)$--agnostic weak learner $\fW$ for $\fF \subseteq \{\pm 1\}^{\cX}$
and a sample $S=((x_1,y_1),\ldots,(x_m,y_m))$ as input,
runs for $T$ rounds and returns a hypothesis $\bar{f}\in\fF$ satisfying that
\begin{align}
  \Ev*{\frac{1}{m}\sum_{i=1}^m y_i\bar{f}(x_i)}
  \ge \sup_{f\in\fF}\Ev*{\frac{1}{m}\sum_{i=1}^m y_if(x_i)} - \left(\frac{\epsilon_0}{\gamma}+\bigO\left(\frac{1}{\gamma\sqrt{T}}\right)\right)
  ,
\end{align}
where the expectation is taken over the randomness of $\fA$ and $\fW$, and the random samples given to $\fW$.\footnote{We concluded the expectation to be over those sources of randomness from the proof of the theorem.
}
\end{theorem}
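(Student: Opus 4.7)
The plan is to realize $\fA$ as a multiplicative-weights boosting scheme akin to \methodname{AdaBoost}, adapted to the agnostic correlation setting. Initialize $D_1$ as the uniform distribution over the $m$ training indices. For each round $t = 1, \ldots, T$, draw a size-$m_0$ bootstrap sample $S'_t$ from $S$ according to $D_t$---so that the uniform distribution on $S'_t$ equals $D_t$ in expectation, matching the ``any relabeling'' input structure of the weak-learner definition---then feed $S'_t$ to $\fW$ to obtain $f_t \in \fF$, and update $D_{t+1}(i) \propto D_t(i)\exp(-\eta y_i f_t(x_i))$ with step size $\eta = \Theta(1/(\gamma\sqrt{T}))$. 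Output $\bar f = \frac{1}{T}\sum_{t=1}^{T} f_t$. Because the weak-learner guarantee applies to any relabeling (equivalently, any reweighting implemented via bootstrap subsampling), each round yields, in expectation over $\fW$ and the subsampling,
\begin{align*}
\E[V(D_t, f_t) \mid D_t] \ge \gamma \sup_{f \in \fF} V(D_t, f) - \eps_0,
\end{align*}
where $V(D, f) := \sum_i D(i) y_i f(x_i)$.

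Next I would carry out the standard multiplicative-weights potential argument. Letting $\Phi_t := \sum_i \exp\bigl(-\eta\sum_{s<t} y_i f_s(x_i)\bigr)$ with $\Phi_0 = m$, the quadratic approximation $e^{-\eta x} \le 1 - \eta x + \eta^2/2$ (valid for $|\eta x| \le 1$, which holds since $|f_t| \le 1$) telescopes into the bound $\Phi_T \le m \exp\bigl(-\eta \sum_t V(D_t, f_t) + O(\eta^2 T)\bigr)$. Jensen's inequality applied to the convex exponential simultaneously gives $\Phi_T \ge m \exp(-\eta T V(U, \bar f))$, where $U$ denotes the uniform distribution over $S$. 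Comparing the two bounds in log-space yields the regret-style inequality
\begin{align*}
V(U, \bar f) \ge \frac{1}{T}\sum_{t=1}^{T} V(D_t, f_t) - O(\eta).
\end{align*}

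The main obstacle, and the step requiring the most care, is closing the gap between a bound in terms of $V(\bar D, f^*)$---with $\bar D = \frac{1}{T}\sum_t D_t$ and $f^* \in \arg\sup_{f\in\fF} V(U, f)$---and the desired bound in terms of $V(U, f^*)$, with the coefficient on $V(U, f^*)$ upgraded from $\gamma$ to $1$. Directly chaining the two inequalities above only delivers $\E[V(U, \bar f)] \ge \gamma \E[V(\bar D, f^*)] - \eps_0 - O(\eta)$, and $V(\bar D, f^*)$ can a priori differ substantially from $V(U, f^*)$ since the exponential reweighting can push $D_t$ away from uniform on examples that happen to be hard for $f^*$ as well. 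I plan to handle this by exploiting the Freund--Schapire game-dynamics viewpoint: rearranging the weak-learner inequality into its ``inverse'' form $\sup_{f \in \fF} V(D_t, f) \le (V(D_t, f_t) + \eps_0)/\gamma$ and averaging yields, together with the MW regret inequality against the fixed comparator $D = U$ (namely $\frac{1}{T}\sum_t V(D_t, f_t) \le V(U, \bar f) + O(\eta)$), a simultaneous upper bound on the optimal play over the averaged distribution $\bar D$. Combined with the trivial inequality $V(\bar D, f^*) \le \sup_{f \in \fF} V(\bar D, f) \le \frac{1}{T}\sum_t \sup_{f \in \fF} V(D_t, f)$ (by convexity of the pointwise maximum), a careful algebraic rearrangement then transfers the $1/\gamma$ factor from the coefficient on $V(U, f^*)$ onto the additive error, yielding the claimed $\eps_0/\gamma + O(1/(\gamma \sqrt{T}))$ bound. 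Finally, tuning $\eta = \Theta(1/(\gamma\sqrt{T}))$ balances the two resulting error terms and produces the stated bound in expectation.
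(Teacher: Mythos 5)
This statement is not proved in the paper at all: it is Theorem~7 of \citet{BrukhimCHM20}, restated in the appendix purely for comparison of weak-learning definitions, so your attempt has to be judged against the cited work's argument rather than anything in this paper. As it stands, the proposal has two genuine gaps. First, it misuses the weak-learning assumption. The $(\gamma,\eps_0,m_0)$-agnostic weak learner of \citet{BrukhimCHM20} (Definition~6, restated in this appendix) only guarantees $\Ev{\frac{1}{m}\sum_i y_i \fW(S')(x_i)} \ge \gamma \sup_f \frac{1}{m}\sum_i y_i f(x_i) - \eps_0$ when $S'$ is drawn from the \emph{uniform} distribution over $S$, for arbitrary labels $Y$; it is a distribution-specific, relabeling-based notion, exactly the distinction this paper draws between its \cref{def:agnosticweaklearner} and the \citet{KalaiK09}-style definitions. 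Your scheme instead draws $S'_t$ from the reweighted distribution $D_t$ and asserts the per-round inequality $\Ev{V(D_t,f_t)\given D_t} \ge \gamma\sup_f V(D_t,f)-\eps_0$; that does not follow from the assumed definition (reweighting changes the marginal over $\cX$, relabeling does not), so the very first step already requires a strictly stronger, distribution-free weak learner.

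Second, even granting that inequality, the crucial step you flag as the ``main obstacle'' is not actually closed, and the sketched fix does not close it. Chaining the inverse weak-learning inequality, the MW regret bound against the comparator $U$, and convexity gives an upper bound of the form $V(\bar D, f^\star) \le \bigl(V(U,\bar f)+\eps_0+O(\eta)\bigr)/\gamma$, which bounds the correlation of $f^\star$ under the \emph{averaged reweighted} distribution $\bar D$, not under $U$; no amount of rearranging these particular inequalities converts it into $V(U,\bar f) \ge V(U,f^\star) - \eps_0/\gamma - O(1/(\gamma\sqrt{T}))$, because nothing relates $V(\bar D,f^\star)$ to $V(U,f^\star)$ without reintroducing the factor $\gamma$ on the comparator. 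A plain multiplicative-weights/\methodname{AdaBoost}-style reweighting analysis yields only $\Corr(\bar f) \ge \gamma\cdot\sup_f\Corr(f) - \eps_0 - O(1/\sqrt{T})$, which is the weaker ``multiplicative'' guarantee. The actual argument of \citet{BrukhimCHM20} (following \citet{KalaiK09}) avoids this by a relabeling scheme driven by a bounded, nonlinear potential whose derivative lies in $[0,1]$: the example weights are realized as label-flipping/masking probabilities, which is what lets the advantage be divided by $\gamma$ (producing the $\eps_0/\gamma$ and $O(1/(\gamma\sqrt{T}))$ terms) while the comparator keeps coefficient $1$. That idea is absent from your proposal. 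A minor further issue is that your output $\bar f = \frac{1}{T}\sum_t f_t$ lies in $\Convex(\fF)$ rather than $\fF$, whereas the cited statement returns an aggregated hypothesis with the stated empirical-correlation guarantee; this is cosmetic compared to the two gaps above.
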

In the paragraph following the above theorem (Theorem~7 in their text), \citet{BrukhimCHM20} argue that one can get a generalization bound up to an additive $\eps$ term via sample compression, with a sample complexity of $m = \bigOtilde(Tm_{0}/\eps^2)$.
To obtain a bound as in \citet{KalaiK09} (setting $T = 1/(\eps^{2}\gamma^{2})$ and assuming $m_{0} = 1/\eps_{0}^{2}$), the sample complexity becomes $m = \bigOtilde(1/(\eps^{4}\gamma^{2}\eps_{0}^{2}))$.

\subsubsection{\citet{neuripsAgBoosting}}
\citet{neuripsAgBoosting} employs a definition of agnostic weak learner quite close to ours.
The only difference is that we assume our weak learner outputs a hypothesis $\fW\in\fB$ of range $[-1,1]$ from a base hypothesis class $\fB$.
They prove the following.

\begin{theorem}[{\citet[Theorem 4]{neuripsAgBoosting}}]
  Fix $\epsilon,\delta>0$.
  There exist a boosting algorithm $\fA$ and a choice of $\eta,\sigma,T,\tau,S_0,S,m$ satisfying $T=\bigO(\log|\fB|/(\gamma^2\epsilon^2)),\eta=\bigO(\gamma^2\epsilon/\log|\fB|),\sigma=\eta/\gamma,\tau=\bigO(\gamma\epsilon),S=\bigO(1/(\gamma\epsilon)),S_0=\bigO(1/\epsilon^2),m=\bigO(\log(|\fB|/\delta)/\epsilon^2)+\bigO(1/(\gamma^2\epsilon^2))$
  such that for any $\gamma$--agnostic weak learning oracle for hypothesis class $\fF$ with finite base class $\fB$, fixed tolerance $\epsilon_0$, and failure probability $\delta_0$,
  algorithm $\fA$ outputs a hypothesis $\bar{f}$ such that with probability $1-10\delta_0T-10\delta T$,
  \begin{align}
    \cor_{\fD}(\bar{f})\ge\sup_{f\in\fF}\cor_{\fD}(f)-\frac{2\epsilon_0}{\gamma}-\epsilon,
  \end{align}
  while making $T=\bigO(\log|\fB|/(\gamma^2\epsilon^2))$ calls to the weak learning oracle,
  and sampling $TS+S_0=\bigO((\log|\fB|)/(\gamma^3\epsilon^3))$ labeled examples from $\fD$.
\end{theorem}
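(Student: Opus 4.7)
The plan is to reproduce the smooth-boosting analysis from \citet{neuripsAgBoosting}, interleaving a multiplicative-weights update on measures over a fixed training sample with calls to the weak-learning oracle. The algorithm $\fA$ first draws a labeled sample $S$ of size $m$ from $\fD$ and maintains a sequence of $\sigma$-smooth distributions $\fD_1,\ldots,\fD_T$ supported on $S$, where $\sigma$-smoothness means that no example receives mass larger than $1/(\sigma m)$. At round $t$ I would draw a sub-sample of size $S$ from $\fD_t$ and feed it to the weak learner, obtaining $h_t\in\fB$ with $\cor_{\fD_t}(h_t)\ge\gamma\sup_{f\in\fF}\cor_{\fD_t}(f)-\epsilon_0$ with probability $1-\delta_0$; then update $\fD_{t+1}$ via a multiplicative-weights step of size $\eta$ on the per-example gains $y h_t(x)$, followed by a clamping that uses threshold $\tau$ to restore $\sigma$-smoothness. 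After $T$ rounds the algorithm returns the voting classifier $\bar f = \sign\bigl(\sum_{t=1}^T h_t\bigr)$.

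For the analysis I would combine two ingredients. First, the multiplicative-weights regret guarantee, with step size $\eta$ over $\sigma$-smooth measures, yields a comparison between the cumulative performance of the iterates $\fD_t$ against the $h_t$'s and that of any fixed competitor measure, up to an additive error of order $\log(1/\sigma)/(\eta T) + \eta$. Second, the per-round weak-learner inequality gives $\cor_{\fD_t}(h_t)\ge\gamma\sup_{f\in\fF}\cor_{\fD_t}(f)-\epsilon_0$. Taking the competitor to be (a $\sigma$-smoothed version of) the measure that concentrates on examples where a fixed near-optimal $f^\star\in\fF$ agrees with the label, a standard chain of manipulations---of the type used by \citet{KalaiK09} and refined in \citet{neuripsAgBoosting}---gives the empirical correlation bound
\begin{align}
\cor_S(\bar f) \ge \sup_{f\in\fF}\cor_S(f) - \frac{2\epsilon_0}{\gamma} - O(\epsilon).
\end{align}
A uniform-convergence step over $\fF$ and $\Convex(\fB)$ on the initial sample, for which $m = O\Par*{\log(|\fB|/\delta)/\epsilon^2 + 1/(\gamma^2\epsilon^2)}$ examples suffice, then transfers this bound from $S$ to $\fD$ at the cost of an additional $O(\epsilon)$. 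A union bound over the $T$ weak-learner calls and the $O(T)$ concentration events produces the stated $1-10\delta_0 T - 10\delta T$ failure probability.

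The main technical hurdle will be balancing the parameters so that each source of slack contributes at most $O(\epsilon)$: setting $\eta = \Theta(\gamma^2\epsilon/\log|\fB|)$ and $T=\Theta(\log|\fB|/(\gamma^2\epsilon^2))$ makes the regret $\log(1/\sigma)/(\eta T)+\eta$ subordinate; the sub-sample size $S=\Theta(1/(\gamma\epsilon))$ per weak-learner call bounds the variance of the empirical estimate of $\cor_{\fD_t}(h_t)$; the clamping threshold $\tau=\Theta(\gamma\epsilon)$ prevents any single example from dominating the empirical gains; and the smoothness $\sigma=\eta/\gamma$ ensures compatibility between the multiplicative update and the weak learner's distributional guarantee. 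The most delicate point is the two-sided accounting that produces the factor $2\epsilon_0/\gamma$ rather than $\epsilon_0/\gamma$: one $\epsilon_0$ enters through the weak learner's slack against the iterate's distribution $\fD_t$, and a second through its slack against the competitor measure when converting from the empirical gain comparison to a correlation bound via $\fs$. Tracking these two slacks cleanly through the clamping/projection step, while preserving both the regret guarantee and the smoothness, is what drives the bookkeeping in the proof.
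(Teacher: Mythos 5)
This statement is not something the paper proves: it is Theorem~4 of \citet{neuripsAgBoosting}, quoted verbatim in the appendix purely to compare definitions and sample-complexity guarantees with the present work, so there is no in-paper proof to measure your proposal against. Judged on its own terms, your write-up is an outline rather than a proof: every load-bearing step (the multiplicative-weights regret bound in the presence of the clamping/projection step, the uniform-convergence argument over $\fF$ and $\Convex(\fB)$, and the ``two-sided accounting'' that yields the $2\epsilon_0/\gamma$ term) is named as a hurdle and deferred, largely by appeal to the very paper whose theorem you are asked to prove.

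Beyond incompleteness, the architecture you sketch does not match the result as stated. First, the paper's own related-work discussion notes that \citet{neuripsAgBoosting} refines the \emph{relabeling}-based approach of \citet{KalaiK09} (weak learning only over distributions sharing the marginal of $\fD$ over the instance space), whereas you describe a \emph{reweighting} smooth booster that feeds reweighted distributions $\fD_t$ to the weak learner; the latter needs a strictly stronger oracle than the definition the cited theorem assumes. Second, the theorem's sample accounting has the algorithm draw $TS+S_0=O(\log|\fB|/(\gamma^3\epsilon^3))$ fresh labeled examples from $\fD$ across rounds, in addition to $m$; in your scheme all per-round subsamples come from distributions supported on one fixed training sample of size $m$, so you could never reproduce that count (nor the $1-10\delta_0T-10\delta T$ failure structure tied to per-round fresh sampling). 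Third, the theorem outputs $\bar f\in\fF$, while your algorithm returns $\sign\bigl(\sum_t h_t\bigr)$, which lives in the sign of $\Convex(\fB)$, not in $\fF$. So even as a blueprint for reconstructing the cited proof, the proposal would have to be restructured around the potential-based relabeling scheme before the parameter choices $(\eta,\sigma,\tau,T,S,S_0,m)$ could be given their intended roles.
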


For infinite base hypothesis class $\fB$, they obtain the analogous result with the VC dimension of $\fB$ replacing $\log|\fB|$ (\citet[Theorem~5]{neuripsAgBoosting}).

  \section{AdaBoost Variation}
In the following, we use a slight modification of \methodname{AdaBoost} that, in each boosting round, the weak learner will be run multiple times to enlarge the probability of it returning a hypothesis with the guaranteed correlation.
The last part of the proof below follows directly from \citet[page 55, pages 111-112 and pages 277-278]{boostingbook} and is included for completeness.
Furthermore, we also follow much of the notation from \citet{boostingbook}.

\begin{lemma}[Restatement of \ref{lem:adaboostvari}]
  Let $\gamma', \delta_0 \in (0, 1)$,
  and given $m, m_0 \in \N$,
  let $S \in (\InputSpace \times \{\pm 1\})^m$.
  If a learning algorithm $\Weaklearner\colon (\InputSpace \times \Set{\pm 1})^* \to \fH \subseteq [-1, 1]^\InputSpace$ is such that
  for any $\dQ \in \DistribOver(S)$
  with probability at least $1 - \delta_0$ over a sample $\rsS' \sim \dQ^{m_0}$
  the hypothesis $\rh = \Weaklearner(\rsS')$ satisfies $\Corr_{\dQ}(\rh) \ge \gamma'$,
  then,
  for $T \ge \lceil 32\ln(em)/\gamma'^{2} \rceil$,
  running \cref{alg:adaboost} on input $(S, \Weaklearner, m_0, T, \delta, \delta_0)$ yields a voting classifier $\rv \in \Convex(\fH)$ such that
  with probability at least $1 - \delta$ over the random draws from \cref{line:adaboost:subsample}
  it holds that $y \rv(x) > \gamma'/8$ for all $(x, y) \in S$.
\end{lemma}

\begin{proof}
  Let $ \rD_{1},\ldots,\rD_{T}$, be the $ T $, random distribution created over the $ T $ rounds of boosting in \cref{alg:adaboost}, where the randomness is over $ \rS_{1}^{1},\rS_{1}^{2}\ldots,\rS_{T}^{k}$, for shorten notation in the following we will for $ t=1,\ldots,T $ let $ \rS_{t}=(\rS_{t}^{1},\ldots,\rS_{t}^{k})$.
  We will show that for $ t=1,\ldots,T $, given any outcome $ D_{1},\ldots D_{t}$ of $ \rD_{1},\ldots,\rD_{t} $ (which is given by a outcome $ S_{1}\ldots,S_{t-1}$ of $ \rS_{1}\ldots,\rS_{t-1}$) we have that the minimizer $ \rh_{t} $ between $ \rh_{t}^{1},\ldots,\rh_{t}^{k} $ is such that
  \begin{align}
    \e_{(\rx,\ry) \sim D_t}[\ry\rh_t(\rx)]\ge \gamma'
  \end{align}
  with probability at least $ 1-\delta/T$ over $ \rS_{t}^{1},\ldots\rS_{t}^{k}$.
  Thus, by showing the above we get that
  \begin{align}
    &\Prob_{\rS_{1}\ldots,\rS_{T}}{\forall t\in \{1,\ldots,T \} \Ev_{(\rx,\ry) \sim \rD_{t}}*{\ry\rh_t(\rx)}\ge \gamma'}
    \\&\quad= \Ev_{\rS_{1},\ldots,\rS_{T}}*{\indicator{\forall t\in \{1,\ldots,T \} \Ev_{(\rx,\ry)}*{\ry\rh_t(\rx)}\ge \gamma'}}
    \\&\quad= \Ev_{\rS_{1},\ldots,\rS_{T-1}}*{\Ev_{\rS_{T}}*{\indicator{\forall t\in \{1,\ldots,T \} \Ev_{(\rx,\ry)}*{\ry\rh_t(\rx)}\ge \gamma'}}}
    \\&\quad= \Ev_{\rS_{1},\ldots,\rS_{T-1}}*{\Ev_{\rS_{T}}*{\indicator{\Ev_{(\rx,\ry)}*{\ry\rh_T(\rx)}\ge \gamma'}} \indicator{\forall t\in \{1,\ldots,T-1 \} \Ev_{(\rx,\ry)}*{\ry\rh_{t}(\rx)}\ge \gamma'}}
    \\&\quad= \Ev_{\rS_{1},\ldots,\rS_{T-1}}*{\Prob_{\rS_{T}}*{\Ev_{(\rx,\ry)}*{\ry\rh_T(\rx)}\ge \gamma'}
    \indicator{\forall t\in \{1,\ldots,T-1 \} \Ev_{(\rx,\ry)}*{\ry\rh_{t}(\rx)}\ge \gamma'}}
    \\&\quad\ge(1-\delta/T)\Ev_{\rS_{1},\ldots,\rS_{T-1}}*{\indicator{\forall t\in \{1,\ldots,T-1 \} \Ev_{(\rx,\ry)}*{\ry\rh_t(\rx)}\ge \gamma'}}
    \\&\quad= (1-\delta/T)\Prob_{\rS_{1}\ldots,\rS_{T-1}}*{\forall t\in \{1,\ldots,T \} \Ev_{(\rx,\ry)}*{\ry\rh_t(\rx)}\ge \gamma'}
    \\&\quad\ge (1-\delta/T)^{T}
    \\&\quad\ge 1-\delta
    ,
  \end{align}
  where the first inequality uses that given $ \rS_{1},\ldots,\rS_{t-1} $ (which determines $ \rD_{t} $), we have by the above claimed property that $ \Prob_{\rS_{t}}*{\Ev_{(\rx,\ry) \sim \rD_{t}}*{\ry\rh_{t}(\rx)}\ge \gamma'}\le (1-\delta/T) $ and the last inequality follows by Bernoulli's inequality.

  Thus, we now show that for $ t=1,\ldots,T $, given any outcome $ D_{1},\ldots D_{t}$ of $ \rD_{1},\ldots,\rD_{t} $ we have that the minimizer $ \rh_{t} $ between $ \rh_{t}^{1},\ldots,\rh_{t}^{k} $ is such that
  \begin{align}
    \Ev_{(\rx,\ry) \sim D_{t}}*{\ry\rh_{t}(\rx)}\ge \gamma'
  \end{align}
  with probability at least $ 1-\delta/T$, over $ \rS_{l} $.
  We start with the former.
  To this end let $ t\in\{ 1,\ldots,T\} $ and $ D_{1},\ldots,D_{t}$, be any outcome of $ \rD_{1},\ldots, \rD_{t} $, which only depends on $ \rS_{1}\ldots,\rS_{t-1} $.
  By \cref{line:adaboost:initD} and \cref{line:adaboost:updateD} it follows that $ D_{t} $ is such that $ D_{t}(x,y)>0$ only if $ (x,y)\in S$.
  Thus, it holds for each $ \ell=\{ 1\ldots,k\} $ with probability at least $ 1-\delta_{0} $ over $ \rS_{t}^{\ell} $ that $ \Ev_{(\rx,\ry) \sim D_{t}}*{\ry\rh_t(\rx)}\ge\gamma' $.
  Now since $ \rS_{t}^{1}\ldots,\rS_{t}^{k} $ are sampled independently, it follows that the expected number of hypotheses with $ \gamma' $ advantages is at least
  \begin{align}
    \mu
    \coloneqq \Ev_{\rS_{t}^{1},\ldots,\rS_{t}^{k}}*{\sum_{\ell=1}^{k}\indicator{\Ev_{(\rx,\ry) \sim D_{t}}*{\ry\rh_{t}^{\ell}(\rx)}\le\gamma'}}
    \ge (1-\delta_{0})k
    ,
  \end{align}
  and by the multiplicative Chernoff bound that
  \begin{align}
    \Prob_{\rS_{t}^{1},\ldots,\rS_{t}^{k}}*{\sum_{\ell=1}^{k}\indicator{\Ev_{(\rx,\ry) \sim D_{t}}*{\ry\rh_{t}^{\ell}(\rx)}\le\gamma'} \le \mu/2}
    &\le \exp\Par*{-\frac{\mu}{8}}
    \\&\le \exp\Par*{-\frac{(1-\delta_{0})k}{8}}
    \\&\le \frac{\delta}{T}
    ,
  \end{align}
  where the last inequality follows by $ k= \left\lceil 8\ln(eT/\delta)/(1-\delta_{0})\right\rceil$.
  This implies that with probability at least $1- \delta/T $ over $ \rS_{t} $ we have that
  \begin{align}
    \sum_{l=1}^{k}\indicator{\Ev_{(\rx,\ry) \sim D_{t}}{\ry\rh_{t}^{i}(\rx)}\le\gamma'}
    &> \frac{\mu}{2}
    \\&\ge \frac{(1-\delta_{0})k}{2}
    \\&\ge \frac{8\ln(eT/\delta)}{2}
    \\&\ge 1
    ,
  \end{align}
  which implies that $ \rh_{t} $ satisfies
  \begin{align}
    \Ev_{(\rx,\ry) \sim D_{t}}*{\ry\rh_{t}(\rx)}
    \ge \gamma'
    ,
  \end{align}
  and concludes the first claim.

  We for now consider outcomes $ S_{1},\ldots,S_{T} $ of $ \rS_{1}\ldots,\rS_{T} $ such that $ r_{t}=\Ev_{(\rx,\ry) \sim \rD_{t}}*{\ry\rh_{t}(\rx)}\ge \gamma'$ for all $t\in [T]$.
  In the following, we closely follow \citet[page 55, page 111-112, and page 277-278]{boostingbook}.
  We first notice that for any $ (x,y)\in S $ we have that
  \begin{align}
    D_{T+1}(x,y)&=\frac{D_{T}(x,y) \exp{\Par*{-\ralpha_{T}y\rh_{T}(x)}} }{\rZ_{T}}
    \\&= \cdots
    \\&=\frac{D_{1}(x,y)\exp{\Par*{-\sum_{t=1}^{T}\ralpha_{t}y\rh_{t}(x)}}}{\prod_{t=1}^{T} \rZ_{t}}
    \\&=\frac{\exp{\Par*{-\sum_{t=1}^{T}\ralpha_{t}y\rh_{t}(x)}}}{m\prod_{t=1}^{T} \rZ_{t}},
  \end{align}
  where the first equality follows from the definition of $ D_{T+1}$, and similarly for the thirds equality using the definition of $ D_{T},\ldots,D_{2}$, and the last equality by $ D_{1}=1/m$.
  Now using that $ D_{T+1} $ is a probability distribution we get that
  \begin{gather}
    1
    = \sum_{(x,y)\in S} \frac{\exp{\Par*{-y\sum_{t=1}^{T}\ralpha_{t}\rh_{t}(x)}}}{m\prod_{t=1}^{T} \rZ_{t}},
  \shortintertext{so}
    \prod_{t=1}^{T} \rZ_{t}
    = \frac{1}{m}\sum_{(x,y)\in S}\exp{\Par*{-y\sum_{t=1}^{T}\ralpha_{t}\rh_{t}(x)}}
    .
    \label{eq:adaboostvari1}
  \end{gather}
  Furthermore, we notice that for any $ t\in \{1,\ldots,T \} $ we have that
  \begin{align}
    \rZ_{t}
    &=\sum_{(x,y)\in S} D_{t}(x,y) \exp{\Par*{-\ralpha_{t}y\rh_{t}(x)}}
    \\&= \sum_{(x,y)\in S} D_{t}(x,y) \exp{\Par*{-\ralpha_{t}\frac{1+y\rh_{t}(x)}{2}+\ralpha_{t}\frac{1-y\rh_{t}(x)}{2}}}
    \\&\le \sum_{(x,y)\in S} D_{t}(x,y) \Par*{\frac{1+y\rh_{t}(x)}{2}\exp{\Par*{-\ralpha_{t}}}+\frac{1-y\rh_{t}(x)}{2}\exp{\Par*{\ralpha_{t}}}}
    \\&=\Par*{\frac{1+\rc_{t}}{2}}\exp{\Par*{-\ralpha_{t}}}+\Par*{\frac{1-\rc_{t}}{2}}\exp{\Par*{\ralpha_{t}}},
  \end{align}
  where the inequality uses that $ y\rh_{t}(x)\in[-1,1]$, implying that $ \frac{1+y\rh_{t}(x)}{2} $ and $ \frac{1-y\rh_{t}(x)}{2} $ is scalars between $ [0,1] $ summing to $ 1 $, and since the function $ \exp{\Par*{\cdot}} $ is convex, the function value $ \exp(\lambda(-\ralpha_{t})+(1-\lambda)\ralpha_{t}) $ for any convex combination of $ -\ralpha_{t} $ and $ \ralpha_{t} $ ($1\ge \lambda\ge 0 $ ) is upper bounded by the convex combination $\lambda \exp{\Par*{-\ralpha_{t}}} +(1-\lambda) \exp{\Par*{\ralpha_{t}}}$.
  Furthermore, by inserting the value of $ \ralpha_{t}=\frac{1}{2}\ln{\Par*{\frac{1+\rc_{t}}{1-\rc_{t}}}}$ in the above, we get that
  \begin{align}
    \rZ_{t}
    &\le \Par*{\frac{1+\rc_{t}}{2}}\exp{\Par*{-\ralpha_{t}}}+\Par*{\frac{1-\rc_{t}}{2}}\exp{\Par*{\ralpha_{t}}}
    \\&= \frac{1}{2}\sqrt{1-\rc_{t}^{2}}+\frac{1}{2}\sqrt{1-\rc_{t}^{2}}
    \\&= \sqrt{1-\rc_{t}^{2}}.
    \label{eq:adaboostvari2}
  \end{align}

  Using the relation in \cref{eq:adaboostvari1} and \cref{eq:adaboostvari2} we get that, for $ \beta=\gamma'/8 $,
  \begin{align}
    \frac{1}{m}\sum_{(x,y)\in S} \indicator{\frac{y\sum_{t=1}^T \ralpha_t \rh_t(x)}{\sum_{t=1}^T \ralpha_t}\le \beta}
    &\le \frac{1}{m}\sum_{(x,y)\in S} \exp{\Par[\bbbig]{\beta\sum_{t=1}^{T}\ralpha_{t}- y\sum_{t=1}^{T}\ralpha_{t}\rh_{t}(x)}}
    \\&\le \exp{\Par[\bbbig]{\beta\sum_{t=1}^{T}\ralpha_{t}}} \cdot \prod_{t=1}^{T}\rZ_{t}
    \\&= \prod_{t=1}^{T}\rZ_{t}\exp{\Par{\beta\ralpha_{t}}}
    \\&\le \prod_{t=1}^{T}\sqrt{1-\rc_{t}^{2}} \Par[\bbbig]{\sqrt{\frac{1+\rc_{t}}{1-\rc_{t}}}}^{\beta}
    \\&=\prod_{t=1}^{T}\sqrt{\Par*{1-\rc_{t}}^{1-\beta}\Par*{1+\rc_{t}}^{1+\beta}}
    ,
    \label{eq:adaboostvari3}
  \end{align}
  where the first inequality follows by $ a\le b $ implying that $1\le \exp(b-a)$, and the second by \cref{eq:adaboostvari1}, the third inequality by \cref{eq:adaboostvari2} and the last equality by $ 1-\rc_{t}^{2}=(1-\rc_{t})(1+\rc_{t}) $.
  Now if $ \rc_{t}=1 $ then the above is $ 0 $ and we are done, so assume this is not the case for any $ \rc_{t}$.
  Now consider the function $ f(x)=(1-x)^{1-a}(1+x)^{1+a} $ for $ 0\le x<1 $ which has derivative $ \frac{2(a-x)}{(1-x)^{a}}(1+x)^{a}$, thus is decreasing for $ x\ge a $.
  Now we assumed that we considered a realization of $ S_{1},\ldots,S_{T} $ such that $ \rc_{t}\ge \gamma'$, for any $ t\in\{ 1,\ldots,T\} $ and furthermore, we have that $ \beta=\gamma'/8< \gamma'$, whereby we conclude by the above argued monotonicity that $ (1-\rc_{t})^{1-\beta}(1+\rc_{t})^{1+\beta}\le (1-\gamma')^{1-\beta}(1+\gamma')^{1+\beta}$.
  Now plugging this into \cref{eq:adaboostvari3} we get that
  \begin{align}
    \frac{1}{m}\sum_{(x,y)\in S} \indicator{y\rv(x) \le \beta}
    &= \frac{1}{m}\sum_{(x,y)\in S} \indicator{\frac{y\sum_{t=1}^T \ralpha_t \rh_t(x)}{\sum_{t=1}^T \ralpha_t}\le \beta}
    \\&\le \Par*{\sqrt{(1-\gamma')^{1-\beta}(1+\gamma')^{1+\beta}}}^{T}
    \\&= \exp{\Par*{T/2 \Par*{\Par*{1-\beta}\ln{\Par*{1-\gamma'}}+(1+\beta)\ln{\Par*{1+\gamma'}}}}}.
  \end{align}
  Furthermore, since we soon show that for $ \beta=\gamma'/8 $ it holds that
  \begin{align}
  \Par*{\Par*{1-\beta}\ln{\Par*{1-\gamma'}}+(1+\beta)\ln{\Par*{1+\gamma'}}} \le -\gamma'^{2}/16,
  \label{eq:adaboostvari4}
  \end{align}
    we conclude that with probability at least $ 1-\delta $ over $ \rS_{1},\ldots,\rS_{T} $ we have that
  \begin{align}
    \sum_{(x,y)\in S} \indicator{\rv(x)y\le \gamma'/8}\le m\exp{\Par*{-T\gamma'^{2}/32}}< 1,
  \end{align}
  where the last inequality follows for $ T\ge \left\lceil32\ln{\Par*{em}}/\gamma'^{2}\right\rceil$ and \cref{eq:adaboostvari4}.

  We now show that for $ \beta=\gamma'/8 $ it holds that $ \Par*{\Par*{1-\beta}\ln{\Par*{1-\gamma'}}+(1+\beta)\ln{\Par*{1+\gamma'}}} \le -\gamma'^{2}/16$.
  To this end we consider the function $f(x)= x^{2}/16+(1-x/8)\ln{\Par*{1-x}}+(1+x/8)\ln{\Par*{1+x}}$, for $ 0\le x<1$.
  We first notice that
  \begin{align}
    f'(x)
    &=\frac{d}{dx}\Par*{\Par*{1-\frac{x}{8}}\log(1-x)+\Par*{1+\frac{x}{8}}\log(1+x)+\frac{x^{2}}{16}}
    \\&= \frac{1}{8}\Par*{\frac{x(x^{2}+13)}{x^{2}-1}-\ln{\Par*{1-x}}+\ln{\Par*{1+x}}}
    ,
  \end{align}
  and
  \begin{align}
    f''(x)
    &= \frac{d}{dx}\Par*{\frac{1}{8}\Par*{\frac{x(x^{2}+13)}{x^{2}-1}-\ln{\Par*{1-x}}+\ln{\Par*{1+x}}}}
    \\&= \frac{x^{4}-18x^{2}-11}{8(x^{2}-1)^{2}}
    .
  \end{align}
  Consider the values of $x$ in $[0, 1)$.
  We notice that $ f''(x)<0 $, implying that $ f' $ is a decreasing function.
  Thus, $ 0=f'(0)\ge f'(x) $, so $ f $ is a non-increasing function.
  Hence, $ 0=f(0)\ge f(x) $, i.e., $ f(x)= x^{2}/16+(1-x/8)\ln{\Par*{1-x}}+(1+x/8)\ln{\Par*{1+x}}\le 0 $ for $ 0\le x<1/2$, which implies that $(1-x/8)\ln{\Par*{1-x}}+(1+x/8)\ln{\Par*{1+x}}\le -x^{2}/16+$ which give use that for $ \beta=\gamma'/8 $ it holds that $ \Par*{\Par*{1-\beta}\ln{\Par*{1-\gamma'}}+(1+\beta)\ln{\Par*{1+\gamma'}}} \le -\gamma'^{2}/16$ since $ 0< \gamma'<1 $.
\end{proof}

  \section{Margin bound}\label{app:marginbound}
In this section we derive the following lemma.

\begin{lemma}[Restatement of \protect{\ref{lem:marginbound}}]
    There exist universal constants $ C' \ge 1 $ and $\hat{c} > 0$ for which the following holds.
    For all margin levels $0 \le \gamma < \gamma' \le 1$,
    hypothesis class $\fH \subseteq [-1, 1]^{\InputSpace}$,
    and distribution $\dD \in \DistribOver(\cX \times \{\pm 1\})$,
    it holds with probability at least $1 - \delta$ over $\rsS \sim \dD^m$ that for all $v \in \dlh$
    \begin{gather}
      \Loss_{\dD}^{\gamma}(v)
      \le \Loss_{\rsS}^{\gamma'}(v)
      + C' \Par[\bbig]{\sqrt{\Loss_{\rsS}^{\gamma'}(v) \cdot \frac{\beta}{m}}
        + \frac{\beta}{m}
      }
      ,
    \shortintertext{where}
      \beta
      = \frac{d}{(\gamma'-\gamma)^{2}} \Ln^{3/2}\Par[\bbig]{\frac{(\gamma'-\gamma)^{2}m}{d} \frac{\gamma'}{\gamma'-\gamma}}
        + \ln\frac{1}{\delta},\quad\quad \text{and} \quad\quad
      d = \fat_{\hat{c}(\gamma'-\gamma)}(\fH).
    \end{gather}
\end{lemma}
To show \cref{lem:marginbound} we need the following two lemmas \cref{lem:marginboundbeforelevelset} and \cref{lem:convexfatinfinitycover}.
We now present theses two lemmas and show how they imply \cref{lem:marginbound}, after we have shown how they imply \cref{lem:marginbound} we give their proof.

\begin{lemma}\label{lem:marginboundbeforelevelset}
    There exist universal constants $ c = 128 $ such that:
    For $ 0\leq\gamma<\gamma' \leq1$, $0< \tau\leq 1 $ distribution $ \cD $ over $ \rX\times\{\pm 1\} $, hypothesis class $ \cH\subseteq [-1,1]^{\cX} $, it holds with probability at least $ 1-\sup_{X\in\cX^{2m}}|N_{\infty}(X,\dlh_{\Ceil{2\gamma'}},\frac{\gamma'-\gamma}{2})|\delta $ over $ \rS \sim \cD^{m} $ that for all $ v\in \dlh $ either
    \begin{gather}
        \ls_{\rS }^{\gamma'}(v)
        > \tau
        \shortintertext{or}
        \ls_{\cD}^{\gamma}(v)
        \leq \tau+c\left(\sqrt{\tau\frac{\ln(e/\delta)}{m}}+\frac{\ln(e/\delta)}{m}\right)
        .
    \end{gather}
\end{lemma}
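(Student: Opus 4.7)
The plan is to establish this bound via a classical symmetrization $+$ covering $+$ Bernstein strategy, with the margin gap $\gamma'-\gamma$ absorbing the $L_{\infty}$ approximation error introduced when passing to a finite cover. Throughout, set $\bar\gamma \coloneqq (\gamma+\gamma')/2$, so that the approximation scale $(\gamma'-\gamma)/2$ equals both $\gamma'-\bar\gamma$ and $\bar\gamma-\gamma$.

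First I would symmetrize against an independent ghost sample $\rS' \sim \cD^{m}$. Let $\Delta$ denote the slack on the right-hand side of the bound and let $\event$ denote the bad event that some $v \in \dlh$ satisfies $\ls_{\rS}^{\gamma'}(v) \le \tau$ together with $\ls_{\cD}^{\gamma}(v) > \tau + \Delta$. A multiplicative Chernoff computation on the per-sample indicators shows that, conditional on any $v$ with $\ls_{\cD}^{\gamma}(v) > \tau + \Delta$, the ghost-sample loss $\ls_{\rS'}^{\gamma}(v)$ exceeds $\tau + \Delta/2$ with probability at least $1/2$; the two additive summands of $\Delta$ (the $\sqrt{\tau/m}$ piece and the $1/m$ piece) are exactly what make this step succeed uniformly across the range $\ls_{\cD}^{\gamma}(v) \in [\tau+\Delta,1]$. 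Consequently $\Prob{\event}$ is bounded, up to a factor of $2$, by the probability that some $v \in \dlh$ exhibits $\ls_{\rS}^{\gamma'}(v) \le \tau$ together with $\ls_{\rS'}^{\gamma}(v) > \tau + \Delta/2$---a statement now measurable with respect to the joint sample $\rS \cup \rS'$.

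Second I would reduce to a finite family by passing to an $L_{\infty}$ cover of $\dlh_{\Ceil{2\gamma'}}$ on the joint sample $X \in \cX^{2m}$ at scale $(\gamma'-\gamma)/2$. For each $v \in \dlh$ pick a cover element $\tilde v$ within $(\gamma'-\gamma)/2$ of $v$ on every joint-sample point; then
\begin{align}
  \ls_{\rS}^{\gamma'}(v) \ge \ls_{\rS}^{\bar\gamma}(\tilde v)
  \quad\text{and}\quad
  \ls_{\rS'}^{\gamma}(v) \le \ls_{\rS'}^{\bar\gamma}(\tilde v).
\end{align}
Hence the remaining bad event is contained in the existence of a cover element $\tilde v$ with $\ls_{\rS}^{\bar\gamma}(\tilde v) \le \tau$ and $\ls_{\rS'}^{\bar\gamma}(\tilde v) > \tau + \Delta/2$. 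I would then close the argument with a union bound over the cover combined with a multiplicative Chernoff / Bernstein inequality: for each fixed $\tilde v$ the indicators $\indicator{\ry_{i}\tilde v(\rx_{i}) \le \bar\gamma}$ form an i.i.d.\ Bernoulli family across the $2m$ joint-sample points, so Bernstein controls the deviation of $\ls_{\rS'}^{\bar\gamma}(\tilde v)$ from $\ls_{\rS}^{\bar\gamma}(\tilde v)$ on the scale $\sqrt{\ls_{\rS}^{\bar\gamma}(\tilde v)\cdot\ln(1/\delta)/m}+\ln(1/\delta)/m$. A union bound over the cover contributes exactly the factor $\sup_{X\in\cX^{2m}}\abs{N_{\infty}(X,\dlh_{\Ceil{2\gamma'}},(\gamma'-\gamma)/2)}$ appearing in the statement, and substituting $\ls_{\rS}^{\bar\gamma}(\tilde v)\le\ls_{\rS}^{\gamma'}(v)\le\tau$ on the right-hand side yields the $\sqrt{\tau\ln(e/\delta)/m}$ rate; absorbing the factor of $2$ from symmetrization, the numerical constants from Bernstein, and the two-step margin reduction $\gamma\to\bar\gamma\to\gamma'$ delivers the constant $c=128$.

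The main obstacle I expect is achieving the \emph{local} rate $\sqrt{\tau\ln(e/\delta)/m}$ rather than the cruder $\sqrt{\ln(e/\delta)/m}$: this requires both the symmetrization and the tail inequality to be of multiplicative (variance-sensitive) form, and it requires the variance proxy on the concentration side to be driven by the \emph{observable} empirical quantity $\ls_{\rS}^{\gamma'}(v)$ rather than by the unknown population loss $\ls_{\cD}^{\gamma}(v)$. The sandwich $\ls_{\rS}^{\bar\gamma}(\tilde v)\le\ls_{\rS}^{\gamma'}(v)\le\tau$ supplied by the covering step is exactly what enables this substitution; once it is in place, the remaining bookkeeping is routine.
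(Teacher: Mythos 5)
Your proposal follows essentially the same route as the paper's proof: ghost-sample symmetrization with a multiplicative Chernoff step, truncation to $\dlh_{\Ceil{2\gamma'}}$ and an $L_{\infty}$ cover at scale $(\gamma'-\gamma)/2$ on the joint $2m$-point sample with the margin sandwich to the mid-level $(\gamma+\gamma')/2$, followed by a union bound over the cover and a variance-sensitive tail bound. The only caveat is in the final concentration step: once you condition on the joint sample (which is what the $\sup_{X\in\cX^{2m}}$ reflects) the remaining randomness is the split into $\rS$ and $\rS'$, so the per-cover-element indicators are not an i.i.d.\ Bernoulli family and one must use the multiplicative Chernoff/Bernstein bound for sampling without replacement, as the paper does---a routine substitution that does not change the conclusion.
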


\begin{lemma}\label{lem:convexfatinfinitycover}
    There exists universal constants $ C\geq1 $, $ \hat{C}\geq1 $ and $ \hat{c}>0 $ such that: For margin levels $ 0\leq\gamma<\gamma' \leq 1$, hypothesis class $ \cH\subseteq [-1,1]^{\cX} $ and $ X\in\cX^{m}$, where $ m\geq \frac{\hat{C}\fat_{\hat{c}(\gamma'-\gamma)}(\cH)}{(\gamma'-\gamma)^{2}} $
    \begin{align}
        \ln{\left(|N_{\infty}(X,\dlh_{\Ceil{2\gamma'}},\frac{\gamma'-\gamma}{2})|\right)}
        \leq \frac{C\hat{C}\fat_{\hat{c}(\gamma'-\gamma)}(\cH)}{(\gamma'-\gamma)^{2}}\ln^{3/2}{\left(\frac{(\gamma'-\gamma)^{2}m}{\hat{C}\fat_{\hat{c}(\gamma'-\gamma)}(\cH)} \frac{8\gamma'}{\gamma'-\gamma}\right)}
        .
    \end{align}
\end{lemma}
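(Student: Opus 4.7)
The plan is to derive the stated covering number bound by combining two ingredients: (i) an upper bound on the fat-shattering dimension of $\Convex(\fH)$ in terms of that of $\fH$ at a slightly smaller scale, and (ii) a standard conversion from the fat-shattering dimension of a class to its uniform $\ell_\infty$-covering number at a comparable scale.

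For the first ingredient, I would prove that $\fat_{\alpha}(\Convex(\fH)) \le K \cdot \fat_{c\alpha}(\fH)/\alpha^2$ for universal constants $K, c > 0$, as alluded to in \cref{sec:overview} and building on \citet[Lemma~9]{LarsenR22}. The argument is a Maurey-style sparsification: assuming points $x_1, \ldots, x_d$ with levels $r_1, \ldots, r_d$ are $\alpha$-shattered by convex combinations, for each sign pattern $b \in \Set{-1,1}^d$ the corresponding witness $v_b \in \Convex(\fH)$ is replaced by a $T$-sparse convex combination $\tilde v_b \in \Convex^T(\fH)$ that still realizes $b$ with margin $\alpha/2$, provided $T = \Theta(\alpha^{-2} \log d)$; this follows from Hoeffding's inequality applied coordinate-wise together with a union bound over $x_1, \ldots, x_d$. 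A counting argument then bounds the number of such sparse classifiers on $\{x_1, \ldots, x_d\}$ via a Sauer–Shelah-type bound on the projection of $\fH$ onto these points at scale $O(\alpha)$, producing an inequality of the form $2^d \le (d/\alpha)^{T \cdot \fat_{c\alpha}(\fH)}$, which solves to $d = O(\fat_{c\alpha}(\fH)/\alpha^2)$ after absorbing logarithmic factors.

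For the second ingredient, I would invoke a standard uniform $\ell_\infty$-covering number bound in terms of the fat-shattering dimension: for any $\cF \subseteq [-1,1]^{\cX}$, finite $X \in \cX^m$, and $\eps > 0$,
\begin{align}
  \ln N_\infty(X, \cF, \eps) \le K' \cdot \fat_{c'\eps}(\cF) \cdot \Ln^{3/2}\Par[\big]{m/(\fat_{c'\eps}(\cF)\cdot\eps)}
\end{align}
for absolute constants $K', c' > 0$. Applying this with $\cF = \dlh_{\Ceil{2\gamma'}} \subseteq \Convex(\fH)$ and $\eps = (\gamma'-\gamma)/2$, and plugging in the bound from the first step with $\alpha = c'(\gamma'-\gamma)/2$ (and using monotonicity of $\fat_{\cdot}$ when restricting from $\Convex(\fH)$ to $\dlh_{\Ceil{2\gamma'}}$), yields exactly the claimed bound, with the constants $C$ and $\hat C$ absorbing the universal constants above. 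The hypothesis $m \ge \hat C \fat_{\hat c (\gamma'-\gamma)}(\fH)/(\gamma'-\gamma)^2$ is the standard regularity condition under which this covering number bound is meaningful and the $\Ln^{3/2}$ factor is the correct leading form.

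The main obstacle is the first step: carefully executing the Maurey sparsification so that the counting argument yields the right dependence on $\fat_{c\alpha}(\fH)$ (rather than on $\fat_\alpha(\fH)$ or an even coarser parameter), and so that the final dependence scales cleanly as $O(\fat_{c\alpha}(\fH)/\alpha^2)$ without extra logarithmic factors that would spoil the bound. The sparsification forces a constant-factor drop in the margin from $\alpha$ to $\alpha/2$, which dictates that the fat-shattering dimension of $\fH$ must be evaluated at a scale proportional to, but strictly smaller than, $\alpha$; tracking this constant precisely through the Sauer–Shelah-type counting and then propagating it through the $\ell_\infty$-covering number conversion is the key technical challenge. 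The second step, in contrast, is a standard consequence of classical uniform entropy bounds and can be applied essentially off the shelf.
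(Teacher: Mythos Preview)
Your two-step plan matches the paper's: first bound the fat-shattering dimension of the (truncated) convex hull in terms of $\fat_{c\alpha}(\fH)$, then convert via a Rudelson--Vershynin $\ell_\infty$-covering bound. The second ingredient is used exactly as you describe; the paper also inserts a rescaling of $\dlh_{\Ceil{2\gamma'}}$ by $1/(2\gamma')$ so that the covering radius becomes $(\gamma'-\gamma)/(4\gamma')<1/2$, which is the range where the covering theorem applies, but this is a minor technical point you would easily add.

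The substantive difference is in the first ingredient. The paper does \emph{not} use Maurey sparsification plus counting. Instead it proves $\fat_{\alpha}(\dlh_{\Ceil{\gamma}})\le C'\fat_{c'\alpha}(\fH)/\alpha^{2}$ via a Rademacher-complexity sandwich: if $d$ points are $\alpha$-shattered by $\dlh$ then the empirical Rademacher complexity on those points is at least $\alpha$; but the Rademacher complexity of $\dlh$ equals that of $\fH$ (convex hulls leave it unchanged), and Dudley's entropy integral combined with an $L_2$ covering bound in terms of $\fat$ gives an upper bound of $\alpha/2+O\bigl(\sqrt{\fat_{c\alpha}(\fH)/d}\,\bigr)$. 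Combining yields $d=O(\fat_{c\alpha}(\fH)/\alpha^{2})$ with \emph{no} logarithmic slack.

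Your Maurey route, by contrast, produces the extra logs you yourself flag as the main obstacle: with $T=\Theta(\alpha^{-2}\log d)$ and a Sauer--Shelah count $2^{d}\le(d/\alpha)^{T\cdot\fat_{c\alpha}(\fH)}$, solving gives $d=O\bigl(\fat_{c\alpha}(\fH)\,\alpha^{-2}\log^{2}(\cdot)\bigr)$, not the clean $O(\fat_{c\alpha}(\fH)/\alpha^{2})$ you claim. Propagated through the covering step this inflates the $\ln^{3/2}$ factor to a higher power, so it does not prove the lemma as stated. Your outline would yield a qualitatively similar but strictly weaker bound; the paper's Rademacher/Dudley sandwich is precisely the device that eliminates those logarithms, and you would need to adopt it (or supply a log-free counting variant, which does not appear to be standard) to match the stated result.
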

With \cref{lem:marginboundbeforelevelset} and \cref{lem:convexfatinfinitycover} stated, we now prove \cref{lem:marginbound}.
\begin{proof}[Proof of \cref{lem:marginbound}]
Let $ C, \hat{C} \geq 1 $ and $ \hat{c} $ be the universal constants from \cref{lem:convexfatinfinitycover} and $ c \geq1 $ the universal constant from \cref{lem:marginboundbeforelevelset}.
Moreover, let
\begin{align}
    \Delta_{\gamma}
    = \gamma' - \gamma
    .
\end{align}
In the following, we will show that with probability at least $ 1-\delta $ over $ \rS $ it holds that: For all $ v\in \dlh $
\begin{align}
    \ls_{\cD}^{\gamma}(v)
    &\leq \ls_{\rS}^{\gamma'}(v)
        + c \bbbbigl[
            \sqrt{
                \ls_{\rS}^{\gamma'}(v)\Par[\bbbbig]{\frac{\ln{\left(1/\delta \right)}}{m}
                + \frac{3C\hat{C}\fat_{\hat{c}\Delta_{\gamma}}(\cH)\Ln^{3/2}{\Par[\bbig]{\frac{2\Delta_{\gamma}^{2}m}{\hat{C}\fat_{\hat{c}\Delta_{\gamma}}(\cH)} \frac{8\gamma'}{\Delta_{\gamma}}}}}{\Delta_{\gamma}^{2}m}}
            }
            \\&\qquad+ \frac{2\ln{\left(1/\delta \right)}}{m}
            + \frac{6C\hat{C}\fat_{\hat{c}\Delta_{\gamma}}(\cH)\Ln^{3/2}{\left(\frac{2\Delta_{\gamma}^{2}m}{\hat{C}\fat_{\hat{c}\Delta_{\gamma}}(\cH)} \frac{8\gamma'}{\Delta_{\gamma}}\right)}}{\Delta_{\gamma}^{2}m}
    \bbbbigr]
    .
\end{align}
We notice that we may assume that $ m\geq \frac{C\hat{C}\fat_{\hat{c}\Delta_{\gamma}}(\cH)}{\Delta_{\gamma}^{2}}$, since else the right hand side of above is greater than $ 1 $ and the left hand side is at most $ 1$.
Furthermore, we may also assume that $ m $ is so large that $ \frac{1}{m} \cdot \frac{C\hat{C}\fat_{\hat{c}\Delta_{\gamma}}(\cH)}{\Delta_{\gamma}^{2}}\Ln^{3/2}{\left(\frac{2\Delta_{\gamma}^{2}m}{\hat{C}\fat_{\hat{c}\Delta_{\gamma}}(\cH)} \frac{8\gamma'}{\Delta_{\gamma}}\right)} < 1$, where $ m\geq \frac{C\hat{C}\fat_{\hat{c}\Delta_{\gamma}}(\cH)}{\Delta_{\gamma}^{2}}$, and $ C\geq1 $ implies that the $\Ln^{3/2}$ term is equal to $\ln^{3/2}$.
Now let
$N = \exp\left( \frac{C\hat{C}\fat_{\hat{c}\Delta_{\gamma}}(\cH)}{\Delta_{\gamma}^{2}}\ln^{3/2}{\left(\frac{2\Delta_{\gamma}^{2}m}{\hat{C}\fat_{\hat{c}\Delta_{\gamma}}(\cH)} \frac{8\gamma'}{\Delta_{\gamma}}\right)}\right)\geq 1$
(by the just argued size of $ m $)
and define $\tau_{i} = i \frac{\ln{\left(N \right)}}{m}$ for $i \in I = \{ 1,\ldots,\left\lfloor\frac{m}{\ln{\left(N \right)}}\right\rfloor,\frac{m}{\ln{\left(N \right)}}\}$.
Noticing that the above conditions on $ m $ imply that $ \frac{m}{\ln{\left(N \right)}}\geq1 $,
we have that $|I| \leq \left\lfloor\frac{m}{\ln{\left(N \right)}}\right\rfloor+2\leq 3\frac{m}{\ln{\left(N \right)}}$.
Furthermore, let $\delta' = \delta\ln{\left(N \right)}/(3Nm)$.
Now for each $ i \in I $ we invoke \cref{lem:marginbound} with $ \tau_{i}$
and get by the union bound that with probability at least $ 1-|I|\sup_{X\in\cX^{2m}}|N_{\infty}(X,\dlh_{\Ceil{2\gamma'}},\frac{\Delta_{\gamma}}{2})|\delta' $ over $ \rS \sim \cD^{m} $ it holds for all $ i\in I $ and all $ v\in \dlh $ that
\begin{gather}
    \ls_{\rS }^{\gamma'}(v) > \tau_{i}
    \shortintertext{or}
    \ls_{\cD}^{\gamma}(v) \leq \tau_{i}+c\left(\sqrt{\tau_{i}\frac{\ln{\left(e/\delta' \right)}}{m}}+\frac{\ln{\left(e/\delta' \right)}}{m}\right)
    .
\end{gather}
Now by \cref{lem:convexfatinfinitycover} we have
\begin{align}
    \sup_{X\in\cX^{2m}}|N_{\infty}(X,\dlh_{\Ceil{2\gamma'}},\frac{\Delta_{\gamma}}{2})|
    &\le \exp{\left( \frac{C\hat{C}\fat_{\hat{c}\Delta_{\gamma}}(\cH)}{\Delta_{\gamma}^{2}}\ln^{3/2}{\left(\frac{2\Delta_{\gamma}^{2}m}{\hat{C}\fat_{\hat{c}\Delta_{\gamma}}(\cH)} \frac{8\gamma'}{\Delta_{\gamma}}\right)} \right)}
    \\&= N
    ,
\end{align}
and we concluded earlier that $ |I|\leq \frac{3m}{\ln{\left(N \right)}} $ thus by $ \delta'=\delta\ln{\left(N \right)}/(3Nm)$, we get that
\begin{align}
    |I|\sup_{X\in\cX^{2m}}|N_{\infty}(X,\dlh_{\Ceil{2\gamma'}},\frac{\Delta_{\gamma}}{2})|\delta'\leq \delta,
\end{align}
whereby we conclude that probability at least $ 1-\delta $ over $ \rS \sim \cD^{m} $ it holds for all $ i\in I $ and all $ v\in \dlh $ that:
\begin{gather}
    \ls_{\rS }^{\gamma'}(v)> \tau_{i}
    \shortintertext{or}
    \ls_{\cD}^{\gamma}(v) \leq \tau_{i}+c\left(\sqrt{\tau_{i}\frac{\ln{\left(e/\delta' \right)}}{m}}+\frac{\ln{\left(e/\delta' \right)}}{m}\right).
\end{gather}
Now on this event, we notice that since $ \ls_{\rS}^{\gamma'}(v)\in[0,1] $ for any $ v $ and $ \cup_{i\in I } [\tau_{i},\tau_{i}]=[\ln{\left(N \right)}/m,1] $ it must be the case that for any $ v\in \dlh $ there exists an largest $ i\in I $ such that $\tau_{i-1}\leq \ls_{\rS}^{\gamma'}\leq\tau_{i}$, with $ \tau_{0}=0$.
Now for this $ i $ it must be the case that
\begin{align}
    \ls_{\cD}^{\gamma}(v) \leq \tau_{i}+c\left(\sqrt{\tau_{i}\frac{\ln{\left(e/\delta' \right)}}{m}}+\frac{\ln{\left(e/\delta' \right)}}{m}\right).
\end{align}
and since $ \tau_{i}\leq \tau_{i-1}+\frac{\ln{\left(N \right)}}{m}\leq \ls_{\rS}^{\gamma'}(v)+\frac{\ln{\left(N \right)}}{m}$, the above implies that
\begin{align}
    \ls_{\cD}^{\gamma}(v)
    &\leq \tau_{i}+c\left(\sqrt{\tau_{i}\frac{\ln{\left(e/\delta' \right)}}{m}}+\frac{\ln{\left(e/\delta' \right)}}{m}\right)
    \\&\leq \ls_{\rS}^{\gamma'}(v)+\frac{\ln{\left(N \right)}}{m} +c\left(\sqrt{\ls_{\rS}^{\gamma'}(v)\frac{\ln{\left(e/\delta' \right)}}{m}}+\frac{\sqrt{\ln{\left(e/\delta' \right)}\ln{\left(N \right)}}}{m}+\frac{\ln{\left(e/\delta' \right)}}{m}\right)
    \\&\leq \ls_{\rS}^{\gamma'}(v)+c\left(\sqrt{\ls_{\rS}^{\gamma'}(v)\frac{\ln{\left(e/\delta' \right)}}{m}}+2\frac{\ln{\left(N \right)}+ \ln{\left(e/\delta' \right)}}{m}\right)
    ,
    \label{eq:marginboundreal1}
\end{align}
where the second inequality follows from $ \tau_{i-1}\leq \ls_{\rS}^{\gamma'}(v)+\frac{\ln{\left(N \right)}}{m}$, and by $ \sqrt{a+b}\leq \sqrt{a}+\sqrt{b} $ for $ a,b>0 $ and the third by $ \sqrt{a\cdot b}\leq a+b $ for $ a,b>0 $ and $ c\geq1$.
Furthermore, as $ \delta'=\delta\ln{\left(N \right)}/(3Nm), $ and $ N= \exp{\left( \frac{C\hat{C}\fat_{\hat{c}\Delta_{\gamma}}(\cH)}{\Delta_{\gamma}^{2}}\ln^{3/2}{\left(\frac{2\Delta_{\gamma}^{2}m}{\hat{C}\fat_{\hat{c}\Delta_{\gamma}}(\cH)} \frac{8\gamma'}{\Delta_{\gamma}}\right)} \right)}$ we get that
\begin{align}
    \frac{\ln{\left(N \right)}+\ln{\left(e/\delta' \right)}}{m}
    &= \frac{\ln{\left(3/\delta \right)}+\ln{\left(em/\ln{\left(N \right)} \right)}+2\ln{\left(N \right)}}{m}
    \\&= \frac{\ln{\left(3/\delta \right)}
    +
    \ln{\left(\frac{{\Delta_{\gamma}^{2}}em}{ C\hat{C}\fat_{\hat{c}\Delta_{\gamma}}(\cH)\ln^{3/2}{\left(\frac{2\Delta_{\gamma}^{2}m}{\hat{C}\fat_{\hat{c}\Delta_{\gamma}}(\cH)} \frac{8\gamma'}{\Delta_{\gamma}}\right)}}\right)}
    +2\ln{\left(N \right)}}{m}
    \\
    &\leq
    \frac{\ln{\left(3/\delta \right)}
    +
    \Ln{\left(\frac{{\Delta_{\gamma}^{2}}em}{ C\hat{C}\fat_{\hat{c}\Delta_{\gamma}}(\cH)}\right)}
    +
    2\frac{C\hat{C}\fat_{\hat{c}\Delta_{\gamma}}(\cH)}{\Delta_{\gamma}^{2}}\Ln^{3/2}{\left(\frac{2\Delta_{\gamma}^{2}m}{\hat{C}\fat_{\hat{c}\Delta_{\gamma}}(\cH)} \frac{8\gamma'}{\Delta_{\gamma}}\right)}}{m}
    \\
    &\leq
    \frac{\ln{\left(3/\delta \right)}}{m}
    +
    \frac{3C\hat{C}\fat_{\hat{c}\Delta_{\gamma}}(\cH)\Ln^{3/2}{\left(\frac{2\Delta_{\gamma}^{2}m}{\hat{C}\fat_{\hat{c}\Delta_{\gamma}}(\cH)} \frac{8\gamma'}{\Delta_{\gamma}}\right)}}{\Delta_{\gamma}^{2}m},
    \label{eq:marginboundreal2}
\end{align}
where the first inequality follows $ \ln\leq\Ln$, and by us considering the case where $ m $ is such that $ \ln^{3/2}{\left(\frac{2\Delta_{\gamma}^{2}m}{\hat{C}\fat_{\hat{c}\Delta_{\gamma}}(\cH)} \frac{8\gamma'}{\Delta_{\gamma}}\right)}\geq1$, and $ \Ln $ being an increasing function and the definition of $ N $ , the last inequality follows by $ \frac{16\gamma'}{\Delta_{\gamma}}\geq 1$, $\frac{C\hat{C}\fat_{\hat{c}\Delta_{\gamma}}(\cH)}{ \left(\Delta_{\gamma}\right)^{2}}\geq1 $ and $ C\geq1 $
We now give the proof of \cref{lem:marginboundbeforelevelset} and \cref{lem:convexfatinfinitycover}, where we start with the former.

Now plugging the upper bound on $ \ln{\left(e/\delta' \right)}/m $ of \cref{eq:marginboundreal2} into \cref{eq:marginboundreal1} we conclude that
\begin{align}
    \ls_{\cD}^{\gamma}(v)
    &\leq \ls_{\rS}^{\gamma'}(v)+c\left(\sqrt{\ls_{\rS}^{\gamma'}(v)\frac{\ln{\left(e/\delta' \right)}}{m}}+2\frac{\ln{\left(N \right)}+ \ln{\left(e/\delta' \right)}}{m}\right)
    \\&\leq \ls_{\rS}^{\gamma'}(v)
        + c \bbbbigl(\sqrt{\ls_{\rS}^{\gamma'}(v)\left(\frac{\ln{\left(3/\delta \right)}}{m}
        + \frac{3C\hat{C}\fat_{\hat{c}\Delta_{\gamma}}(\cH)\Ln^{3/2}{\left(\frac{2\Delta_{\gamma}^{2}m}{\hat{C}\fat_{\hat{c}\Delta_{\gamma}}(\cH)} \frac{8\gamma'}{\Delta_{\gamma}}\right)}}{\Delta_{\gamma}^{2}m}\right)}
    \\&\quad+ \frac{2\ln{\left(3/\delta \right)}}{m}
        +\frac{6C\hat{C}\fat_{\hat{c}\Delta_{\gamma}}(\cH)\Ln^{3/2}{\left(\frac{2\Delta_{\gamma}^{2}m}{\hat{C}\fat_{\hat{c}\Delta_{\gamma}}(\cH)} \frac{8\gamma'}{\Delta_{\gamma}}\right)}}{\Delta_{\gamma}^{2}m}
    \bbbbigr)
    .
\end{align}
\end{proof}

\begin{proof}[Proof of \cref{lem:marginboundbeforelevelset}]
    In the following we consider the event $ \exists v\in \dlh $ such that
    \begin{gather}
        \ls_{\rS }^{\gamma'}(v)
        \leq \tau
    \shortintertext{and}
        \ls_{\cD}^{\gamma}(v)
        > \tau+c\left(\sqrt{\tau\frac{\ln(e/\delta)}{m}}+\frac{\ln(e/\delta)}{m}\right)
        ,
    \end{gather}
    and show that this happens with probability at most $ \delta$.
    Let $ \beta=\sqrt{\tau\ln{\left(e/\delta \right)/m}}+\ln(e/\delta)/m $ and $ E=\{ \exists v\in \dlh: \ls_{\rS }^{\gamma'}(v)\leq \tau, \ls_{\cD}^{\gamma}>\tau+c\beta\} $ denote the above event.
    We notice that if $ \frac{c\ln(e/\delta)}{m}\geq 1 $ then the above holds with probability at most $ 0 $, since $ \ls_{\cD}^{\gamma}(v)\leq 1 $ for any $ v\in \dlh$.
    Thus, we from now on consider the case that $ \frac{c\ln(e/\delta)}{m}<1$.

    \begin{observation}\label{obs:sqrtincreasing}
    In what follows we will use that for $ a>0 $ we have that the function $ x-\sqrt{ax} $ in $ x $ is increasing for $ x\geq a/4 $, since it has derivative $ 1-\frac{a}{2\sqrt{ax}}$.
    \end{observation}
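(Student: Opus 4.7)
The plan is to verify the claim via a direct derivative computation, as hinted in the observation's own statement. First, I would fix $a>0$ and define $f(x) = x - \sqrt{ax}$ on the domain $x \ge 0$ (with $f$ clearly differentiable for $x>0$). Differentiating term-by-term using the chain rule on $\sqrt{ax} = (ax)^{1/2}$ gives
\begin{align}
  f'(x) = 1 - \frac{a}{2\sqrt{ax}},
\end{align}
which matches the expression stated in the observation.

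Next, I would show $f'(x) \ge 0$ precisely for $x \ge a/4$. Rearranging the inequality $1 - \frac{a}{2\sqrt{ax}} \ge 0$ (and using $a>0$, $x>0$ so that $\sqrt{ax}>0$) yields the equivalent chain $2\sqrt{ax} \ge a$, then $4ax \ge a^2$, i.e., $x \ge a/4$. Thus $f'(x) \ge 0$ on $[a/4, \infty)$, with strict inequality for $x > a/4$.

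Finally, I would invoke the standard calculus fact that a continuous function whose derivative is non-negative on an interval is non-decreasing there (and the boundary point $x=a/4$ is included by continuity of $f$ at $a/4$). This gives the desired monotonicity of $x - \sqrt{ax}$ on $[a/4, \infty)$. There is no real obstacle here; the only thing to be slightly careful about is that the derivative formula is valid only for $x>0$, but since $a/4>0$ we are safely away from the singularity of $\sqrt{ax}$'s derivative at zero.
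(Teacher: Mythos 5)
Your proposal is correct and matches the paper's own justification, which is simply the inline derivative computation $1-\frac{a}{2\sqrt{ax}}\ge 0 \iff x\ge a/4$ stated within the observation itself. You merely spell out the routine details (validity of the derivative for $x>0$ and the standard non-negative-derivative-implies-monotone step), which is fine.
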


    We now show that
    \begin{align}
    \p_{\rS \sim \cD^{m}}(E)\leq (1-\delta/e)\p_{\rS,\rS' \sim \cD^{m}}\left(\exists v \in \dlh: \ls_{\rS }^{\gamma'}(v)\leq \tau, \ls_{\rS'}^{\gamma}(v)
    \geq \tau+c\beta/2\right).
    \end{align}

    Let $ S$ be a realization of $ \rS $ in $ E $ and let $ v\in \dlh$ be a hypothesis realizing the above condition of the event $ E $.
    Since $ v $ is now fixed we conclude by the multiplicative Chernoff bound and $ \ls_{\cD}^{\gamma}(v)\geq c\frac{\ln(e/\delta)}{m} $ for outcomes implying that $ \frac{2\ln(e/\delta)}{\ls_{\cD}^{\gamma}(v)m}<1 $, since $ c\geq 4$, we have that
    \begin{align}
        \p_{\rS' \sim \cD^{m}}\left(\ls_{\rS'}^{\gamma}(v)\leq (1-\sqrt{\frac{2\ln(e/\delta)}{\ls_{\cD}^{\gamma}(v)m}})\ls_{\cD}^{\gamma}(v)\right) \leq \delta/e.
    \end{align}
    Thus with probability at least $ 1-\delta/e $ we have that
    \begin{align}
    \ls_{\rS'}^{\gamma}(v)\geq \ls_{\cD}^{\gamma}(v)-\sqrt{\frac{\ls_{\cD}^{\gamma}(v)2\ln(e/\delta)}{m}}
    \end{align}
    Now using that $ \ls_{\cD}^{\gamma}(v)\geq \tau+c\left(\sqrt{\tau\frac{\ln(e/\delta)}{m}}+\frac{\ln(e/\delta)}{m}\right) $ it follows from \cref{obs:sqrtincreasing} with $ a=\frac{2\ln(e/\delta)}{m} $ and $ x=\ls_{\cD}^{\gamma}(v) $ and $ c\geq 1/2 $ that,
    \begin{align}
        \ls_{\rS'}^{\gamma}(v)
        &\geq \ls_{\cD}^{\gamma}(v)-\sqrt{\frac{\ls_{\cD}^{\gamma}(v)2\ln(e/\delta)}{m}}
        \\&\geq \tau+c\left(\sqrt{\tau\frac{\ln(e/\delta)}{m}}+\frac{\ln(e/\delta)}{m}\right)-
        \sqrt{\frac{\left(\tau+c\left(\sqrt{\tau\frac{\ln(e/\delta)}{m}}+\frac{\ln(e/\delta)}{m}\right)\right)2\ln(e/\delta)}{m}}
        .
        \label{eq:marginbound1}
    \end{align}
    Now using that $ c\geq 1, $ $ a+b+\sqrt{ab}\leq 2(a+b) $ and $ \sqrt{a+b}\leq \sqrt{a}+\sqrt{b} $ we get that the second term in the above is at most
    \begin{align}
        \sqrt{\frac{\left(\tau+c\left(\sqrt{\tau\frac{\ln(e/\delta)}{m}}+\frac{\ln(e/\delta)}{m}\right)\right)2\ln(e/\delta)}{m}}
        &\leq \sqrt{c\frac{\left(\tau+\sqrt{\tau\frac{\ln(e/\delta)}{m}}+\frac{\ln(e/\delta)}{m}\right)2\ln(e/\delta)}{m}}
        \\&\leq \sqrt{2c\frac{\left(\tau+\frac{\ln(e/\delta)}{m}\right)2\ln(e/\delta)}{m}}
        \\&\leq \sqrt{c}2\left(\sqrt{\frac{\tau\ln(e/\delta)}{m}}
            + \frac{\ln(e/\delta)}{m}\right)
        .
    \end{align}
    Thus, we conclude that \cref{eq:marginbound1} is lower bounded by
    \begin{align}
        \ls_{\rS'}^{\gamma}(v)
        &\geq\tau+(c-\sqrt{c}2)\left(\sqrt{\frac{\tau\ln(e/\delta)}{m}}
            +\frac{\ln(e/\delta)}{m}\right)
        \\&\geq \tau+c/2\left(\sqrt{\frac{\tau\ln(e/\delta)}{m}}
            +\frac{\ln(e/\delta)}{m}\right)
        \\&= \tau+c\beta/2
        ,
    \end{align}
    where the last inequality follows by $ c \geq 164$ and $ c-\sqrt{c}2-c/2\geq 0 $ for $ c >16$.
    Thus we conclude by the law of total probability that
    \begin{align}
    &\p_{\rS,\rS' \sim \cD^{m}}\left(\exists v \in \dlh: \ls_{\rS}^{\gamma'}(v)\leq \tau, \ls_{\rS'}^{\gamma}(v)
        \geq \tau+c\beta/2\right)
    \\&\geq
    \p_{\rS,\rS' \sim \cD^{m}}\left(\exists v \in \dlh:\ls_{\rS}^{\gamma'}(v) \leq \tau, \ls_{\rS'}^{\gamma}(v)
        \geq \tau+c\beta/2\Big|E\right)\p_{\rS \sim \cD^{m}}\left(E\right)
    \\&\geq(1-\frac{\delta}{e})\p_{\rS \sim \cD^{m}}\left(E\right).
    \end{align}
    We now show that the term in the first line of the above inequalities is at most
    \begin{gather}
        \sup_{X\in\cX^{2m}} |N_{\infty}(X,\dlh_{\Ceil{2\gamma'}},\frac{\gamma'-\gamma}{2})|\delta/e,
        \shortintertext{which implies that}
        \p_{\rS \sim \cD^{m}}\left(E\right)\leq |N_{\infty}(X,\dlh_{\Ceil{2\gamma'}},\frac{\gamma'-\gamma}{2})|\delta,
        \shortintertext{and would conclude the proof}.
    \end{gather}

    To this end we notice that since $ \rS,\rS' \sim \cD^{m}$ are \iid samples we may view them as drawn in the following way:
    First we draw $ \tilde{\rS} \sim \cD^{2m}$, and then $ \rS $ is formed by drawing $ m $ times without replacement from $ \tilde{\rS}$, and $ \rS' $ is set equal to the remaining elements in $ \tilde{\rS} $, $ \rS'=\tilde{\rS}\backslash\rS$.
    We will write drawing $ \rS $ and $ \rS' $ from $ \tilde{\rS} $ as $ \rS,\rS' \sim \tilde{\rS}$.
    We then have that
    \begin{align}
        &\p_{\rS,\rS' \sim \cD^{m}}\left(\exists v \in \dlh: \ls_{\rS}^{\gamma'}(v)\leq \tau, \ls_{\rS'}^{\gamma}(v) \geq \tau+c\beta/2\right)
        \\&\quad= \e_{\tilde{\rS} \sim \cD^{2m}}\left[\p_{\rS,\rS' \sim \tilde{\rS}}\left(\exists v \in \dlh: \ls_{\rS}^{\gamma'}(v)\leq \tau, \ls_{\rS'}^{\gamma}(v) \geq \tau+c\beta/2\right)\right]
        \\&\quad\leq \sup_{Z\in(\cX\times \{\pm 1, \})^{2m} }\p_{\rS,\rS' \sim Z}\left(\exists v \in \dlh: \ls_{\rS}^{\gamma'}(v)\leq \tau, \ls_{\rS'}^{\gamma}(v) \geq \tau+c\beta/2\right)
        .
    \end{align}

    We now show that for any $ Z\in \left(\cX\times\{\pm 1\} \right)^{2m} $ the probability over $ \rS,\rS' \sim Z $ in the last line of the above is at most $ |N_{\infty}(X,\dlh_{\Ceil{2\gamma'}},\frac{\gamma'-\gamma}{2})|\delta/e$, as claimed, which would conclude the proof.
    To this end let now $ Z=(X,Y)\in \left(\cX\times\{\pm 1\} \right)^{2m}$, where $ X\in \cX^{2m} $ are the points in $ Z $ and $ Y\in \{\pm 1\}^{m} $ the labels in $ Z$.
    We recall that for $ v\in \dlh $
    \begin{align}
        v_{\left\lceil\alpha\right\rceil}(x)=\begin{cases}
            \alpha &\text{ if } v(x)\geq \alpha\\
            v(x) &\text{ if } -\alpha< v(x)<\alpha\\
            -\alpha &\text{ if } v(x)\leq -\alpha
        \end{cases}
    \end{align}
    Furthermore, we notice that for $ 0\leq\alpha<\alpha'<1 $, $ v\in\dlh $ and $ (x,y) $ such that $ v(x)y> \alpha' $ then we also have that $ v_{\left\lceil2\alpha'\right\rceil}(x)y> \alpha $ and $ (x,y) $ such that $ v(x)y\leq \alpha $ then we also have that $ v_{\left\lceil2\alpha'\right\rceil}(x)y\leq \alpha $.
    Thus, since $ 0\leq \gamma<\gamma'<1 $ and $ \dlh_{\Ceil{2\gamma'}}=\{v': v'=v_{\Ceil{2\gamma'}}, v\in \dlh \} $ we conclude that
    \begin{align}
        &\p_{\rS,\rS' \sim Z}\left(\exists v \in \dlh: \ls_{\rS}^{\gamma'}(v)\leq \tau, \ls_{\rS'}^{\gamma}(v) \geq \tau+c\beta/2\right)
        \\&\quad\leq \p_{\rS,\rS' \sim Z}\left(\exists v \in \dlh_{\Ceil{2\gamma'}}: \ls_{\rS}^{\gamma'}(v)\leq \tau, \ls_{\rS'}^{\gamma}(v) \geq \tau+c\beta/2\right)
        .
    \end{align}
    Let $ N_{\infty}= N_{\infty}(X,\dlh_{\left\lceil2\gamma'\right\rceil},\frac{\gamma'-\gamma}{2}) $ be a $ \frac{\gamma'-\gamma}{2} $-cover for $ \dlh_{\Ceil{2\gamma'}}$, in infinity norm on $ X $ i.e., $ \forall v\in \dlh_{\Ceil{2\gamma'}} $ there exists $ v'\in N_{\infty} $ such that $ \max_{x\in X}|v(x)-v'(x)|\leq \frac{\gamma'-\gamma}{2}$.
    We now notice that for $ v\in \dlh_{\Ceil{2\gamma'}} $, $ v'\in N_{\infty}$ the closest element in $ N_{\infty}$ to $ v $ in infinity norm and $ (x,y)\in Z $ be such that $ v(x)y>\gamma' $ then $ v'(x)y=v(x)y+(v'(x)-v(x))y\geq v(x)y-\frac{\gamma'-\gamma}{2}>\frac{\gamma'+\gamma}{2}$.
    Furthermore, for $ (x,y)\in Z $ such that $ v(x)y\leq \gamma $ we have that $ v'(x)y=v(x)y+(v'(x)-v(x))y\leq v(x)y+\frac{\gamma'-\gamma}{2} \leq \frac{\gamma'+\gamma}{2}$.
    Thus, we conclude that
    \begin{align}
        &\p_{\rS,\rS' \sim Z}\left(\exists v \in \dlh_{\Ceil{2\gamma'}}: \ls_{\rS}^{\gamma'}(v)\leq \tau, \ls_{\rS'}^{\gamma}(v)
        \geq \tau+c\beta/2\right)
        \\
        &\leq
        \p_{\rS,\rS' \sim Z}\left(\exists v \in N_{\infty}: \ls_{\rS}^{\frac{\gamma'+\gamma}{2}}(v)\leq \tau, \ls_{\rS'}^{\frac{\gamma'+\gamma}{2}}(v)
        \geq \tau+c\beta/2\right)
        \\
        &\leq
        \sum_{v \in N_{\infty}}
        \p_{\rS,\rS' \sim Z}\left( \ls_{\rS}^{\frac{\gamma'+\gamma}{2}}(v)\leq \tau, \ls_{\rS'}^{\frac{\gamma'+\gamma}{2}}(v)
        \geq \tau+c/2\left(\sqrt{\frac{\tau\ln(e/\delta)}{m}}
        +
        \frac{\ln(e/\delta)}{m}\right)\right).
    \end{align}
    where the last inequality follows by the union bound over $ N_{\infty}$, and the definition of $ \beta$.
    We now show that each term in the sum over $ v \in N_{\infty} $ is bounded by $ \delta/e $ which would give that the above is at most $ |N_{\infty}(X,\dlh_{\Ceil{2\gamma'}},\frac{\gamma'-\gamma}{2})|\delta/e$, as claimed earlier and conclude the proof.

    To this end consider $ v \in N_{\infty}$, and let $ \mu= (\ls_{\rS}^{\frac{\gamma'+\gamma}{2}}(v)+\ls_{\rS'}^{\frac{\gamma'+\gamma}{2}}(v))/2$, i.e., the fraction of points in $ X $ that has less than $ (\gamma'+\gamma)/2 $-margin.
    We first notice that for $ v $ in the above sum such that
    $$ 2\mu=\ls_{\rS}^{\frac{\gamma'+\gamma}{2}}(v)+\ls_{\rS'}^{\frac{\gamma'+\gamma}{2}}(v)< \tau+c/2\left(\sqrt{\frac{\tau\ln(e/\delta)}{m}}
    +
    \frac{\ln(e/\delta)}{m}\right)$$,
    the term is $ 0$.
    Thus, we consider for now $ v $ being such that $2\mu= \ls_{\rS}^{\frac{\gamma'+\gamma}{2}}(v)+\ls_{\rS'}^{\frac{\gamma'+\gamma}{2}}(v)\geq \tau+c/2\left(\sqrt{\frac{\tau\ln(e/\delta)}{m}}
    +
    \frac{\ln(e/\delta)}{m}\right)$.
    We notice that $ \mu $ is the expectation of $ \ls_{\rS}^{\frac{\gamma'+\gamma}{2}}(v)$.
    Furthermore, since $ \ls_{\rS}^{\frac{\gamma'+\gamma}{2}} $ is samples without replacement from $ [ \ind\{v(x)y\leq \frac{\gamma'+\gamma}{2} \} ]_{(x,y)\in Z} $ it follows by the multiplicative Chernoff bound without replacement \citep[Section 6]{Chernoffadditive} and $ \mu\geq \frac{c\ln(e/\delta)}{4m}> \frac{2\ln(e/\delta)}{m} $ (since $ c\geq 64 $ ) that,
    \begin{align}
        \Prob*{\ls_{\rS}^{\frac{\gamma'+\gamma}{2}}(v) \leq \Par[\bbbig]{1-\sqrt{\frac{2\ln(e/\delta)}{\mu m}}} \cdot \mu}
        \leq \frac{\delta}{e}
        .
    \end{align}
    Thus, we conclude that with probability at least $ 1-\delta/e $ we have that,
    \begin{align}
        \ls_{\rS}^{\frac{\gamma'+\gamma}{2}}(v)\geq \mu-\sqrt{\frac{2\mu\ln(e/\delta)}{m}}
        \label{eq:marginbound2}
    \end{align}
    which since $ \mu=\left(\ls_{\rS}^{\frac{\gamma'+\gamma}{2}}(v) +\ls_{\rS'}^{\frac{\gamma'+\gamma}{2}}(v)\right)/2$ and that $ \sqrt{a+b} \leq \sqrt{a}+\sqrt{b}$ gives that
    \begin{gather}
        \ls_{\rS}^{\frac{\gamma'+\gamma}{2}}(v)
        \geq \left(\ls_{\rS}^{\frac{\gamma'+\gamma}{2}}(v) +\ls_{\rS'}^{\frac{\gamma'+\gamma}{2}}(v)\right)/2 - \sqrt{\frac{1}{m} \ls_{\rS}^{\frac{\gamma'+\gamma}{2}}(v)\ln(e/\delta)} - \sqrt{\frac{1}{m} \ls_{\rS'}^{\frac{\gamma'+\gamma}{2}}(v)\ln(e/\delta)}
        ,
    \shortintertext{so}
        \ls_{\rS}^{\frac{\gamma'+\gamma}{2}}(v)+\sqrt{\frac{4}{m} \ls_{\rS}^{\frac{\gamma'+\gamma}{2}}(v)\ln(e/\delta)}
        \geq \ls_{\rS'}^{\frac{\gamma'+\gamma}{2}}(v) -\sqrt{\frac{4}{m} \ls_{\rS'}^{\frac{\gamma'+\gamma}{2}}(v)\ln(e/\delta)}
        .
    \end{gather}

    We now show that for outcomes of $ \rS $ and $ \rS' $ such that $ \ls_{\rS}^{\frac{\gamma'+\gamma}{2}}(v)\leq \tau $ and $ \ls_{\rS'}^{\frac{\gamma'+\gamma}{2}}(v) \geq \tau+c/2\left(\sqrt{\frac{\tau\ln(e/\delta)}{m}} +\frac{\ln(e/\delta)}{m}\right)$ it holds that
    \begin{align}
        \ls_{\rS}^{\frac{\gamma'+\gamma}{2}}(v)+\sqrt{\frac{4}{m} \ls_{\rS}^{\frac{\gamma'+\gamma}{2}}(v)\ln(e/\delta)}
        < \ls_{\rS'}^{\frac{\gamma'+\gamma}{2}}(v) -\sqrt{\frac{4}{m} \ls_{\rS'}^{\frac{\gamma'+\gamma}{2}}(v)\ln(e/\delta)},
    \end{align}
    which combined with the conclusion below \cref{eq:marginbound2} implies that $ \rS $ and $ \rS' $ such that $ \ls_{\rS}^{\frac{\gamma'+\gamma}{2}}(v)\leq \tau $ and $ \ls_{\rS'}^{\frac{\gamma'+\gamma}{2}}(v) \geq \tau+c/2\left(\sqrt{\frac{\tau\ln(e/\delta)}{m}} +\frac{\ln(e/\delta)}{m}\right)$ happens with probability at most $ \delta/e $ concluding the proof.

    Thus consider outcomes $ S,S' $ of $ \rS $ and $ \rS' $ such that $ \ls_{S}^{\frac{\gamma'+\gamma}{2}}(v)\leq \tau $ and $ \ls_{S'}^{\frac{\gamma'+\gamma}{2}}(v) \geq \tau+c/2\left(\sqrt{\frac{\tau\ln(e/\delta)}{m}} +\frac{\ln(e/\delta)}{m}\right)$.
    We first notice that since for $ a>0, $ $ x+\sqrt{ax} $ is increasing in $ x $ we have that
    \begin{align}
        \ls_{S}^{\frac{\gamma'+\gamma}{2}}(v)+\sqrt{\frac{4}{m} \ls_{S}^{\frac{\gamma'+\gamma}{2}}(v)\ln(e/\delta)}
        \leq \tau+ \sqrt{\frac{4}{m} \tau\ln(e/\delta)}
        .
        \label{eq:marginbound5}
    \end{align}
    Furthermore, since by \cref{obs:sqrtincreasing} we have that $ x-\sqrt{ax} $ is increasing for $ x\geq a/4 $, which $ x=\ls_{\rS'}^{\frac{\gamma'+\gamma}{2}}(v) $ and $ a=\frac{4\ln(e/\delta)}{m} $ and $ \ls_{\rS'}^{\frac{\gamma'+\gamma}{2}}(v) \geq \tau+c/2\left(\sqrt{\frac{\tau\ln(e/\delta)}{m}} +\frac{\ln(e/\delta)}{m}\right)$, $ c\geq 32 $ we conclude that
    \begin{align}
        &\ls_{\rS'}^{\frac{\gamma'+\gamma}{2}}(v) -\sqrt{\frac{4}{m} \ls_{\rS'}^{\frac{\gamma'+\gamma}{2}}(v)\ln(e/\delta)}
        \\&\quad\geq \tau + c/2\Par[\bbbig]{\sqrt{\frac{\tau\ln(e/\delta)}{m}} + \frac{\ln(e/\delta)}{m}}
            - \sqrt{\frac{4}{m} \Par[\bbbig]{\tau + c/2\Par[\bbbig]{\sqrt{\frac{\tau\ln(e/\delta)}{m}} + \frac{\ln(e/\delta)}{m}}}\ln(e/\delta)}
        .
        \label{eq:marginbound3}
    \end{align}
    Using that $ c/2\geq 1 $ and that $ a+b+\sqrt{ab}\leq 2(a+b) $ for $ a,b>0 $ and that $ \sqrt{a+b}\leq \sqrt{a} +\sqrt{b}$ we get that the last term in the above can be upper bounded by.
    \begin{align}
        \sqrt{\frac{4}{m} \Par[\bbbig]{\tau + \frac{c}{2} \Par[\bbbig]{\sqrt{\frac{\tau}{m} \ln\Par[\bbig]{\frac{e}{\delta}}} + \frac{\ln(e/\delta)}{m}}}\ln\Par[\bbig]{\frac{e}{\delta}}}
        &\leq \sqrt{\frac{2c}{m} \Par[\bbbig]{\tau + \sqrt{\frac{\tau\ln(e/\delta)}{m}} + \frac{\ln(e/\delta)}{m}}\ln\Par[\bbig]{\frac{e}{\delta}}}
        \\&\leq \sqrt{\frac{4c}{m} \Par[\bbbig]{\tau + \frac{\ln(e/\delta)}{m}}\ln\Par[\bbig]{\frac{e}{\delta}}}
        \\&\leq 2\sqrt{c}\Par[\bbbig]{\sqrt{\frac{\tau\ln(e/\delta)}{m}} + \frac{\ln(e/\delta)}{m}}
        .
    \end{align}
    Thus, plugging back into \cref{eq:marginbound3} we conclude that
    \begin{align}
        \ls_{\rS'}^{\frac{\gamma'+\gamma}{2}}(v) -\sqrt{\frac{4}{m} \ls_{\rS'}^{\frac{\gamma'+\gamma}{2}}(v) \ln\Par[\bbig]{\frac{e}{\delta}}}
        &\geq \tau+(c/2-2\sqrt{c})\left(\sqrt{\frac{\tau\ln(e/\delta)}{m}} +\frac{\ln(e/\delta)}{m}\right)
        \\&\geq \tau+c/4\left(\sqrt{\frac{\tau\ln(e/\delta)}{m}} +\frac{\ln(e/\delta)}{m}\right)
        ,
        \label{eq:marginbound4}
    \end{align}
    where the last inequality follows by $ c/2-2\sqrt{c} \geq c/4$ since $ c\geq 128 $.
    Furthermore since $ c/4\geq 32 $ we conclude by the above \cref{eq:marginbound4} and \cref{eq:marginbound5} that
    \begin{align}
        \ls_{\rS}^{\frac{\gamma'+\gamma}{2}}(v)+\sqrt{\frac{4}{m} \ls_{\rS}^{\frac{\gamma'+\gamma}{2}}(v) \ln\Par[\bbig]{\frac{e}{\delta}}}
        < \ls_{\rS'}^{\frac{\gamma'+\gamma}{2}}(v) -\sqrt{\frac{4}{m} \ls_{\rS'}^{\frac{\gamma'+\gamma}{2}}(v) \ln\Par[\bbig]{\frac{e}{\delta}}}
        .
    \end{align}
\end{proof}

We now move on to show \cref{lem:convexfatinfinitycover}.
For that, we need \citet[Theorem 4.4]{Rudelson2004CombinatoricsOR} bounding the minimal infinity cover of a function class in terms of its fat shattering dimension

\begin{lemma}\label{lem:rudelsonvershynin}
    There exists universal constants $C\geq 1 $ and $ c>0 $ such that: For a function class $ \cF $ and a point set $ X =\{ x_{1},\ldots,x_{m} \} $ of size $ m$, such that for any $ v\in \cF $ it holds that $ \sum_{x\in X}|v(x)|/m \leq 1$.
    Then for $ 0<\eps<1$, and $ 0<\alpha<1/2 $ it holds that for $ d=\fat_{c\eps\alpha}(\cF) $ that
    \begin{align}
        \ln(|N_{\infty}(X,\cF,\alpha)|)
        \leq C d\ln\Par[\bbig]{\frac{m}{d\alpha}}\ln^{\eps}\Par[\bbig]{\frac{2m}{d}}
        .
    \end{align}
\end{lemma}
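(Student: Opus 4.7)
The plan is to apply the Rudelson--Vershynin covering bound (Lemma~\ref{lem:rudelsonvershynin}) to a suitably rescaled version of the truncated convex hull $\dlh_{\Ceil{2\gamma'}}$, after first controlling its fat-shattering dimension in terms of $\fat_{\hat c(\gamma'-\gamma)}(\cH)$. Since Lemma~\ref{lem:rudelsonvershynin} requires $\sum_{x}|v(x)|/m\le 1$, I would work with $\cF = \dlh_{\Ceil{2\gamma'}}/(2\gamma')$, whose elements lie in $[-1,1]$: a sup-norm $\alpha$-cover of $\cF$ on $X$ pulls back to a $2\gamma'\alpha$-cover of $\dlh_{\Ceil{2\gamma'}}$, so I set $\alpha = (\gamma'-\gamma)/(4\gamma')\in(0,1/4)$ and choose the free parameter $\eps = 1/2$, which makes the $\ln^{\eps}$ factor in Lemma~\ref{lem:rudelsonvershynin} equal to $\ln^{1/2}$ and thus combine with the explicit $\ln$ factor to produce the required $\ln^{3/2}$.

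The central technical ingredient, and the main obstacle, is the auxiliary inequality
\[
\fat_{\alpha_{0}}(\dlh) \;\le\; \hat C\cdot \frac{\fat_{\hat c\,\alpha_{0}}(\cH)}{\alpha_{0}^{2}}
\]
for universal constants $\hat C,\hat c>0$, as previewed in the paper's overview. I would prove this by adapting \citet[Lemma~9]{LarsenR22}. If $D$ points are $\alpha_{0}$-shattered by $\dlh$ at levels $r_{1},\dots,r_{D}$, then each sign pattern $b\in\{-1,1\}^{D}$ is realized in expectation by a distribution on $\cH$; drawing $T = O(\log D/\alpha_{0}^{2})$ hypotheses i.i.d.\ from this distribution and averaging still shatters the $D$ points at margin $\alpha_{0}/2$, by Hoeffding plus a union bound over the shattered set. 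Since the $2^{D}$ averaged tuples are then pairwise $\Omega(\alpha_{0})$-separated in sup-norm on the shattered points, a dual Sauer--Shelah-type packing bound at scale $\Theta(\alpha_{0})$ (as in Alon--Ben-David--Cesa-Bianchi--Haussler) bounds $2^{D}$ from above by $(\mathrm{const}\cdot D/\fat_{\hat c\alpha_{0}}(\cH))^{T\fat_{\hat c\alpha_{0}}(\cH)}$. Taking logarithms and reabsorbing a residual $\log D$ factor into $\hat C$ yields the claimed $D = O(\fat_{\hat c\alpha_{0}}(\cH)/\alpha_{0}^{2})$.

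Granted this bound, the rest is bookkeeping. Truncation cannot increase fat-shattering: if $v\in\dlh_{\Ceil{2\gamma'}}$ witnesses $\beta$-shattering of a point at level $r$, the underlying pre-truncation element of $\dlh$ also does so, because truncation only pushes function values toward $[-2\gamma',2\gamma']$ and hence never reduces a signed gap. Combined with the scaling identity $\fat_{\beta}(\cF) = \fat_{2\gamma'\beta}(\dlh_{\Ceil{2\gamma'}})$, this gives
\[
d \;\coloneqq\; \fat_{c\eps\alpha}(\cF) \;=\; \fat_{c(\gamma'-\gamma)/4}(\dlh_{\Ceil{2\gamma'}}) \;\le\; \fat_{c(\gamma'-\gamma)/4}(\dlh).
\]
Applying the previous paragraph's inequality with $\alpha_{0} = c(\gamma'-\gamma)/4$ and absorbing constants yields $d \le \hat C\fat_{\hat c(\gamma'-\gamma)}(\cH)/(\gamma'-\gamma)^{2}$; the hypothesis $m\ge \hat C\fat_{\hat c(\gamma'-\gamma)}(\cH)/(\gamma'-\gamma)^{2}$ in the statement is precisely what ensures $m/(d\alpha)$ and $2m/d$ exceed $1$, so the logarithms stay nonnegative.

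Plugging everything into Lemma~\ref{lem:rudelsonvershynin} gives $C d\cdot \ln(m/(d\alpha))\cdot \ln^{1/2}(2m/d)$. With $\alpha = (\gamma'-\gamma)/(4\gamma')$, a direct calculation shows $m/(d\alpha)$ is a constant multiple of $\left((\gamma'-\gamma)^{2}m/\fat_{\hat c(\gamma'-\gamma)}(\cH)\right)\cdot \gamma'/(\gamma'-\gamma)$, and $2m/d$ is bounded by the same quantity (using $\gamma'/(\gamma'-\gamma)\ge 1$). Both logarithms are therefore dominated by $\ln$ of the argument in the target bound, producing the claimed $\ln^{3/2}$ factor after adjusting $C$. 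The hard part of the whole enterprise remains the fat-shattering inequality for $\dlh$: the naive sampling-plus-packing argument incurs a spurious $\log D$ factor, and trimming this down to the clean $\alpha^{-2}$ dependence requires the delicate combinatorial counting that \citet{LarsenR22} develop for the realizable boosting setting, adapted here to the margin (rather than $\pm1$) regime.
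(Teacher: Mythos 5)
There is a fundamental mismatch between what you prove and what the statement asks for. The statement to be established is the covering bound itself: that the $L_\infty$ covering number of a class $\cF$ (bounded in empirical $L_1$ norm) at scale $\alpha$ is controlled by $C\,d\ln(m/(d\alpha))\ln^{\eps}(2m/d)$ with $d=\fat_{c\eps\alpha}(\cF)$. Your proposal does not prove this; instead it \emph{assumes} it ("the plan is to apply the Rudelson--Vershynin covering bound (\cref{lem:rudelsonvershynin})...") and uses it as a black box to bound $\ln|N_{\infty}(X,\dlh_{\Ceil{2\gamma'}},\frac{\gamma'-\gamma}{2})|$ for the rescaled truncated convex hull. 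That derivation is essentially the paper's proof of \cref{lem:convexfatinfinitycover} (rescale by $1/(2\gamma')$, take $\eps=1/2$, $\alpha=(\gamma'-\gamma)/(4\gamma')$, and control $\fat$ of the truncated convex hull via a bound of the form $\fat_{\alpha}(\dlh_{\Ceil{\gamma}})\le \hat C\fat_{\hat c\alpha}(\cH)/\alpha^{2}$, i.e.\ \cref{lem:convexfatbound}), so you have written a proof sketch of a different, downstream lemma while leaving the assigned statement entirely untouched. Relative to the task, this is circular: the one thing you were asked to justify is the one thing you take for granted.

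For the record, the paper itself does not prove this lemma either: it is quoted as an external result, namely Theorem~4.4 of \citet{Rudelson2004CombinatoricsOR} (a special case of their combinatorial chaining bound on $L_\infty$ covering numbers via fat-shattering dimension), and a genuine proof would require that machinery, none of which appears in your proposal. As a side remark, even for the auxiliary inequality $\fat_{\alpha}(\dlh_{\Ceil{\gamma}})=\bigO(\fat_{c\alpha}(\cH)/\alpha^{2})$ your suggested route (sampling $T=O(\log D/\alpha^{2})$ hypotheses and invoking a dual Sauer--Shelah packing bound, à la \citet{LarsenR22}) differs from the paper's argument, which sandwiches the Rademacher complexity of $\dlh$ between the shattering lower bound $\alpha$ and a Dudley entropy-integral upper bound using an $L_2$ covering estimate (\cref{lem:cover-fat}), and as you yourself note, your version leaves a spurious $\log D$ factor that you do not actually remove.
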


Furthermore, to show \cref{lem:convexfatinfinitycover} we need the following lemma upper bounding the fat shattering dimension of convex combinations of a hypothesis class $ \cH$, truncated to $ \left\lceil\gamma\right\rceil$, by the fat shattering dimension of the hypothesis class $\cH$.

\begin{lemma}\label{lem:convexfatbound}
    There exists universal constants $C'\geq 1 $ and $ 1\geq c'>0 $ such that: For hypothesis class $ \cH \subseteq [-1,1]^{\cX}$, $ \gamma>0 $ and $ \alpha>0 $ we have that
    \begin{align}
        \fat_{\alpha}(\dlh_{\left\lceil\gamma\right\rceil})\leq \frac{C'\fat_{c'\alpha}(\cH)}{\alpha^{2}}.
    \end{align}
\end{lemma}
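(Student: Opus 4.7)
The plan is to pass through Rademacher complexity. First I would reduce to bounding $\fat_\alpha(\dlh)$ rather than $\fat_\alpha(\dlh_{\lceil\gamma\rceil})$: because every $v \in \dlh_{\lceil\gamma\rceil}$ takes values in $[-\gamma,\gamma]$, any usable witness $(x_i,r_i)$ at margin $\alpha$ must lie in $r_i \in [-\gamma+\alpha,\gamma-\alpha]$ (otherwise one side of the shattering is infeasible, so in particular we may assume $\alpha \leq \gamma$, else the lemma is trivial). Under this constraint, if $v = u_{\lceil\gamma\rceil}$ satisfies $v(x_i) \geq r_i + \alpha > -\gamma$, truncation rules out $u(x_i) < -\gamma$ (which would force $v(x_i) = -\gamma$), so $u(x_i) \geq v(x_i) \geq r_i + \alpha$; symmetrically $v(x_i) \leq r_i - \alpha < \gamma$ forces $u(x_i) \leq r_i - \alpha$. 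Hence $u$ itself shatters the same witness at margin $\alpha$, proving $\fat_\alpha(\dlh_{\lceil\gamma\rceil}) \leq \fat_\alpha(\dlh)$.

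Next, let $d = \fat_\alpha(\dlh)$ with shattering witness $x_1,\ldots,x_d,\ r_1,\ldots,r_d$, and for each $\sigma \in \{-1,1\}^d$ pick $u_\sigma \in \dlh$ with $\sigma_i u_\sigma(x_i) \geq \sigma_i r_i + \alpha$ for every $i$. Since $u \mapsto \sum_i \sigma_i u(x_i)$ is linear, its supremum over $\dlh$ coincides with its supremum over $\cH$. Averaging over uniform $\sigma \in \{-1,1\}^d$ and using $\mathbb{E}_\sigma \sum_i \sigma_i r_i = 0$ yields
\begin{align}
    \mathbb{E}_\sigma \sup_{h\in\cH} \frac{1}{d}\sum_{i=1}^d \sigma_i h(x_i) \;\geq\; \alpha,
\end{align}
so the empirical Rademacher complexity of $\cH$ on $x_1,\ldots,x_d$ is at least $\alpha$.

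The main obstacle is matching this with a logarithm-free upper bound of the form $\mathbb{E}_\sigma \sup_{h\in\cH} \frac{1}{d}\sum_i \sigma_i h(x_i) \leq C_1 \sqrt{\fat_{c_1\alpha}(\cH)/d}$. I would obtain it through a Mendelson-type self-bounding chaining: control the $\ell_\infty$-covers of $\cH$ at scales $\eta \geq c_1\alpha$ via the Alon--Ben-David--Cesa-Bianchi--Haussler bound (using $\fat_{c_1\eta}(\cH) \leq \fat_{c_1\alpha}(\cH)$) and truncate the chain at scale $c_1\alpha$, so that only a bounded number of scales contribute. A naive Dudley integral would still leave a $\mathrm{polylog}(1/\alpha)$ overhead; removing it requires a Talagrand-style generic-chaining refinement, analogous to the binary treatment in \citet[Lemma~9]{LarsenR22}. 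Combining the Rademacher lower bound with this upper bound and solving for $d$ gives $d \leq C_1^2 \fat_{c_1\alpha}(\cH)/\alpha^2$, which with the truncation reduction is the stated bound with $C' = C_1^2$ and $c' = c_1$.
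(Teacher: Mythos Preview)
Your first three steps---reducing $\dlh_{\lceil\gamma\rceil}$ to $\dlh$, lower-bounding the empirical Rademacher complexity by $\alpha$ via the shattering witness, and passing from $\dlh$ to $\cH$ by linearity of the supremum over convex hulls---match the paper's argument exactly. The divergence is in the Rademacher upper bound. You propose $\ell_\infty$-covers (Alon et al.) plus a Talagrand-style generic chaining to kill the log factors; the paper instead uses the truncated Dudley entropy integral with \emph{$L_2$-covers}, for which the Rudelson--Vershynin bound gives $\ln N_2(X,\cH,\nu) \le C\,\fat_{c\nu}(\cH)\ln(1/(c\nu))$ with only a single logarithm. Truncating the integral at $\eps = \alpha/8$ contributes an additive $4\eps = \alpha/2$, which is absorbed by the lower bound $\alpha$; the remaining integral is at most $\sqrt{C\,\fat_{c\alpha/8}(\cH)}\cdot c^{-1}\int_0^1\sqrt{\ln(1/x)}\,dx$, a constant since $\int_0^1\sqrt{\ln(1/x)}\,dx = \sqrt{\pi}/2 < 1$. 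So your claim that ``a naive Dudley integral would still leave a $\mathrm{polylog}(1/\alpha)$ overhead'' is wrong once one works in $L_2$ rather than $\ell_\infty$: no generic chaining is needed. Your route could plausibly be completed, but it invokes heavier machinery than necessary and leaves the hardest step (the real-valued analogue of \citet[Lemma~9]{LarsenR22}) as a citation rather than an argument.
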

We now show how the above two lemmas combined give \cref{lem:convexfatinfinitycover}.
\begin{proof}[Proof of \cref{lem:convexfatinfinitycover}]
    Let in the following $ C\geq 1$ and $ c>0 $ denote the universal constants of \cref{lem:rudelsonvershynin}, Furthermore let $ C'>1 $ and $ c'>0 $ denote the universal constants of \cref{lem:convexfatbound}, and lastly let $ \hat{c}=\frac{c'c}{4}$ and $ \hat{C}=\max(1,\frac{16C'}{c^{2}})$.

    We first consider the function class $ \dlh_{\Ceil{2\gamma'}}/(2\gamma')=\{f'=f/(2\gamma'):f\in \dlh_{\Ceil{2\gamma'}} \}$, i.e., the functions in $ \dlh_{\Ceil{2\gamma'}} $ scaled by $ 1/(2\gamma')$.
    We notice that the functions $ v\in \dlh_{\Ceil{2\gamma'}}/(2\gamma')$, has absolute value at most $ 1$, thus it if we consider a minimal $ \frac{\gamma'-\gamma}{4\gamma'} $-cover in infinity norm of $ \dlh_{\Ceil{2\gamma'}}/(2\gamma')$, denote it $N_{\infty}=N_{\infty}(X,\dlh_{\Ceil{2\gamma'}}/(2\gamma'),\frac{\gamma'-\gamma}{4\gamma'})$, i.e., for all $ v\in \dlh_{\Ceil{2\gamma'}}/(2\gamma') $ there exists $ \hat{v}\in N_{\infty} $ such that for
    \begin{align}
     \max_{x\in X}|v(x)-\hat{v}(x)|\leq \frac{\gamma'-\gamma}{4\gamma'}.
    \end{align}
    and any other cover with this property has size less than or equal to $ N_{\infty}$.
    We now notice that since for any $ v\in \dlh_{\Ceil{2\gamma'}} $ we have that $ v/(2\gamma')\in \dlh_{\Ceil{2\gamma'}}/(2\gamma') $, we have that there exists $\hat{v}\in N_{\infty} $ such that
    \begin{gather}
        \max_{x\in X}|v(x)/(2\gamma')-\hat{v}(x)|
        \leq \frac{\gamma'-\gamma}{4\gamma'}
        \shortintertext{which further implies that}
        \max_{x\in X}|v(x)-(2\gamma')\hat{v}(x)|
        \leq \frac{\gamma'-\gamma}{2}
        ,
    \end{gather}
    whereby we conclude that $ (2\gamma') N_{\infty}=\{ v'=(2\gamma')v:v\in N_{\infty} \}$, the functions in $ N_{\infty} $ scaled by $ (2\gamma')$, is a $ \frac{\gamma'-\gamma}{2} $- cover for $ \dlh_{\Ceil{2\gamma'}}$.
    Thus, if we can bound that size of $ N_{\infty} $ we also get an upper bound on the size of a minimal $ \frac{\gamma'-\gamma}{2} $- cover for $ \dlh_{\Ceil{2\gamma'}}$, where we denote such a minimal cover $ N_{\infty}(X,\dlh_{\Ceil{2\gamma'}},\frac{\gamma'-\gamma}{2})$.

    We notice that $ \fat_{\frac{c(\gamma'-\gamma)}{8\gamma'}}(\dlh_{\Ceil{2\gamma'}}/(2\gamma') ) = \fat_{\frac{c(\gamma'-\gamma)}{4}}(\dlh_{\Ceil{2\gamma'}})$, where we have used that for scalars $ a,b>0 $ and a function class $ \cF $ we have that $ \fat_{a}(b\cdot\cF)=\fat_{a/b}(\cF)$, where $ b\cdot \cF $ is the function class obtained from $ \cF $ by scaling all the functions in $ \cF $ by $ b$.
    Furthermore by \cref{lem:convexfatbound} we have that $ \fat_{\frac{c(\gamma'-\gamma)}{8\gamma'}}(\dlh_{\Ceil{2\gamma'}}/(2\gamma') ) =\fat_{\frac{c(\gamma'-\gamma)}{4}}(\dlh_{\Ceil{2\gamma'}})\leq \frac{C'16\fat_{\frac{c'c(\gamma'-\gamma)}{4}}(\cH)}{c^{2}(\gamma'-\gamma)^{2}}\leq \frac{\hat{C}\fat_{\hat{c}(\gamma'-\gamma)}(\cH)}{(\gamma'-\gamma)^{2}}\leq m$, by $ m\geq \frac{\hat{C}\fat_{\hat{c}(\gamma'-\gamma)}(\cH)}{(\gamma'-\gamma)^{2}}$, $ \hat{c}=\frac{c'c}{4}$ and $ \hat{C}=\max(1,\frac{16C'}{c^{2}})$ i.e., $ 1\leq m/ \fat_{\frac{c(\gamma'-\gamma)}{8\gamma'}}(\dlh_{\Ceil{2\gamma'}}/(2\gamma') ) $
 and we may thus invoke \cref{lem:rudelsonvershynin} with the function class $ \dlh_{\Ceil{2\gamma'}}/(2\gamma') $, $ \eps=1/2 $ and $ \alpha=\frac{\gamma'-\gamma}{4\gamma'} $ (which is less than $ 1/2 $) to get that
    \begin{align}
        &\ln{\left(| N_{\infty}(X,\dlh_{\Ceil{2\gamma'}},\frac{\gamma'-\gamma}{2})| \right)}
        \\&\quad\le \ln{\left(| N_{\infty}(X,\dlh_{\Ceil{2\gamma'}}/(2\gamma'),\frac{\gamma'-\gamma}{4\gamma'})| \right)}
        \\&\quad\le C\fat_{\frac{c(\gamma'-\gamma)}{8\gamma'}}(\dlh_{\Ceil{2\gamma'}}/(2\gamma') )
            \ln{\left(\frac{m}{\fat_{\frac{c(\gamma'-\gamma)}{8\gamma'}}(\dlh_{\Ceil{2\gamma'}}/(2\gamma') )}\frac{4\gamma'}{\gamma'-\gamma} \right)}
            \\&\quad\quad\cdot \ln^{1/2}{\left(\frac{2m}{\fat_{\frac{c(\gamma'-\gamma)}{8\gamma'}}(\dlh_{\Ceil{2\gamma'}}/(2\gamma') )} \right)}
        \\&\quad\le C\fat_{\frac{c(\gamma'-\gamma)}{8\gamma'}}(\dlh_{\Ceil{2\gamma'}}/(2\gamma'))
            \ln^{3/2}{\left(\frac{m}{\fat_{\frac{c(\gamma'-\gamma)}{8\gamma'}}(\dlh_{\Ceil{2\gamma'}}/(2\gamma') )} \frac{4\gamma'}{\gamma'-\gamma}\right)}
            \\&\quad\le C\fat_{\frac{c(\gamma'-\gamma)}{8\gamma'}}(\dlh_{\Ceil{2\gamma'}}/(2\gamma') )
        \ln^{3/2}{\left(\frac{m}{\fat_{\frac{c(\gamma'-\gamma)}{8\gamma'}}(\dlh_{\Ceil{2\gamma'}}/(2\gamma') )} \frac{8\gamma'}{\gamma'-\gamma}\right)}
    \end{align}
    where we in the second to last inequality have used that $ \frac{4\gamma'}{\gamma'-\gamma}\geq 2 $, and in the last we make an upper bound need in the following to argue for the monotonicity of a function.
    To this end consider a number $ a>0 $ and the function $ f(x)=x\ln^{3/2}(a/x)$, for $ a/e>x$.
    We notice that $ f $ has derivative $f'(x)= \frac{1}{2}\sqrt{\ln{\left(\frac{a}{x} \right)}}(2\ln{\left(\frac{a}{x} \right)}-3)$, which is non-negative when $ 2\ln{\left(\frac{a}{x} \right)}-3>0 $ our equivalently $ \frac{a}{\exp{\left(\frac{3}{2} \right)}}>x $, thus increasing for such values.
    Now consider $ a=m\frac{8\gamma'}{\gamma'-\gamma}$.
    We have that $ m\geq \frac{\hat{C}\fat_{\hat{c}(\gamma'-\gamma)}(\cH)}{(\gamma'-\gamma)^{2}}\geq \fat_{\frac{c(\gamma'-\gamma)}{8\gamma'}}(\dlh_{\Ceil{2\gamma'}}/(2\gamma') )$, implying that $ m\frac{8\gamma'}{ \exp{\left(\frac{3}{2} \right)} (\gamma'-\gamma)}>\frac{\hat{C}\fat_{\hat{c}(\gamma'-\gamma)}(\cH)}{(\gamma'-\gamma)^{2}}\geq \fat_{\frac{c(\gamma'-\gamma)}{8\gamma'}}(\dlh_{\Ceil{2\gamma'}}/(2\gamma') )$, since $ \frac{8\gamma'}{\gamma'-\gamma}> \exp{\left(\frac{3}{2} \right)}$.
    Thus using this observation, with the above argued monotonicity of $ x\ln^{3/2}{\left(a/x \right)}$ for $ \frac{a}{ \exp{\left(\frac{3}{2} \right)} }>x$, where $ a=m\frac{8\gamma'}{\gamma'-\gamma} $ and $ x= \fat_{\frac{c(\gamma'-\gamma)}{8\gamma'}}(\dlh_{\Ceil{2\gamma'}}/(2\gamma') )$ we conclude that
    \begin{align}
        &\ln{\left(| N_{\infty}(X,\dlh_{\Ceil{2\gamma'}},\frac{\gamma'-\gamma}{2})| \right)}
        \\&\quad\le C\fat_{\frac{c(\gamma'-\gamma)}{8\gamma'}}(\dlh_{\Ceil{2\gamma'}}/(2\gamma') )
        \ln^{3/2}{\left(\frac{m}{\fat_{\frac{c(\gamma'-\gamma)}{8\gamma'}}(\dlh_{\Ceil{2\gamma'}}/(2\gamma') )} \frac{8\gamma'}{\gamma'-\gamma}\right)}
        \\&\quad\le \frac{C\hat{C}\fat_{\hat{c}(\gamma'-\gamma)}(\cH)}{(\gamma'-\gamma)^{2}}\ln^{3/2}{\left(\frac{(\gamma'-\gamma)^{2}m}{\hat{C}\fat_{\hat{c}(\gamma'-\gamma)}(\cH)} \frac{8\gamma'}{\gamma'-\gamma}\right)}
        .
    \end{align}
\end{proof}

To prove~\cref{lem:convexfatbound}, we will drive a lower and a upper bound on the Rademacher complexity in terms of the fat shattering dimension of $ \dlh $ and $ \cH $ .
From this relation we can bound the fat shattering dimension of $ \dlh $ in terms of $ \cH$.
To the end of showing the upper bound we need the following two lemmas.
The first results gives a bound on the Rademacher complexity of a function class $ \cF$ in terms the size of a minimal $ \eps $-cover of $ \cF\subset\mathbb{R}^{\cX}$, over a point set $ S=\left\{  x_{1},\ldots,x_{m}\right\}$ in $ L_{2}$.
To this end we let $ N_{2}(S,\cF,\eps)$, denote the size of the smallest set of functions $ N\subseteq \mathbb{R}^{\cX} $ with the property that for any $ f\in \cF $, there exists $ f'\in N $ such that  $ \sqrt{\sum_{i=1}^{m}(f(x_{i})-f'(x_{i}))^{2}/m} \leq \eps$.

\begin{lemma}[Dudley’s Entropy Integral Bound. E.g., {\citet[Proposition~5.3]{patrick}}]\label{thm:Dudley}
    Let $\cF$ be a class of real-valued functions, $S = \{x_1,\ldots,x_m\}$ be a point set of $ m $ points, and $N_2(S,\cF,\eps)$ be the size of minimal $\epsilon$-cover of $\cF$.
    Assuming $\sup_{f\in\cF}\left(\frac{1}{m}\sum_{i=1}^m f^2(x_i)\right)^{1/2}\le c$, then we have
    \begin{align}
        \e_{\rsigma \sim \{\pm 1\} }\left[\frac{1}{m}\sup_{f\in \cF}\sum_{i=1}^{m}\rsigma_{i}f(x_{i})\right]\leq \inf_{\eps\in\left[0,c/2\right]}\left(4\eps+\frac{12}{\sqrt{m}}\int_{\eps}^{c/2}\sqrt{\ln{\left( |N_{2}(S,\cF,\nu )|\right)}} \ d\nu\right),
    \end{align}
\end{lemma}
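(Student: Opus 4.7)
The plan is to prove this via the classical \emph{chaining} argument, assembling a dyadic sequence of progressively finer $L_2(S)$-covers of $\cF$ and controlling the Rademacher average of the differences at each scale with the finite-class (Massart) lemma. Throughout, I regard $f \in \cF$ as the vector $(f(x_1), \ldots, f(x_m))/\sqrt{m}$, so that ``$L_2$-norm'' refers to $\left(\tfrac{1}{m}\sum_i f(x_i)^2\right)^{1/2}$, which is at most $c$ by hypothesis.

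First, I would fix the truncation scale $\eps \in [0, c/2]$ and define levels $\eps_k = c \cdot 2^{-k-1}$ for $k = 0, 1, 2, \ldots, K$, choosing $K$ minimal with $\eps_K \le \eps$. For each $k$, let $\cF_k$ be a minimal $\eps_k$-cover of $\cF$ in $L_2(S)$, of size $N_2(S,\cF,\eps_k)$, and for every $f \in \cF$ let $\pi_k(f) \in \cF_k$ denote a nearest center. Since $\eps_0 = c/2$ exceeds the $L_2$-radius, I can take $\cF_0 = \{0\}$, so $\pi_0(f) = 0$. The telescoping identity
\begin{align}
f \;=\; (f - \pi_K f) \;+\; \sum_{k=1}^{K} \bigl(\pi_k f - \pi_{k-1} f\bigr)
\end{align}
then decomposes the supremum into a tail term plus $K$ increment terms.

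Next, I would bound each piece. The tail contributes at most $\eps_K \le \eps$ by Cauchy--Schwarz (the $L_2$-norm of $f - \pi_K f$ is at most $\eps_K$, and $\sup_\sigma \tfrac{1}{m}\sum_i \sigma_i g(x_i)$ is at most the $L_2$-norm of $g$ for any \emph{fixed} $g$, and similarly for a supremum obtained by a data-dependent choice of $g$ via a Cauchy--Schwarz argument applied to the realized Rademacher signs). Each increment class $\Delta_k \coloneqq \Set{\pi_k f - \pi_{k-1} f : f \in \cF}$ has cardinality at most $N_2(S, \cF, \eps_k) \cdot N_2(S, \cF, \eps_{k-1})$, and every $g \in \Delta_k$ satisfies $\|g\|_{L_2(S)} \le \eps_k + \eps_{k-1} = 3\eps_k$. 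Applying Massart's finite-class Rademacher bound to $\Delta_k$ yields
\begin{align}
\Ev_\rsigma{\sup_{g \in \Delta_k} \tfrac{1}{m}\sum_{i=1}^m \rsigma_i g(x_i)}
\;\le\; 3\eps_k \sqrt{\tfrac{2 \ln\!\big(N_2(S,\cF,\eps_k)\,N_2(S,\cF,\eps_{k-1})\big)}{m}}
\;\le\; 6 \eps_k \sqrt{\tfrac{\ln N_2(S,\cF,\eps_k)}{m}},
\end{align}
where I used monotonicity of covering numbers to replace the product by the square of the smaller one.

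Finally, I would sum over $k=1,\ldots,K$, use linearity and sub-additivity of suprema to bound the Rademacher average of $\cF$ by the tail contribution plus the sum of increment bounds, and then recognize the geometric sum $\sum_{k=1}^{K} \eps_k \sqrt{\ln N_2(S,\cF,\eps_k)}$ as a Riemann sum for the integral $\int_{\eps}^{c/2} \sqrt{\ln N_2(S,\cF,\nu)}\,d\nu$, using that $\eps_k - \eps_{k+1} = \eps_k/2$ and that $\nu \mapsto \sqrt{\ln N_2(S,\cF,\nu)}$ is monotone non-increasing in $\nu$. After tracking multiplicative constants and collecting the tail term, this yields the stated bound $4\eps + \frac{12}{\sqrt{m}} \int_{\eps}^{c/2} \sqrt{\ln N_2(S,\cF,\nu)}\,d\nu$. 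Taking an infimum over $\eps \in [0, c/2]$ concludes. The main obstacle is purely bookkeeping: pinning down the constants $4$ and $12$ requires carefully aligning the Riemann-sum step width with the dyadic spacing of $\eps_k$ and handling the endpoint between $\eps_K$ and $\eps$. The structural argument is routine chaining.
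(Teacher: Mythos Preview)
The paper does not supply its own proof of this lemma; it is quoted as a known result (Proposition~5.3 of the cited reference) and used as a black box. Your chaining argument is the standard textbook route and is structurally sound.

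One small slip: you assert that $\eps_0 = c/2$ ``exceeds the $L_2$-radius'' and hence $\cF_0 = \{0\}$ suffices, but the hypothesis only gives radius $c$, not $c/2$, so a single point is a $c$-cover, not a $c/2$-cover. The fix is routine---either start the chain at $\eps_{-1} = c$ (where $\{0\}$ genuinely works) and absorb the extra link into the integral, or keep $\cF_0$ as a nontrivial $c/2$-cover and bound the base term $\Ev_{\rsigma}[\sup_{f_0 \in \cF_0} \tfrac{1}{m}\sum_i \rsigma_i f_0(x_i)]$ via Massart as well. Either way the constants still close to $4$ and $12$ after the Riemann-sum comparison, as you anticipated.
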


From the above lemma we see that having a bound on $ N_{2}(S,\cF,\eps)$, implies an upper bound on the Rademacher complexity.
To the end of bounding $ N_{2}(S,\cF,\eps)$, we present the following lemma (it is a special case of \citet[Corollary~5.4]{Rudelson2004CombinatoricsOR} with for instance $ p=2 $ and $ q=3$.).

\begin{lemma}\label{lem:cover-fat}
Let $\fF$ be a hypothesis set bounded in absolute value by 1.
Let $S=\{x_1,\ldots,x_m\}$ be a set of $m$ points.
There exists universal constants $C>0$ and $0<c\le1$ such that for any $0<\epsilon<1/2$, we have that
\begin{align}
    \ln{\left( |N_{2}(S,\fF,\eps )|\right)} \leq C\fat_{c\eps}(\fF)\ln{\left(1/(c\eps) \right)}\;.
\end{align}
\end{lemma}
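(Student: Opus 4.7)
The plan is to prove this via the Mendelson--Vershynin approach to $L_2$ covering numbers, which sharpens the original Alon--Ben-David--Cesa-Bianchi--Haussler argument by removing a logarithmic factor. First I would reduce from covering to packing numbers: it suffices to upper bound the maximum size $N$ of an $\eps$-separated set $\fG \subseteq \fF$ in the empirical $L_2$ distance, that is, with every distinct pair $f, g \in \fG$ satisfying $\sqrt{\sum_{i=1}^m (f(x_i)-g(x_i))^2/m} > \eps$.

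Next I would apply a probabilistic extraction argument. Draw a random subset $T \subseteq S$ of size $k = \Theta(\ln N / \eps^2)$. For each pair $(f,g)$ of distinct functions in $\fG$, Bernstein's inequality applied to the bounded random variables $(f(x) - g(x))^2 \in [0, 4]$ gives concentration of the empirical second moment on $T$ around its value on $S$. Union-bounding over the $\binom{N}{2}$ pairs and tuning the hidden constant in $k$, with positive probability every pair still satisfies $\sqrt{\sum_{x \in T}(f(x)-g(x))^2/k} \ge c_1 \eps$. Since $L_\infty \ge L_2$ on finite uniform measures, this in particular means the restrictions to $T$ of functions in $\fG$ are pairwise $c_1 \eps$-separated in $L_\infty(T)$.

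Then I would invoke the classical combinatorial $L_\infty$-packing bound in terms of the fat-shattering dimension (Alon--Ben-David--Cesa-Bianchi--Haussler, refined by Mendelson--Vershynin): the number of functions in $\fF$ that can be pairwise $\alpha$-separated in $L_\infty$ on a $k$-point set is at most $(c_2 k/d)^{d}$, where $d = \fat_{c_3 \alpha}(\fF)$. Applying this with $\alpha = c_1 \eps$ yields $N \le (c_2 k/d)^d$, and substituting $k = \Theta(\ln N/\eps^2)$ produces an implicit inequality in $N$ that resolves to $\ln N \le C \fat_{c\eps}(\fF) \ln(1/(c\eps))$, exactly the claim.

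The main obstacle is the probabilistic extraction step: a naive Hoeffding-based concentration would cost an extra $\ln(1/\eps)$ factor, so one must carefully exploit Bernstein's inequality together with the fact that the variance of $(f-g)^2$ is controlled by its mean. The tight interplay between the subsampling size $k$ and the combinatorial packing bound is what distinguishes the sharp $\ln(1/(c\eps))$ exponent here from the $\log^{3/2}$ or $\log^2$ factors appearing in weaker versions of this inequality.
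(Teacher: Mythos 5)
There is a genuine gap, and it sits exactly at the decisive step. Your final counting ingredient --- ``the number of functions pairwise $\alpha$-separated in $L_\infty$ on a $k$-point set is at most $(c_2 k/d)^d$ with $d=\fat_{c_3\alpha}(\fF)$'' --- is not a classical result, and as literally stated (with no $\alpha$-dependence in the base) it is false: on $k$ points the grid-valued class $\{0,2\alpha,4\alpha,\dots,1\}^{\{x_1,\dots,x_k\}}$ has $\fat_{c_3\alpha}=k$ and contains $(1/(2\alpha))^{k}$ functions that are pairwise $\alpha$-separated in $L_\infty$, which exceeds $c_2^k$ for small $\alpha$. What Alon--Ben-David--Cesa-Bianchi--Haussler actually give is a quasi-polynomial bound of the form $2\bigl(k\lceil 2/\alpha\rceil^{2}\bigr)^{\lceil d\log(ek/(d\alpha))\rceil}$, i.e.\ an extra logarithmic factor in the exponent. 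If you run your extraction argument (which is fine in itself: reduction to packing, Bernstein with variance controlled by the mean, union bound over pairs, $k=\Theta(\ln N/\eps^2)$) and then plug in the genuine ABCH bound, the implicit inequality resolves only to $\ln N \lesssim \fat_{c\eps}(\fF)\,\ln^{2}(1/\eps)$, not to the single-logarithm bound claimed in the lemma. A packing bound with a single logarithm at scale proportional to the separation is precisely the hard content of Mendelson--Vershynin and of Rudelson--Vershynin's Corollary~5.4, whose proof does not reduce $L_2$-separation to $L_\infty$-separation on a common subsample (that reduction is exactly where the pre-2002 arguments lost the extra $\log$), but instead uses a considerably more delicate tree/coordinate-extraction counting argument. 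So as written your sketch is circular: it assumes, under the label ``classical,'' a statement essentially equivalent to (indeed stronger than) the lemma being proved.

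For comparison, the paper does not prove this lemma at all: it imports it verbatim as a special case of Corollary~5.4 of Rudelson--Vershynin (with $p=2$, $q=3$). So the honest options are either to cite that result, as the paper does, or to accept the weaker $\fat_{c\eps}(\fF)\ln^{2}(1/\eps)$ bound that your extraction-plus-ABCH route genuinely delivers (which would propagate an extra logarithmic factor into \cref{lem:convexfatbound} and downstream bounds).
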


We now combine the above lemmas to derive an upper bound on the Rademacher complexity in terms of the fat shattering dimension of $ \dlh $ and $ \cH$.
Furthermore, using the definition of fat shattering dimension we also derive and lower bound on the Rademacher complexity.
Solving for the fat shattering dimension of $ \dlh $ in this relation give the claim of \cref{lem:convexfatbound}.
\begin{proof}[Proof of \cref{lem:convexfatbound}]
    \hfill \break
    \textbf{Shattering of $ \dlh_{\left\lceil\gamma\right\rceil} $ implies shattering of $ \dlh $:}
    We first recall the definition of $ \dlh_{\left\lceil\gamma\right\rceil} =\{v_{\left\lceil\gamma\right\rceil}:v\in \dlh \}$, where the operation $ (\cdot)_{\left\lceil\gamma\right\rceil} $ was
    \begin{align}
        v_{\left\lceil\gamma\right\rceil}(x)=\begin{cases}
            \gamma \text{ if } v(x)\geq \gamma\\
            v(x) \text{ if } -\gamma< v(x)<\gamma\\
            -\gamma \text{ if } v(x)\leq -\gamma
        \end{cases}
    \end{align}
    We notice that by this definition we always have that $\gamma>v_{\left\lceil\gamma\right\rceil}(x)$ implies $v_{\left\lceil\gamma\right\rceil} \geq v(x)$ and $-\gamma<v_{\left\lceil\gamma\right\rceil}(x)$ implies $v_{\left\lceil\gamma\right\rceil} \leq v(x)$.

    Now consider a sequence of points $ x_{1},\ldots,x_{d} $ and levels $ r_{1},\ldots,r_{d} $ which is $ \alpha $ shattered by $ \dlh_{\left\lceil\gamma\right\rceil} $, i.e., we have that for any $ b\in\{\pm 1\}^{d}, $ there exists $ v_{\left\lceil\gamma\right\rceil}\in \dlh_{\left\lceil\gamma\right\rceil} $, where $ v\in \dlh$, such that for $ i\in[d] $ it holds that
    \begin{align}
        v_{\left\lceil\gamma\right\rceil}(x_{i})\geq r_{i}+\alpha \text{ if } b_{i}=1
      \\v_{\left\lceil\gamma\right\rceil}(x_{i})\leq r_{i}-\alpha \text{ if } b_{i}=-1.
    \end{align}
    We notice that since $ v_{\left\lceil\gamma\right\rceil} $ only attains values in $ \left[-\gamma,\gamma\right] $ it must be the case that $ \alpha\leq \gamma $ for $ d $ not to be $ 0 $(and $ \alpha\leq 1 $ since $ \dlh_{\left\lceil\gamma\right\rceil} $ is bounded in absolute value by $ 1 $) where by the claim holds.
    Thus, we assume from now on that $ \alpha\leq \gamma $ and $ \alpha\leq 1 $ We further notice, again by $ v_{\left\lceil\gamma\right\rceil} $ attaining values in $ \left[-\gamma,\gamma\right]$, that it must be the case that $ r_{i}\in[\alpha-\gamma,\gamma-\alpha]$, stated equivalently that $-\gamma\leq r_{i}-\alpha,r_{i}+\alpha\leq \gamma$, otherwise no function $ v_{\left\lceil\gamma\right\rceil}\in \dlh_{\left\lceil\gamma\right\rceil} $ in can either be $ \alpha $ above or $ \alpha $ below $ r_{i} $ since in this case either $ r_{i}-\alpha <-\gamma $ or $ r_{i}+\alpha>\gamma $.

    Now for $ r_{i} \in[\alpha-\gamma,\gamma-\alpha]$ we notice that have that $-\gamma <r_{i}+\alpha $ and that $\gamma> r_{i}-\alpha $ thus since we earlier conclude that $\gamma>v_{\left\lceil\gamma\right\rceil}(x)$ implies $v_{\left\lceil\gamma\right\rceil} \geq v(x)$ and $-\gamma<v_{\left\lceil\gamma\right\rceil}(x)$ implies $v_{\left\lceil\gamma\right\rceil} \leq v(x)$, we get that if $ r_{i}-\alpha \geq v_{\left\lceil\gamma\right\rceil}(x_{i})$ then we also have that $r_{i} -\alpha\geq v(x_{i}) $, and if $ r_{i}+\alpha \leq v_{\left\lceil\gamma\right\rceil}(x)$ then we also have that $ r_{i}+\alpha \leq v(x_{i})$.
    This shows, by $ x_{1},\ldots,x_{d} $ and $ r_{1},\ldots,r_{d}$, $ \alpha $ shattering $ \dlh_{\left\lceil\gamma\right\rceil}, $ that $ x_{1},\ldots,x_{d} $ and $ r_{1},\ldots,r_{d}$, is also $ \alpha $-shattering $ \dlh$.

    \paragraph{\textbf{Bounds on the Rademacher complexity of $ \dlh $ in terms of $ \fat_{}(\cH)$, $ d $, and $ \alpha $:}}
    Since $ \dlh $ is $ \alpha $-shattered by $ x_{1},\ldots,x_{d} $ and $ r_{1},\ldots,r_{d}$
    this implies that for any $ b\in\{\pm 1\} $ we have that there exists $v\in\dlh $ such that $ b_{i}(v(x_{i})-r_{i})\geq \alpha $.
    Thus, we conclude by the expectation of $ \e_{\rsigma \sim \{\pm 1\} }\left[\sum_{i=1}^{d}\rsigma_{i}r_{i}\right] =0$ that the Rademacher complexity of $ \dlh $ on $ x_{1},\ldots,x_{d} $ can be lower bounded as follows
    \begin{align}
        \e_{\rsigma \sim \{\pm 1\} }\left[\sup_{v\in \dlh}\sum_{i=1}^{d}\rsigma_{i}v(x_{i})/d\right]
        =\e_{\rsigma \sim \{\pm 1\} }\left[\sup_{v\in \dlh}\sum_{i=1}^{d}\rsigma_{i}(v(x_{i})-r_{i})/d\right]
        \label{eq:convexfatbound2}
     \geq \alpha
    \end{align}
    Furthermore, we notice that the Rademacher complexity of $ \dlh $, is the same as the Rademacher complexity of $ \cH $, since $ \dlh $ are convex combinations of hypothesis in $ \cH $.
    To see this consider a realization $ \sigma $ of $ \rsigma$, then for any $ v\in \dlh $, which can be written as $ v=\sum_{h\in \cH} \alpha_{h}h $ where $ \sum_{h\in \cH}\alpha_{h}=1$ and $ \alpha_{h}\geq0 $ we have that $ \sum_{i=1}^{d}\sigma_{i}v(x_{i})=\sum_{h\in\cH}\alpha_{h}\sum_{i=1}^{d}\sigma_{i}h(x_{i})\leq \sup_{h\in \cH}\sum_{i=1}^{d}\sigma_{i}h(x_{i})$, where the last inequality follows by $ \sum_{h\in\cH} \alpha_{h}=1$.
    The opposite direction of the inequality follows from $ \cH\subseteq \dlh $ thus we have that
    \begin{align}
        \e_{\rsigma \sim \{\pm 1\} }\left[\sup_{v\in \dlh}\sum_{i=1}^{d}\rsigma_{i}v(x_{i})\right] =\e_{\rsigma \sim \{\pm 1\} }\left[\sup_{v\in \cH}\sum_{i=1}^{d}\rsigma_{i}v(x_{i})\right].
        \label{eq:convexfatbound3}
    \end{align}
    Now since $ v\in\dlh $ is bounded in absolute value by $ 1 $ it follows by
    Applying~\cref{thm:Dudley} yields
    \begin{align}
        \e_{\rsigma \sim \{\pm 1\} }\left[\sup_{v\in \cH}\sum_{i=1}^{d}\rsigma_{i}v(x_{i})\right]\leq \inf_{\eps\in\left[0,1/2\right]}\left(4\eps+\frac{12}{\sqrt{d}}\int_{\eps}^{1/2}\sqrt{\ln{\left( |N_{2}(X,\cH,\nu )|\right)}} \ d\nu\right)
        ,
        \label{eq:convexfatbound1}
    \end{align}
    where $ |N_{2}(X,\cH,\nu )| $ is the size of a minimal $ \left|\left| \cdot \right|\right|_{2} $-cover of $ \cH $ on $ X $ that is, for any $ h\in \cH$, there exists an $ \hat{h}\in N_{2}(X,\cH,\nu ) $ such that $ \sqrt{\sum_{i=1}^{d} (h(x_{i})-\hat{h}(x_{i}))^2/d}\leq \nu$.
    Now applying~\cref{lem:cover-fat} yields $\ln{\left( |N_{2}(X,\cH,\eps )|\right)} \leq C\fat_{c\eps}(\cH)\ln{\left(1/(c\eps) \right)}$ for universal constants $ C \geq1$ and $ 1\geq c >0$.
    Now setting $ \eps=\alpha/8 $ in \cref{eq:convexfatbound1} (recall we are in the case that $ \alpha\leq1 $) and plugging in the above bound on $ \ln{\left( |N_{2}(X,\cH,\eps )|\right)} $
    \begin{align}
        \e_{\rsigma \sim \{\pm 1\} }\left[\sup_{v\in \cH}\sum_{i=1}^{d}\rsigma_{i}v(x_{i})\right]&\leq \alpha/2+\frac{12}{\sqrt{d}}\int_{\alpha/8}^{1/2}\sqrt{C\fat_{c\eps }(\cH)\ln{\left(1/(c\eps) \right)}} \ d\eps,
        \\&\leq \alpha/2+\frac{12\sqrt{C\fat_{c\alpha/8}(\cH)}}{c\sqrt{d}}\int_{c\alpha/8}^{c/2}\sqrt{\ln{\left(1/\eps' \right)}} \ d\eps'
        \\&\leq \alpha/2+\frac{12\sqrt{C\fat_{c\alpha/8}(\cH)}}{c\sqrt{d}}
        \label{eq:convexfatbound4}
    \end{align}
    where the second inequality follows from integration by substitution with $ c\eps=\eps' $, and the last by $ \int_{0}^{1}\sqrt{\ln{\left(1/\eps' \right)}} \ d \eps'\leq 1$.
    Now combining the upper bound of \cref{eq:convexfatbound4}, lower bound of \cref{eq:convexfatbound2} and the relation \cref{eq:convexfatbound3} we get that
    \begin{align}
     \alpha\leq \alpha/2+\frac{12\sqrt{C\fat_{c\alpha/8}(\cH)}}{c\sqrt{d}}
    \end{align}
    which implies that
    \begin{align}
     d\leq \frac{24^{2}C\fat_{c\alpha/8}(\cH)}{c^{2}\alpha^{2}}.
    \end{align}
    Thus, we conclude that $ \dlh_{\left\lceil\gamma\right\rceil} $ can not be $ \alpha $-shattered by a point set $ X=\{ x_{1},\ldots,x_{d} \} $ and level sets $r_{1},\ldots,r_{d} $ of more than $ \frac{24^{2}C\fat_{c\alpha/8}(\cH)}{c^{2}\alpha^{2}} $ points, and we can conclude that $ \fat_{\alpha}(\dlh_{\left\lceil\gamma\right\rceil}) \leq\frac{24^{2}C\fat_{c\alpha/8}(\cH)}{c^{2}\alpha^{2}} $ and setting $ C'= \max\left(1,\frac{24^{2}C}{c^{2}}\right)$ and $ c'=c/8 $ this concludes the proof.
\end{proof}

  \section{Agnostic boosting proof}
In this section we give the proof of \cref{thm:main-errorversion}, which impies \cref{thm:main}.
To keep notation concise in the following we will let $\theta=\gamma-\eps_{0}$.
\begin{lemma}[\protect{\citet[Lemma~B.10]{understandingmachinelearning}}]\label{lem:theben}
    Let
    $v \in [-1,1]^{\InputSpace}$,
    $\gamma' \in [-1, 1]$,
    $\delta \in (0, 1)$,
    $m \in \N$,
    and
    $\dD \in \DistribOver(\InputSpace \times \{\pm 1\})$.
    Then,
    \begin{align}
      \Prob_{\rsS}*{\Loss_{\dD}^{\gamma'}(v) \le \Loss_{\rsS}^{\gamma'}(v) + \sqrt{\frac{2\Loss_{\rsS}^{\gamma'}(v) \ln\Par{1/\delta}}{m}} + \frac{4 \ln\Par{1/\delta}}{m}} \ge 1-\delta
      ,
    \shortintertext{and}
      \Prob_{\rsS}*{\Loss_{\rsS}^{\gamma'}(v) \le \Loss_{\dD}^{\gamma'}(v) + \sqrt{\frac{2\Loss_{\dD}^{\gamma'}(v) \ln\Par{1/\delta}}{3m}} + \frac{2 \ln\Par{1/\delta}}{m}} \ge 1-\delta
      .
    \end{align}
\end{lemma}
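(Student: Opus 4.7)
The statement is a Bernstein-type concentration inequality for Bernoulli sums, with the characteristic twist that one direction is expressed with the empirical mean on the right-hand side and the other with the true mean. The proof is standard, so my plan is to (i) reduce to a concentration problem for Bernoullis, (ii) handle the empirical-exceeds-true direction by a direct Bernstein application, and (iii) handle the true-exceeds-empirical direction by Bernstein followed by a quadratic-inversion trick.

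\emph{Reduction.} For fixed $v$ and $\gamma'$, define $\rZ_i := \indicator{\ry_i v(\rx_i) \le \gamma'}$, where $(\rx_i, \ry_i)$ is the $i$-th example of $\rsS \sim \dD^m$. Since the examples are i.i.d., the $\rZ_i$ are i.i.d.\ Bernoullis with parameter $p := \Loss_{\dD}^{\gamma'}(v) = \Ev{\rZ_1}$, and $\Loss_{\rsS}^{\gamma'}(v) = \frac{1}{m}\sum_{i=1}^m \rZ_i$. Both displayed inequalities therefore reduce to one-sided concentration bounds for this Bernoulli mean, and all hypothesis-class structure has been wiped away.

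\emph{Second inequality (empirical bounded by true).} Apply Bernstein's inequality to $\rS := \sum_{i=1}^m (\rZ_i - p)$, using the almost-sure bound $|\rZ_i - p| \le 1$ and the variance bound $\Var{\rZ_i} = p(1-p) \le p$, to obtain $\Pr[\rS \ge m t] \le \exp(-m t^2/(2p + 2t/3))$ for all $t \ge 0$. Setting the right-hand side equal to $\delta$ produces a quadratic in $t$, which I would solve explicitly and then split using $\sqrt{a+b} \le \sqrt{a}+\sqrt{b}$. This isolates a term of order $\sqrt{p\ln(1/\delta)/m}$ and a lower-order term of order $\ln(1/\delta)/m$, giving the second inequality directly after an arithmetic check of the universal constants $2$, $3$, $2$ stated in the lemma.

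\emph{First inequality and the main obstacle.} The symmetric Bernstein bound applied to $-\rS$ yields, with probability at least $1-\delta$, an inequality of the form $p - \Loss_{\rsS}^{\gamma'}(v) \le \sqrt{c_1 p \ln(1/\delta)/m} + c_2 \ln(1/\delta)/m$. The obstacle is that this depends on the unknown $p$, whereas the lemma asserts a bound purely in terms of the observable $\Loss_{\rsS}^{\gamma'}(v)$. The key step is a quadratic-inversion trick: substitute $u := \sqrt{p}$, so the inequality reads $u^2 - a u - b \le 0$ with $a := \sqrt{c_1\ln(1/\delta)/m}$ and $b := \Loss_{\rsS}^{\gamma'}(v) + c_2 \ln(1/\delta)/m$. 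Taking the larger root gives $u \le (a + \sqrt{a^2 + 4b})/2$; squaring and using $\sqrt{a^2+4b} \le a + 2\sqrt{b}$ yields $p \le \Loss_{\rsS}^{\gamma'}(v) + \sqrt{c_1 \Loss_{\rsS}^{\gamma'}(v)\ln(1/\delta)/m} + C\ln(1/\delta)/m$ for a (mildly enlarged) constant $C$. Matching the stated constants $2$ and $4$ is then a direct bookkeeping exercise. Apart from this quadratic inversion, which is the only ingredient that is not purely mechanical, the proof is a textbook application of Bernstein.
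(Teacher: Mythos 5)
First, note that the paper does not prove this lemma at all: it is imported as Lemma~B.10 of the cited textbook, so there is no internal argument to compare against, and a self-contained Bernstein derivation like yours is a reasonable substitute. Your treatment of the first inequality is sound: the lower-tail Bernstein bound (variance $\le p$, range $\le 1$, with $p \coloneqq \Loss_{\dD}^{\gamma'}(v)$) gives $p \le \Loss_{\rsS}^{\gamma'}(v) + \sqrt{2p\ln(1/\delta)/m} + \tfrac{2\ln(1/\delta)}{3m}$ with probability at least $1-\delta$, and your quadratic inversion with $\sqrt{a^2+4b}\le a+2\sqrt{b}$ turns this into $p \le \Loss_{\rsS}^{\gamma'}(v) + \sqrt{2\Loss_{\rsS}^{\gamma'}(v)\ln(1/\delta)/m} + \bigl(2+\tfrac{2}{3}+\tfrac{2}{\sqrt{3}}\bigr)\tfrac{\ln(1/\delta)}{m}$; since $2+\tfrac{2}{3}+\tfrac{2}{\sqrt{3}}\approx 3.82\le 4$, the stated constants $2$ and $4$ do come out of the bookkeeping.

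The gap is in your second step. Bernstein with the same variance bound yields, after solving the quadratic in $t$ and splitting the square root, $\Loss_{\rsS}^{\gamma'}(v) \le p + \sqrt{2p\ln(1/\delta)/m} + \tfrac{2\ln(1/\delta)}{3m}$: the factor $3$ sits under the additive term, not under the square root. The inequality as printed, with $\sqrt{2p\ln(1/\delta)/(3m)}$, is stronger by a factor $\sqrt{3}$ in its leading term, so the promised "arithmetic check of the constants $2,3,2$" cannot close; and no refinement of Bernstein will close it, because the printed form is false whenever $p(1-p)m \gg \ln(1/\delta)$ and $p < 2/3$: by the CLT the $\delta$-quantile of $\Loss_{\rsS}^{\gamma'}(v)-p$ is about $z_\delta\sqrt{p(1-p)/m}$, which exceeds $\sqrt{2p\ln(1/\delta)/(3m)}+2\ln(1/\delta)/m$ (concretely, for $m=10^6$, $p=0.01$, $\delta=10^{-3}$ the printed threshold is about $2.3$ standard deviations, whose exceedance probability is roughly $10^{-2}$, not $10^{-3}$). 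In other words, the statement you were asked to prove contains a transcription slip --- the standard Bernstein corollary, which is also all the paper's later proofs need up to universal constants, reads $\sqrt{2p\ln(1/\delta)/m}+\tfrac{2\ln(1/\delta)}{3m}$ --- and your argument proves exactly that corrected form. You should state and prove that version explicitly rather than asserting that the printed constants follow.
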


\begin{lemma}\label{lem:firststep}
  There exists universal constants $C' \ge 1$ and $\hat{c}$ such that:
  Letting $d = \fat_{\hat{c}(\gamma-\eps_{0})/16}$,
  after the \textbf{for} loop starting at \cref{line:main:B1for},
  with probability at least $1 - \frac{2\delta}{10}$ over $\rS_{1}$ and randomness used in \cref{alg:adaboost},
  $\cB_{1}$ contains a voting classifier $v_{g}$ such that
  \begin{align}
    \Loss_{\cD_{\fs}}^{\theta/16}(v_{g})
    \le \frac{3C'}{m} \cdot \Br*{\frac{16^2d}{\theta^{2}} \cdot \Ln^{3/2}\Par[\bbig]{\frac{2\theta^{2}m}{3 \cdot 16^2 d}} + \ln\frac{10e}{\delta}}
    .
    \label{eq:torealizable}
  \end{align}
\end{lemma}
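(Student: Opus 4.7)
The plan is to realize the first bullet of the roadmap in \cref{sec:overview}: use the enumeration in the \textbf{for} loop of \cref{alg:main} to single out the execution of \cref{alg:adaboost} on the ``correct'' relabeling $\rS_{1,\fs}$, invoke the realizable-style margin guarantee (\cref{lem:adaboostvari}) to show its output has zero empirical $\theta/8$-margin loss on $\rS_{1,\fs}$, and then convert this into a $\theta/16$-margin bound on the population $\cD_\fs$ via \cref{lem:marginbound}. Throughout, $\theta = \gamma - \eps_0$.

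First, the loop at \cref{line:main:B1for} enumerates every $\cY \in \{-1,1\}^{m/3}$, so in particular the labeling $\cY^\star = (\fs((\rS_1)_1), \ldots, \fs((\rS_1)_{m/3}))$ is reached, producing the relabeled sample $\rS_{1,\fs}$ and placing the corresponding output $\rv_g$ of \cref{alg:adaboost} into $\cB_1$. For any distribution $\cQ$ supported on $\rS_{1,\fs}$, the fact that $\fs \in \cF$ gives $\sup_{f \in \cF} \Corr_\cQ(f) = 1$; hence the weak learning guarantee of $\cW$ produces, with probability at least $1-\delta_0$, a hypothesis of correlation at least $\gamma \cdot 1 - \eps_0 = \theta$. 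This matches the hypothesis of \cref{lem:adaboostvari} with $\gamma' = \theta$. One checks that $T = \Ceil*{32 m \ln(em)}$ satisfies $T \ge \Ceil*{32 \ln(em/3)/\theta^2}$ whenever $\theta^2 \ge 1/m$; outside this regime, $\beta/m \ge 1$ so the right-hand side of \cref{eq:torealizable} already exceeds $1$ and the statement is trivial. Therefore \cref{lem:adaboostvari} applies with confidence parameter $\delta/10$ and yields that, with probability at least $1 - \delta/10$ over the internal randomness of \cref{alg:adaboost}, $\Loss_{\rS_{1,\fs}}^{\theta/8}(\rv_g) = 0$.

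Next I would invoke \cref{lem:marginbound} on the distribution $\cD_\fs$ with margin levels $\gamma = \theta/16$ and $\gamma' = \theta/8$, sample size $m/3$, and confidence $\delta/10$. The parameters specialize to $(\gamma'-\gamma) = \theta/16$ and $\gamma'/(\gamma'-\gamma) = 2$, so the $\beta$-quantity of \cref{lem:marginbound} matches (after the trivial bound $\ln(10/\delta) \le \ln(10e/\delta)$) the bracketed expression in \cref{eq:torealizable}, while the divisor $m$ is replaced by $m/3$, producing the factor $3/m$. Since $\rS_{1,\fs} \sim \cD_\fs^{m/3}$, \cref{lem:marginbound} gives, with probability at least $1-\delta/10$, simultaneously for all $v \in \Convex(\cH)$,
\begin{align*}
  \Loss_{\cD_\fs}^{\theta/16}(v)
  \le \Loss_{\rS_{1,\fs}}^{\theta/8}(v) + C'\Par[\big]{\sqrt{\Loss_{\rS_{1,\fs}}^{\theta/8}(v) \cdot 3\beta/m} + 3\beta/m}.
\end{align*}
Specializing to $v = \rv_g$ on the event $\Loss_{\rS_{1,\fs}}^{\theta/8}(\rv_g) = 0$ kills the square-root term, and a union bound over the two events above concludes \cref{eq:torealizable} with confidence $1 - 2\delta/10$.

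The main obstacle is essentially bookkeeping: the structural argument is already laid out in the overview, so the real work is to align the parameters of \cref{lem:adaboostvari} and \cref{lem:marginbound} with the target expression, track the sample size $m/3$ rather than $m$, and handle the vacuous regime $\theta \lesssim 1/\sqrt{m}$ separately so that the hypothesis of \cref{lem:adaboostvari} can be invoked in the meaningful regime.
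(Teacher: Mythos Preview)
Your proposal is correct and follows essentially the same approach as the paper's proof: identify the run of \cref{alg:adaboost} on $\rS_{1,\fs}$, apply \cref{lem:adaboostvari} with $\gamma'=\theta$ to get zero empirical $\theta/8$-margin loss, then apply \cref{lem:marginbound} with margins $\theta/16<\theta/8$ on $\cD_{\fs}$ and sample size $m/3$, and combine the two events. The paper carries out the combination via an explicit conditioning/law-of-total-probability computation rather than a bare union bound, but the outcome is identical; your parameter bookkeeping (including the vacuous-regime check and the $\gamma'/(\gamma'-\gamma)=2$ identification) matches theirs.
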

\begin{proof}
  We first notice that we may assume that
  \begin{align}
    m \ge 3C' \ln\frac{10e}{\delta}
    \quad\text{and}\quad
    m \ge \frac{3 \cdot 16^{2}C'd}{\theta^{2}}
    \label{eq:torealizable:mbounds}
  \end{align}
  as otherwise the right hand side of \cref{eq:torealizable} is greater than $1$ and the result follows by noting that
  $\Loss_{\dD_{f^\star}}^{\theta/16}(v_{g}) \le 1$.

  Without loss of generality,
  let
  \begin{align}
    \fs
    = \argmax_{f \in \fF} \Corr_{\dD}(f)
    .
  \end{align}
  To the end of making an observation about \cref{alg:main} let $ S_{1} $ be a realization of $ \rS_{1}$.
  The execution of \cref{alg:main} runs \cref{alg:adaboost} with all possible labelings of $S_{1}=((x_{1},y_{1}),\ldots,(x_{m/3},y_{m/3}))$.
  In particular,
  letting
  \begin{align}
    S_{1, \fs}
    \coloneqq \Par[\big]{(x_i, \fs(x_i))}_{i=1}^{m/3}
    ,
  \end{align}
  and denoting \cref{alg:adaboost} by $\cA$,
  it must be the case that $\cA(\sS_{1,\fs}) \in \cB_1$.
  Moreover,
  as $f^\star$ correctly classifies all points in $\sS_{1,\fs}$,
  and
  $f^\star \in \fF$,
  we have that
  \begin{math}
    \sup_{f \in \fF} \Corr_{\dD'}(f) = 1
  \end{math}
  for any distribution $\dD'$ over $\sS_{1,\fs}$.
  Therefore,
  for such $\dD'$,
  the weak-learning guarantee becomes that
  with probability at least $1 - \delta_{0}$
  over $\rsS' \sim (\dD')^{m_{0}}$
  it holds that
  \begin{align}
    \Corr_{\dD'}\Par[\big]{\Weaklearner(\rsS')}
    &\ge \gamma \sup_{f \in \fF}\; \Corr_{\dD'}(f) - \eps_{0}
    \\&= \gamma - \eps_{0}=\theta
    .
  \end{align}
  Accordingly,
  to leverage \cref{lem:adaboostvari},
  let
  \begin{align}
    E_{1}(S_{1,\fs})
    &\coloneqq \Set*{\Loss_{S_{1,\fs}}^{\theta/{8}}\Par*{\cA(S_{1,\fs})} = 0}
    ,
  \end{align}
  be an event over the randomness used in \cref{alg:adaboost},
  and notice that \cref{line:main:runadaboost} runs $\cA$ on $\sS_{1,\fs}$ for
  \begin{align}
    T
    &= \Ceil*{32m\ln\Par{em}}
    \\&\ge \Ceil*{\frac{32\ln\Par{em}}{\theta^{2}}} \tag{by \cref{eq:torealizable:mbounds}}
  \end{align}
  with $k = \Ceil*{8\ln\Par{10eT/\delta}/(1-\delta_{0})}$.
  Thus,
  applying \cref{lem:adaboostvari} with $\gamma' = \theta$
  ensures that
  \begin{align}
    \Prob{E_{1}(S_{1,\fs})}
    &\ge 1 - \frac{\delta}{10}
    .
  \end{align}
  Notice we showed the above for any realization $ S_{1} $ of $ \rS_{1}$.
  Let in the following $ E_{1}=E_{1}(\rS_{1,\fs})$.

  Invoking \cref{lem:marginbound} with $\rS_{1,\fs} \sim \cD^{m/3}_{\fs}$ and margin levels $\theta/16 < \theta/8$ ensures that,
  with probability at least $1 - \delta/10$,
  the event (over $\rS_{1,\fs}$)
  \begin{align}
    E_{2}
    &\coloneqq \Set*{\forall v \in \dlh : \Loss_{\cD_{\fs}}^{\theta/16}(v) \le \Loss_{\rS_{1,\fs}}^{\theta/8}(v) + C'\Par[\bbbig]{
      \sqrt{\Loss_{\rS_{1,\fs}}^{\theta/8}(v) \cdot \beta_{m/3, \theta}}
      + \beta_{m/3, \theta}
    }}
    \label{eq:firststep3}
  \end{align}
  holds,
  where $d = \fat_{\hat{c}\theta/16}(\fH)$
  and
  \begin{align}
    \beta_{n, \lambda}
    &\coloneqq \frac{1}{n} \cdot \Br*{\frac{16^2 d}{\lambda^{2}} \cdot \Ln^{3/2}\Par[\bbig]{\frac{2n\lambda^{2}}{16^2 d}} + \ln\frac{10e}{\delta}}
    .
  \end{align}
  In the following,
  we will use $\rA$ to refer to the randomness used in all calls to \cref{alg:adaboost} during the execution \cref{alg:main}.
  Namely, the randomness used in \cref{line:main:runadaboost} to draw from $D_{t}$,
  where we assume all the $\Par{2^{\abs{\rsS_1}} - 1} \cdot k$ draws to be mutually independent.

  Compiling the above,
  we have that
  \begin{align}
    &\Prob_{\rS_{1},\rA}*{\exists v \in \cB_{1} : \Loss_{\cD_{\fs}}^{\theta/16}(v) \le C'\beta_{m/3, \theta}}
    \\&\quad\ge \Prob_{\rS_{1},\rA}*{\Loss_{\cD_{\fs}}^{\theta/16}(\cA(\rS_{1,\fs})) \le C'\beta_{m/3, \theta}} \tag{as $\cA(\rS_{1,\fs}) \in \cB_{1}$}
    \\&\quad= \Ev_{\rS_{1}}*{\Ev_{\rA}*{\indicator{\Loss_{\cD_{\fs}}^{\theta/16}(\cA(\rS_{1,\fs})) \le C'\beta_{m/3, \theta}}}} \tag{by independence of $ \rS_{1} $ and $ \cA $  }
    \\&\quad\ge \Ev_{\rS_{1}}*{\Ev_{\rA}*{\indicator{\Loss_{\cD_{\fs}}^{\theta/16}(\cA(\rS_{1,\fs})) \le C'\beta_{m/3, \theta}} \cdot \indicator{E_{1}}} \cdot \indicator{E_{2}}}
    \\&\quad\ge (1 - \delta/10)^{2} \label{eq:useE1E2}
    \\&\quad\ge 1 - 2\delta/10 \tag{by Bernoulli's inequality}
    ,
  \end{align}
  where \cref{eq:useE1E2} holds as the events $E_{1}$ and $E_{2}$ each hold with probability at least $1 - \delta/10$
  and their simultaneous occurence implying that $\Loss_{\cD_{\fs}}^{\theta/16}(\cA(\rS_{1,\fs})) \le C'\beta_{m/3, \theta}$.
\end{proof}

\begin{lemma}[Restatement of \protect{\ref{lem:secondstep}}]%
  There exists universal constants $C\ge 1$ and $\hat{c} > 0$ such that:
  Letting $\hat{d} = \fat_{\hat{c}(\gamma - \eps_0)/32}(\cH)$,
  after the \textbf{for} loop starting at \cref{line:main:B2for} of \cref{alg:main},
  with probability at least $1-\delta/2$ over $\rS_{1},\rS_{2}$ and randomness used in \cref{alg:adaboost},
  that $\cB_{2}$ contains a voting classifier $v_{g}$ such that
  \begin{align}
    \Loss_{\cD}(v_{g})
    &\le \err_{\cD}(\fs) + \sqrt{\frac{C\err_{\cD}(\fs)}{m} \cdot \Br*{\frac{\hat{d}}{\theta^{2}} \cdot \Ln^{3/2}\Par[\bbig]{\frac{\theta^{2}m}{\hat{d}}} + \ln\frac{10}{\delta}}}
    \\&\qquad+ \frac{C}{m} \Br*{\frac{\hat{d}}{\theta^{2}} \cdot \Ln^{3/2}\Par[\bbig]{\frac{\theta^{2}m}{\hat{d}}} + \ln\frac{10}{\delta}}
    .
    \label{eq:secondstep}
  \end{align}
\end{lemma}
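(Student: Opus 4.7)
The plan is to execute the three-part chain of inequalities outlined in \cref{sec:overview}, combining \cref{lem:firststep}, \cref{lem:marginbound}, and \cref{lem:theben} through a careful union bound.

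First, I would invoke \cref{lem:firststep} to obtain, with probability at least $1 - 2\delta/10$ over $\rS_1$ and the internal randomness of \cref{alg:adaboost}, a voting classifier $v_g \in \cB_1$ satisfying $\Loss_{\cD_{\fs}}^{\theta/16}(v_g) \le C' \beta_{m/3, \theta}$, where $\beta_{m/3,\theta}$ denotes the right-hand side of \cref{eq:torealizable}. I would then convert this to a bound under $\cD$ via the pointwise case split from \cref{sec:overview}: whenever $y \cdot v_g(x) \le \theta/16$, either $\fs(x) \ne y$, or $\fs(x) = y$ and then $\fs(x) \cdot v_g(x) \le \theta/16$. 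This yields
\begin{align}
\Loss_{\cD}^{\theta/16}(v_g)
\le \err_{\cD}(\fs) + \Loss_{\cD_{\fs}}^{\theta/16}(v_g)
\le \err_{\cD}(\fs) + C' \beta_{m/3, \theta}.
\end{align}

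Next, I would observe that the dyadic grid scanned by the \textbf{for} loop at \cref{line:main:B2for} always contains some $\gamma_g' \in (\theta/32, \theta/16]$, modulo the degenerate case $\theta/16 < 1/\sqrt{m}$ in which the target bound is trivially $\ge 1$. Setting $v_g' \coloneqq v^*_{\gamma_g'} \in \cB_2$, the minimality of $v_g'$ within $\cB_1$ combined with $v_g \in \cB_1$ and $\gamma_g' \le \theta/16$ gives
\begin{align}
\Loss_{\rS_2}^{\gamma_g'}(v_g')
\le \Loss_{\rS_2}^{\gamma_g'}(v_g)
\le \Loss_{\rS_2}^{\theta/16}(v_g).
\end{align}
Invoking \cref{lem:marginbound} on sample $\rS_2$ with margin levels $0 < \gamma_g'$ and hypothesis $v_g'$ bounds $\Loss_{\cD}(v_g')$ by $\Loss_{\rS_2}^{\gamma_g'}(v_g')$ plus a square-root-plus-linear complexity term involving $\fat_{\hat{c}\gamma_g'}(\cH)$; since $\gamma_g' > \theta/32$ and the fat-shattering dimension is non-increasing in its level parameter, this term is dominated by $\hat{d} = \fat_{\hat{c}\theta/32}(\cH)$. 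Finally, \cref{lem:theben} applied to the single fixed classifier $v_g$ with confidence $\delta/10$ controls $\Loss_{\rS_2}^{\theta/16}(v_g)$ by $\Loss_{\cD}^{\theta/16}(v_g)$ plus the standard Bernstein correction.

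Stitching these three estimates together and union-bounding the three failure events (total failure at most $4\delta/10 < \delta/2$) produces the claimed inequality. The bookkeeping is the principal obstacle: every step contributes a correction of the form $\sqrt{L \cdot \beta} + \beta$, and their concatenation must be simplified using $\sqrt{a+b} \le \sqrt{a} + \sqrt{b}$ and $\sqrt{ab} \le a + b$ without inflating the logarithmic factors beyond what appears in the target bound. The essential move is substituting the bound $\Loss_{\cD}^{\theta/16}(v_g) \le \err_{\cD}(\fs) + C' \beta_{m/3, \theta}$ inside the Bernstein-type estimate on $\Loss_{\rS_2}^{\theta/16}(v_g)$, so that the leading term in the final expression is $\err_{\cD}(\fs)$ rather than $\Loss_{\cD}(v_g')$---precisely what produces the interpolation between the realizable and agnostic regimes promised by the overall theorem.
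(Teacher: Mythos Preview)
Your proposal is correct and follows essentially the same approach as the paper: invoke \cref{lem:firststep}, apply \cref{lem:theben} and \cref{lem:marginbound} over $\rS_2$, chain through the minimality of $v_g'$ and the case split $\Loss_{\cD}^{\theta/16}(v_g) \le \err_{\cD}(\fs) + \Loss_{\cD_{\fs}}^{\theta/16}(v_g)$, and collect the four $\delta/10$ failures. The paper's execution differs only in bookkeeping order (it applies the case split after, rather than before, the concentration step) and in its handling of the degenerate regime via the assumption $m \ge 3072e^2\hat{d}/\theta^2$, which is equivalent to your observation that the bound is vacuous when $\theta/16 < 1/\sqrt{m}$.
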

\begin{proof}
  It will be useful to consider the function $\zeta\colon \R_{>0} \to \R_{>0}$ given by
  \begin{align}
    \zeta(x)
    = x^{-1} \cdot \ln^{3/2}\Par[\big]{\max \Set[\big]{2x, e^2}}
    ,
    \qquad\text{which is decreasing for any $x > 0$.}
    \label{eq:step2:zeta}
  \end{align}
  To see this,
  let $f(x) = x^{-1} \ln^{3/2}(2x)$ for $x > 1/2$,
  so that
  \begin{math}
    f'(x)
    = \frac{(3 - 2\ln 2x) \sqrt{\ln 2x}}{2x^2}
    ,
  \end{math}
  thus $f(x)$ is decreasing for $x > \exp(3/2)/2$.
  As $1/x$ is decreasing for $x > 0$,
  we conclude that $x^{-1} \ln^{3/2}(\max\Set{2x, e^2})$ is decreasing for $x > 0$.
  We shall also implicitly use that $\Ln(x) \coloneqq \ln(\max\Set{x, e}) \le \ln(\max\Set{x, e^2})$.

  We will prove \cref{eq:secondstep} for $ C\geq3072e^{2}$.
  Thus, we may assume that
  \begin{align}
    m
    \ge \frac{3072e^{2}\hat{d}}{\theta^{2}}
  \end{align}
  as otherwise the right hand side of \cref{eq:secondstep} is greater than $1$ and the result follows trivially.

  By \cref{lem:firststep},
  with probability at least $1 - \frac{2\delta}{10}$ over $\rS_{1} \sim \cD^{m/3}$ and the randomness used in \cref{alg:adaboost}
  there exists a voting classifier $\rv_{g} \in \cB_{1}$ such that
  \begin{align}
    \Loss_{\cD_{\fs}}^{\theta/16}(\rv_{g})
    &\le C' \Br*{\zeta\Par[\bbig]{\frac{\theta^{2}m}{3 \cdot 16^2 d}} + \frac{3}{m}\ln\frac{10}{\delta}}
    ,
    \label{eq:secondstep-1}
  \end{align}
  with $d = \fat_{\hat{c}\theta/16}$ and $ C'\geq 1 $ .
  Since the fat-shattering dimension is decreasing in its level parameter,
  $\fat_{\hat{c}\theta/16} \le \fat_{\hat{c}\theta/32} = \hat{d}$,
  thus, by the monotonic decrease of $\zeta$ and the above,
  \begin{align}
    \Loss_{\cD_{\fs}}^{\theta/16}(\rv_{g})
    &\le C' \Br*{\zeta\Par[\bbig]{\frac{\theta^{2}m}{3 \cdot 16^2 \hat{d}}} + \frac{3}{m}\ln\frac{10}{\delta}}
    .
    \label{eq:step2:Df16vg_final}
  \end{align}

  Consider a realization $S_{1}$ of $\rS_{1}$ and the randomness of \cref{alg:adaboost} for which the above holds,
  and let $v_{g} \in \cB_{1}$ be the associated classifier.
  Then,
  by \cref{lem:theben},
  with probability at least $1 - \frac{\delta}{10}$ over $\rS_{2}$,
  \begin{align}
    \Loss_{\rS_{2}}^{\theta/16}(v_{g})
    &\le \Loss_{\cD}^{\theta/16}(v_{g})
      + \sqrt{\frac{2 \Loss_{\dD}^{\theta/16}(v_{g})}{m} \ln\frac{10}{\delta}}
      + \frac{6}{m} \ln\frac{10}{\delta}
    .
    \label{eq:secondstep1}
  \end{align}

  Let $i_{g}$ be the natural number such that $\theta/16 \in [2^{-i_{g}}, 2^{-i_{g} + 1})$,
  and let $\gamma'_{g} = 2^{-i_{g}}$.
  To see that $\gamma'_{g} \in \{1, 1/2, 1/4, \ldots, 1/2^{\Ceil{\log_{2}(\sqrt{m})}}\}$ so that it is considered in \cref{line:main:runadaboost} of \cref{alg:main},
  recall that we are in the case $m \ge 3072e^{2}d/\theta^{2}$,%
  thus $\theta/16 \ge \sqrt{24/m}$.

  Letting $d' = \fat_{\hat{c}\gamma_{g}'}(\cH)$,
  \cref{lem:marginbound} with sample $\rS_{2}$ and margin levels $0$ and $\gamma_{g}'$ ensures that,
  with probability at least $1 - \delta/10$ over $\rS_{2} \sim \cD^{m/3}$ it holds that
  for all $v \in \Convex(\fH)$
  \begin{align}
    \Loss_{\cD}(v)
    &\le \Loss_{\rS_{2}}^{\gamma_{g}'}(v)
    + C'\Br*{
      \sqrt{\Loss_{\rS_{2}}^{\gamma_{g}'}(v) \Br*{\zeta\Par[\bbig]{\frac{(\gamma'_{g})^{2}m}{3d'}} + \frac{3}{m} \ln\frac{10}{\delta}}}
      + \zeta\Par[\bbig]{\frac{(\gamma'_{g})^{2}m}{3d'}} + \frac{3}{m} \ln\frac{10}{\delta}
    }
    .
    \label{eq:secondstep22}
  \end{align}
  Furthermore,
  the choice of $\gamma_{g}'$ implies that $\gamma_{g}' > \theta/32$,
  and by the fat-shattering dimension being decreasing in its level parameter implies that
  $d' = \fat_{\hat{c}\gamma_{g}'}(\cH) \le \fat_{\hat{c}\theta/32}(\cH) = \hat{d}$.
  Applying this in the inequality above, combined with
  the monotonic decrease of $\zeta$ yields that,
  with probabilities at least $1 - \delta/10$ over $\rS_{2}$,
  \begin{align}
    \Loss_{\cD}(v)
    &\le \Loss_{\rS_{2}}^{\gamma_{g}'}(v)
    + C' \Br*{
      \sqrt{\Loss_{\rS_{2}}^{\gamma_{g}'}(v) \Br*{\zeta\Par[\bbig]{\frac{\theta^{2}m}{3 \cdot 32^2 \hat{d}}} + \frac{3}{m} \ln\frac{10}{\delta}}}
      + \zeta\Par[\bbig]{\frac{\theta^{2}m}{3 \cdot 32^2 \hat{d}}} + \frac{3}{m} \ln\frac{10}{\delta}
    }
    ,
    \label{eq:secondstep2}
  \end{align}
  for all $v \in \Convex(\fH)$.
  In particular,
  the above holds for $v_{g}' \coloneqq \argmin_{v \in \cB_{1}} \Loss_{\rS_{2}}^{\gamma'_{g}}(v)$.
  With that,
  as $v_{g} \in \cB_{1}$,
  we have that $\Loss_{\rS_{2}}^{\gamma'_{g}}(v_{g}') \le \Loss_{\rS_{2}}^{\gamma'_{g}}(v_{g})$.
  Additionally,
  as $\gamma'_{g} \le \theta/16$,
  it must be that $\Loss_{\rS_{2}}^{\gamma'_{g}}(v_{g}) \le \Loss_{\rS_{2}}^{\theta/16}(v_{g})$.
  Altogether,
  we obtain that with probability at least $1 - \delta/10$ over $\rS_{2}$,
  \begin{align}
    \Loss_{\cD}(v_{g}')
    &\le \Loss_{\rS_{2}}^{\theta/16}(v_{g})
    + C' \sqrt{\Loss_{\rS_{2}}^{\theta/16}(v_{g}) \Br*{\zeta\Par[\bbig]{\frac{\theta^{2}m}{3 \cdot 32^2 \hat{d}}} + \frac{3}{m} \ln\frac{10}{\delta}}}
    \\&\qquad+ C' \Br*{\zeta\Par[\bbig]{\frac{\theta^{2}m}{3 \cdot 32^2 \hat{d}}} + \frac{3}{m} \ln\frac{10}{\delta}}
    .
  \end{align}

  Using the union bound to also have \cref{eq:secondstep1} hold,
  we obtain that with probability at least $1 - 2\delta/10$ over $\rS_{2}$,
  \begin{align}
    \Loss_{\cD}(v_{g}')
    &\le \Loss_{\cD}^{\theta/16}(v_{g})
    + \sqrt{\frac{2 \Loss_{\dD}^{\theta/16}(v_{g})}{m} \ln\frac{10}{\delta}}
    + \frac{6}{m} \ln\frac{10}{\delta}
    + C' \Br*{\zeta\Par[\bbig]{\frac{\theta^{2}m}{3 \cdot 32^2 \hat{d}}} + \frac{3}{m} \ln\frac{10}{\delta}}
    \\&\quad+ C' \sqrt{
      \Br[\bbbig]{
        \Loss_{\cD}^{\theta/16}(v_{g})
        + \sqrt{\frac{2 \Loss_{\dD}^{\theta/16}(v_{g})}{m} \ln\frac{10}{\delta}}
        + \frac{6}{m} \ln\frac{10}{\delta}
      } \Br*{
        \zeta\Par[\bbig]{\frac{\theta^{2}m}{3 \cdot 32^2 \hat{d}}}
        + \frac{3}{m} \ln\frac{10}{\delta}
      }
    }
    .
    \label{eq:afterunionbound}
  \end{align}
  To bound $\Loss_{\cD}^{\theta/16}(v_{g})$,
  we make the following observation.
  Given function $f \in \{\pm 1\}^{\cX}$,
  example $(x,y) \in \InputSpace \times \{\pm 1\}$
  and voting classifier $v \in \dlh$,
  if $y \cdot v(x) \le \theta/16$,
  then either $f(x) = y$, so that $f(x) \cdot v(x) \le \theta/16$;
  or $f(x) = -y$, so that $y \cdot f(x) \le \theta/16$.
  Applying this for $\fs$ and $v_{g}$,
  we conclude that
  \begin{align}
    \Loss_{\cD}^{\theta/16}(v_{g})\le \err_{\cD}(\fs) + \Loss_{\cD_{\fs}}^{\theta/16}(v_{g})
    ,
    \label{eq:secondstep3}
  \end{align}
  where we have used the definition of $\cD_{\fs}$.
  With \cref{eq:afterunionbound} in mind,
  \cref{eq:secondstep3} yields that
  \begin{align}
    &\Loss_{\cD}^{\theta/16}(v_{g})
      + \sqrt{\frac{2 \Loss_{\dD}^{\theta/16}(v_{g})}{m} \ln\frac{10}{\delta}}
      + \frac{6}{m} \ln\frac{10}{\delta}
    \\&\qquad\le \err_{\cD}(\fs)
      + \sqrt{\frac{2 \err_{\cD}(\fs)}{m} \ln\frac{10}{\delta}}
      + \Loss_{\cD_{\fs}}^{\theta/16}(v_{g})
      + \sqrt{\frac{2 \Loss_{\cD_{\fs}}^{\theta/16}(v_{g})}{m} \ln\frac{10}{\delta}}
      + \frac{6}{m} \ln\frac{10}{\delta}
  \shortintertext{\hfill{}(as $\sqrt{a+b} \le \sqrt{a} + \sqrt{b}$ for $a, b > 0$)}
    &\qquad\le \err_{\cD}(\fs)
      + \sqrt{\frac{2 \err_{\cD}(\fs)}{m} \ln\frac{10}{\delta}}
      + 2\Loss_{\cD_{\fs}}^{\theta/16}(v_{g})
      + \Par[\bbig]{\frac{1}{2m} + \frac{6}{m}} \ln\frac{10}{\delta}
    \label{eq:middlebound}
  \shortintertext{\hfill{}(as $2\sqrt{ab} \le a + b$ for $a, b > 0$ (AM--GM inequality))}
    &\qquad\le 2\err_{\cD}(\fs)
      + 2\Loss_{\cD_{\fs}}^{\theta/16}(v_{g})
      + \frac{7}{m} \ln\frac{10}{\delta}
    \label{eq:aftermiddlebound}
    ,
  \end{align}
  where the last inequality follows again from the AM--GM inequality.
  Using \cref{eq:step2:Df16vg_final} and that $C' \ge 1$,
  the two last inequalities (\cref{eq:aftermiddlebound} and \cref{eq:middlebound}) yield the two upper bounds, respectively:
  \begin{align}
    &\Loss_{\cD}^{\theta/16}(v_{g})
      + \sqrt{\frac{2 \Loss_{\dD}^{\theta/16}(v_{g})}{m} \ln\frac{10}{\delta}}
      + \frac{6}{m} \ln\frac{10}{\delta}
    \\&\qquad\le \err_{\cD}(\fs)
      + \sqrt{\frac{2 \err_{\cD}(\fs)}{m} \ln\frac{10}{\delta}}
      + 2C' \Br*{\zeta\Par[\bbig]{\frac{\theta^{2}m}{3 \cdot 16^2 \hat{d}}} + \frac{7}{m}\ln\frac{10}{\delta}}
    ,
    \label{eq:step2:bigbound1}
  \shortintertext{and}
    &\Loss_{\cD}^{\theta/16}(v_{g})
      + \sqrt{\frac{2 \Loss_{\dD}^{\theta/16}(v_{g})}{m} \ln\frac{10}{\delta}}
      + \frac{6}{m} \ln\frac{10}{\delta}
    \\&\qquad\le 2\err_{\cD}(\fs)
      + 2C' \Br*{\zeta\Par[\bbig]{\frac{\theta^{2}m}{3 \cdot 16^2 \hat{d}}} + \frac{7}{m}\ln\frac{10}{\delta}}
    .
    \label{eq:step2:bigbound2}
  \end{align}
  Applying both of these to \cref{eq:afterunionbound},
  we conclude that
  \begin{align}
    \Loss_{\cD}(v_{g}')
    &\le \err_{\cD}(\fs)
      + \sqrt{\frac{2 \err_{\cD}(\fs)}{m} \ln\frac{10}{\delta}}
    \\&\quad+ 2C' \Br*{\zeta\Par[\bbig]{\frac{\theta^{2}m}{3 \cdot 16^2 \hat{d}}} + \frac{7}{m}\ln\frac{10}{\delta}}
      + C' \Br*{\zeta\Par[\bbig]{\frac{\theta^{2}m}{3 \cdot 32^2 \hat{d}}} + \frac{3}{m} \ln\frac{10}{\delta}}
    \\&\quad+ C' \sqrt{
      \Br[\bbbig]{
        2\err_{\cD}(\fs)
        + 2C' \Par*{\zeta\Par[\bbig]{\frac{\theta^{2}m}{3 \cdot 16^2 \hat{d}}} + \frac{7}{m}\ln\frac{10}{\delta}}
      } \Br*{
        \zeta\Par[\bbig]{\frac{\theta^{2}m}{3 \cdot 32^2 \hat{d}}}
        + \frac{3}{m} \ln\frac{10}{\delta}
      }
    }
    \\&\le \err_{\cD}(\fs)
      + \sqrt{\frac{2 \err_{\cD}(\fs)}{m} \ln\frac{10}{\delta}}
      + 3C' \Br*{\zeta\Par[\bbig]{\frac{\theta^{2}m}{3 \cdot 32^2 \hat{d}}} + \frac{7}{m}\ln\frac{10}{\delta}}
    \\&\quad+ C' \sqrt{
      2\err_{\cD}(\fs) \Br*{
        \zeta\Par[\bbig]{\frac{\theta^{2}m}{3 \cdot 32^2 \hat{d}}}
        + \frac{3}{m} \ln\frac{10}{\delta}
      }
    }
    \\&\quad+ C' \sqrt{2C'} \cdot \Br*{\zeta\Par[\bbig]{\frac{\theta^{2}m}{3 \cdot 32^2 \hat{d}}} + \frac{7}{m}\ln\frac{10}{\delta}}
    \label{eq:rootsplit}
    \\&\leq \err_{\cD}(\fs)
      + (1+C')\sqrt{
        2 \err_{\cD}(\fs) \Br*{
           \zeta\Par[\bbig]{\frac{\theta^{2}m}{3 \cdot 32^2 \hat{d}}}
          + \frac{3}{m} \ln\frac{10}{\delta}
        }
      }
      \\&\quad+ (3C' + C' \sqrt{2C'}) \cdot \Br*{\zeta\Par[\bbig]{\frac{\theta^{2}m}{3 \cdot 32^2 \hat{d}}} + \frac{7}{m}\ln\frac{10}{\delta}}
    \\&\le \err_{\cD}(\fs)
      + (1+C')\sqrt{2 \err_{\cD}(\fs) \Br*{\zeta\Par[\bbig]{\frac{\theta^{2}m}{3 \cdot 32^2 \hat{d}}} + \frac{3}{m} \ln\frac{10}{\delta}}}
      \\&\quad+ (3C' + C' \sqrt{2C'}) \cdot \Br*{\zeta\Par[\bbig]{\frac{\theta^{2}m}{3 \cdot 32^2 \hat{d}}} + \frac{7}{m}\ln\frac{10}{\delta}}
    \\&\le \err_{\cD}(\fs)
      + \sqrt{\frac{C \err_{\cD}(\fs)}{m} \Br*{\frac{\hat{d}}{\theta^{2}} \ln^{3/2}\Par[\bbig]{\frac{\theta^{2}m}{\hat{d}}} + \ln\frac{10}{\delta}}}
      \\&\quad+ \frac{C}{m} \Br*{\frac{\hat{d}}{\theta^{2}} \ln^{3/2}\Par[\bbig]{\frac{\theta^{2}m}{\hat{d}}} + \ln\frac{10}{\delta}}
    ,
  \end{align}
  where in \cref{eq:rootsplit} we used that $\sqrt{a+b} \le \sqrt{a} + \sqrt{b}$ for $a, b > 0$, that $C' \ge 1$, and that $\zeta$ is decreasing,
  and in the last inequality we used the definition of $\zeta$ and that $ m\ge3072e^{2}d/\theta^{2} $ such that $\theta^2 m / (3 \cdot 32^2 \hat{d}) \ge e^2$.
  Finally,
  since we show the above with probability at least $1 - \frac{2\delta}{10}$ over $\rS_{2}$
  and for any realization $S_{1}$ of $\rS_{1}$ and the randomness of \cref{alg:adaboost} satisfying \cref{eq:secondstep-1},
  which happens with probability at least $1-\frac{2\delta}{10}$,
  it follows by independence that the bound on $v_{g}'$ holds with probability at least $1 - 4\delta/10 = 1 - \delta/2$.
  Since $v_{g}'\in \cB_{2}$, this concludes the proof.
\end{proof}

\begin{theorem}[Restatement of \protect{\ref{thm:main-errorversion}}]
    There exist universal constants $C, c > 0$ such that the following holds.
    Let $\Weaklearner$ be a $(\gamma, \eps_0, \delta_0, m_0, \fF, \fH)$ agnostic weak learner.
    If $\gamma > \eps_0$ and $\delta_0 < 1$,
    then,
    for all $\delta \in (0, 1)$,
    $m \in \N$,
    and $\dD \in \DistribOver(\InputSpace \times \Set{\pm 1})$,
    given training sequence $\rsS \sim \dD^m$,
    we have that \cref{alg:main} on inputs $\Par*{\rsS, \Weaklearner, \delta, \delta_0, m_0}$ returns,
    with probability at least $1 - \delta$ over $\rsS$ and the internal randomness of the algorithm,
    the output $\rv$ of \cref{alg:main} satisfies that
    \begin{gather}
        \ls_{\cD}(\rv)
        \le \err_{\cD}(\fs) + \sqrt{C \err_{\cD}(\fs) \cdot \beta} + C \cdot \beta
        ,
      \shortintertext{where}
        \beta
        = \frac{\hat{d}}{(\gamma-\eps_{0})^{2} m} \cdot \Ln^{3/2}\Par[\bbig]{\frac{(\gamma-\eps_{0})^{2} m}{\hat{d}}} + \frac{1}{m} \ln\frac{\ln m}{\delta}
      \end{gather}
    with $\hat{d} = \fat_{c(\gamma-\eps_{0})/32}(\fH)$.
\end{theorem}
\begin{proof}
    We will now show that with probability atleast $ 1-\delta $ over $ \rS $ and the randomness of \cref{alg:adaboost}, we have that
    \begin{align}
        \ls_{\cD}(v)\le \err_{\cD}(\fs)
        +\sqrt{\frac{11C\err_{\cD}(\fs)}{m} \cdot \left(\frac{\hat{d}}{\theta^{2}} \cdot \Ln^{3/2}\Par[\bbig]{\frac{\theta^{2}m}{\hat{d}}} + \ln\left(\frac{28\ln{\left(m \right)}}{\delta}\right)\right)}+
        \\\frac{14C}{m} \Br*{\frac{\hat{d}}{\theta^{2}} \cdot \Ln^{3/2}\Par[\bbig]{\frac{\theta^{2}m}{\hat{d}}} + 16\ln{\left(\frac{28\ln{\left(m \right)}}{\delta} \right)}}
       \end{align}
       with $ \hat{d}=\fat_{\hat{c}\theta/32}(\cH)$, $ C\geq1 $ and $ \hat{c}>0 $ being the universal constant of \cref{lem:secondstep}.
    Thus, it suffices to consider $ m\geq 14 $ else the right hand-side of the inequality is greater than $ 1 $ and we are done by the left hand-side being at most $1$.
    Now by \cref{lem:secondstep} we have that with probability at least $1-\delta/2$ over $\rS_{1}$ and $\rS_{2}$, and the randomness of \cref{alg:adaboost} it holds that there exists $v_{g} \in \cB_{2}$ such that
    \begin{align}
        \ls_{\cD}(v_{g})
        &\le \err_{\cD}(\fs) + \sqrt{\frac{3C\err_{\cD}(\fs)}{m} \cdot \left(\frac{\hat{d}}{\theta^{2}} \cdot \Ln^{3/2}\Par[\bbig]{\frac{\theta^{2}m}{\hat{d}}} + \ln\frac{10e}{\delta}\right)}
        \\&\qquad+ \frac{3C}{m} \Br*{\frac{\hat{d}}{\theta^{2}} \cdot \Ln^{3/2}\Par[\bbig]{\frac{\theta^{2}m}{\hat{d}}} + \ln\frac{10e}{\delta}}
        \label{eq:thirdstep0}
    \end{align}
    with $ \hat{d}=\fat_{\hat{c}\theta/32}(\cH)$, $ C\geq1 $ and $ \hat{c}>0 $ being the universal constant of \cref{lem:secondstep}, call this event $ E_{1}$.
    Now consider any realization $S_{1},S_{2}$ of $\rS_{1},\rS_{2}$ and the randomness used in \cref{alg:adaboost} such that the above holds (so, within event $E_{1}$) and let $v_{g}$ denote an arbitrary $v \in \cB_{2}$ such that the above holds.

    We now invoke both equations of \cref{lem:theben} with $ \delta=\delta/(4|\cB_{2}|)$ (abusing notation of $\delta$) $ \gamma=0 $ for each $ v\in \cB_{2} $ which combined with a union bound give use that it holds with probability at least $ 1 - \delta/2$ over $\rS_{3} \sim \cD^{m/3} $ that
    \begin{gather}
        \Loss_{\dD}(v)
        \le \Loss_{\rsS_{3}}(v) + \sqrt{\frac{6\Loss_{\rsS_{3}}(v) \ln\Par{4|\cB_{2}|/\delta}}{m}} + \frac{12 \ln\Par{4|\cB_{2}|/\delta}}{m}
    \shortintertext{and}
        \Loss_{\rsS_{3}}(v)
        \le \Loss_{\dD}(v) + \sqrt{\frac{6\Loss_{\dD}(v) \ln\Par{4|\cB_{2}|/\delta}}{3m}} + \frac{6 \ln\Par{4|\cB_{2}|/\delta}}{m}
        .
        \label{eq:thirdstep1}
    \end{gather}
    Consider such a realization $ S_{3} $ of $ \rS_{3}$, and denote an event where the above inequalities hold by $ E_{2}$.
    Now let $ v$ be the voting classifier in $ \cB_{2} $ with the smallest empirical $ 0 $-margin loss - $v=argmin_{v\in\cB_{2}} \ls_{\rS_{3}}(v) $, with ties broken arbitrary (i.e., the output of \cref{alg:main}).
    Now by \cref{eq:thirdstep1} and $ v\in \cB_{2}$, we have that
    \begin{align}
        \ls_{\cD}(v)
        &\le \Loss_{\rsS_{3}}(v) + \sqrt{\frac{6\Loss_{\rsS_{3}}(v) \ln\Par{4|\cB_{2}|/\delta}}{m}} + \frac{12 \ln\Par{4|\cB_{2}|/\delta}}{m}
        \\&\le \Loss_{\rsS_{3}}(v_{g}) + \sqrt{\frac{6\Loss_{\rsS_{3}}(v_{g}) \ln\Par{4|\cB_{2}|/\delta}}{m}} + \frac{12 \ln\Par{4|\cB_{2}|/\delta}}{m}
        \\&\le \Loss_{\dD}(v_{g}) + \sqrt{\frac{6\Loss_{\dD}(v_{g}) \ln\Par{4|\cB_{2}|/\delta}}{3m}} + \sqrt{\frac{6\Loss_{\rsS_{3}}(v_{g}) \ln\Par{4|\cB_{2}|/\delta}}{m}} + \frac{18 \ln\Par{4|\cB_{2}|/\delta}}{m}
        ,
        \label{eq:thirdstep7}
    \end{align}
    where the first inequality follows from \cref{eq:thirdstep1} and $ v\in \cB_{2}$, the second inequality from $ v $ being a empirical minimizer of $ \ls_{\rS_{3}}$, so $ \ls_{\rS_{3}}(v)\leq \ls_{\rS_{3}}(v_{g}) $ and the last \cref{eq:thirdstep1}.
    Now by \cref{eq:thirdstep1} we have
    \begin{align}
        \ls_{\rS_{3}}(v_{g})\leq 2\ls_{\cD}(v_{g})+\frac{12\ln{\left(4|\cB_{2}|/\delta \right)}}{m}
        \label{eq:thirdstep5}
    \end{align}
    where the inequality follows by $ \sqrt{ab}\le a+b $ for $ a,b>0$.
    This implies that
    \begin{align}
        \sqrt{\frac{6\Loss_{\rsS_{3}}(v_{g}) \ln\Par{4|\cB_{2}|/\delta}}{m}}
        &\le \sqrt{\frac{6\left(2\ls_{\cD}(v_{g})+\frac{12\ln{\left(4|\cB_{2}|/\delta \right)}}{m}\right) \ln\Par{4|\cB_{2}|/\delta}}{m}}
        \\&\le \sqrt{\frac{12\ls_{\cD}(v_{g}) \ln\Par{4|\cB_{2}|/\delta}}{m}} + \frac{18\ln{\left(4|\cB_{2}|/\delta \right)}}{m}
        ,
        \label{eq:thirdstep6}
    \end{align}
    where the first inequality follows from \cref{eq:thirdstep5} and the second by $ \sqrt{a+b}\leq \sqrt{a}+\sqrt{b}$.
    Whereby plugging \cref{eq:thirdstep6} into \cref{eq:thirdstep7} gives that
    \begin{align}
        \ls_{\cD}(v)
        &\le \ls_{\cD}(v_{g})+ \sqrt{\frac{6\Loss_{\dD}(v_{g}) \ln\Par{4|\cB_{2}|/\delta}}{3m}} + \sqrt{\frac{12\ls_{\cD}(v_{g}) \ln\Par{4|\cB_{2}|/\delta}}{m}}
        \\&\qquad+ \frac{18\ln{\left(4|\cB_{2}|/\delta \right)}}{m} + \frac{18 \ln\Par{4|\cB_{2}|/\delta}}{m}
        \\&\le \ls_{\cD}(v_{g})+ \sqrt{\frac{36\Loss_{\dD}(v_{g}) \ln\Par{4|\cB_{2}|/\delta}}{3m}} +\frac{36\ln{\left(4|\cB_{2}|/\delta \right)}}{m}
        \\&\le \ls_{\cD}(v_{g})+ \sqrt{\frac{36\Loss_{\dD}(v_{g}) \ln\Par{16\ln{\left(m \right)}/\delta}}{3m}} +\frac{36\ln{\left(16\ln{\left(m \right)}/\delta \right)}}{m}
        ,
        \label{eq:thirdstep8}
    \end{align}
    where the first inequality follows by \cref{eq:thirdstep6}, and the last by $ |\cB_{2}|\leq \left\lfloor\log_{2}(\sqrt{m})\right\rfloor+2\leq 4\ln{\left(m \right)}$, since we consider the case $ m\geq 14$.
    Now by using \cref{eq:thirdstep0} and $ \sqrt{ab}\leq a+b $ we get that
    \begin{align}
        \ls_{\cD}(v_{g})\leq 2\err_{\cD}(\fs) + \frac{6C}{m} \Br*{\frac{\hat{d}}{\theta^{2}} \cdot \Ln^{3/2}\Par[\bbig]{\frac{\theta^{2}m}{\hat{d}}} + \ln\frac{10e}{\delta}}
        \label{eq:thirdstep2}
    \end{align}
    Thus, we have that
    \begin{align}
        &\sqrt{\frac{36\Loss_{\cD}(v_{g}) \ln\Par{12\ln{\left(m \right)}/\delta}}{m}}
        \\&\quad\le \sqrt{\frac{36\left(2\err_{\cD}(\fs) + \frac{6C}{m} \Br*{\frac{\hat{d}}{\theta^{2}} \cdot \Ln^{3/2}\Par[\bbig]{\frac{\theta^{2}m}{\hat{d}}} + \ln\frac{10e}{\delta}}\right)\ln\Par{12\ln{\left(m \right)}/\delta}}{m}}
        \\&\quad\le \sqrt{\frac{72\err_{\cD}(\fs)\ln\Par{12\ln{\left(m \right)}/\delta}}{m}}+ \frac{14C}{m} \Br*{\frac{\hat{d}}{\theta^{2}} \cdot \Ln^{3/2}\Par[\bbig]{\frac{\theta^{2}m}{\hat{d}}} + 12\ln{\left(\frac{28\ln{\left(m \right)}}{\delta} \right)}}
        ,
        \label{eq:thirdstep4}
    \end{align}
    where the first inequality follows from \cref{eq:thirdstep2}, and the second by $ \sqrt{a+b}\leq \sqrt{a}+\sqrt{b}$.
    Now plugging in \cref{eq:thirdstep0} and \cref{eq:thirdstep4} into \cref{eq:thirdstep8} we get that
    \begin{align}
        \ls_{\cD}(v)
        &\le \ls_{\cD}(v_{g})+ \sqrt{\frac{36\Loss_{\dD}(v_{g}) \ln\Par{16\ln{\left(m \right)}/\delta}}{3m}} +\frac{36\ln{\left(16\ln{\left(m \right)}/\delta \right)}}{m}
        \\&\le \err_{\cD}(\fs) + \sqrt{\frac{3C\err_{\cD}(\fs)}{m} \cdot \left(\frac{\hat{d}}{\theta^{2}} \cdot \Ln^{3/2}\Par[\bbig]{\frac{\theta^{2}m}{\hat{d}}} + \ln\frac{10e}{\delta}\right)}
        \\&\qquad+ \frac{3C}{m} \Br*{\frac{\hat{d}}{\theta^{2}} \cdot \Ln^{3/2}\Par[\bbig]{\frac{\theta^{2}m}{\hat{d}}} + \ln\frac{10e}{\delta}}
        \\&\qquad+ \sqrt{\frac{72\err_{\cD}(\fs)\ln\Par{12\ln{\left(m \right)}/\delta}}{m}}+ \frac{14C}{m} \Br*{\frac{\hat{d}}{\theta^{2}} \cdot \Ln^{3/2}\Par[\bbig]{\frac{\theta^{2}m}{\hat{d}}} + 12\ln{\left(\frac{28\ln{\left(m \right)}}{\delta} \right)}}
        \\&\qquad+ \frac{36\ln{\left(16\ln{\left(m \right)}/\delta \right)}}{m}
        \\&\le \err_{\cD}(\fs)
        +\sqrt{\frac{11C\err_{\cD}(\fs)}{m} \cdot \left(\frac{\hat{d}}{\theta^{2}} \cdot \Ln^{3/2}\Par[\bbig]{\frac{\theta^{2}m}{\hat{d}}} + \ln\left(\frac{28\ln{\left(m \right)}}{\delta}\right)\right)}
        \\&\qquad+ \frac{14C}{m} \Br*{\frac{\hat{d}}{\theta^{2}} \cdot \Ln^{3/2}\Par[\bbig]{\frac{\theta^{2}m}{\hat{d}}} + 16\ln{\left(\frac{28\ln{\left(m \right)}}{\delta} \right)}}
        .
        \label{eq:thirdstep9}
    \end{align}
    Let the above event be denoted $E_{3}$.
    Thus, we have shown the above for any realization $ S_{1} $ and $ S_{2} $ of $ \rS_{1} $ and $ \rS_{2} $ and the randomness of \cref{alg:adaboost} which are in $ E_{1}$, and $ \rS_{3} $ on th event $ E_{2} $ the output of \cref{alg:main} achieves the error bound of \cref{eq:thirdstep9}.
    Thus, since the randomness of $ \rS_{1},\rS_{2},\rS_{3} $ and the randomness over \cref{alg:adaboost} are independent and $ E_{1} $ and $ E_{2} $ both happened with probability at least $ 1-\delta/2 $ over respectively $ \rS_{1},\rS_{2} $ and the randomness of \cref{alg:adaboost} and $ \rS_{3}$, the proof follows by (let $ \rr $ denote the randomness of \cref{alg:adaboost})
    \begin{align}
        \p_{\rS \sim \cD^{m},\rr}\left[E_{3}\right]
        &\ge \e_{\rS_{1},\rS_{2} \sim \cD^{m/3},\rr}\left[\p_{\rS_{3} \sim \cD^{m/3}}\left[E_{3}\right]\ind\{ E_{1} \} \right]
        \\&\ge \e_{\rS_{1},\rS_{2} \sim \cD^{m/3},\rr}\left[\p_{\rS_{3} \sim \cD^{m/3}}\left[E_{2},E_{3}\right]\ind\{ E_{1} \} \right]
        \\&\ge \e_{\rS_{1},\rS_{2} \sim \cD^{m/3},\rr}\left[\p_{\rS_{3} \sim \cD^{m/3}}\left[E_{2}\right]\ind\{ E_{1} \} \right]
        \\&\ge (1-\delta/2)^2
        \\&\ge 1-\delta
        ,
    \end{align}
    where the third inequality follows from $ E_{1} $ and $ E_{2} $ implying $ E_{3} $ and the fourth inequality by $E_{2} $ given $ E_{1} $ happens holds with probability at least $ 1-\delta/2 $ over $ \rS_{3}$ and that $ E_{1} $ holds with probability at least $ 1-\delta/2 $ over $ \rS_{1},\rS_{2} $ and the randomness $ \rr $ over \cref{alg:adaboost} which concludes the proof of the theorem.
  \end{proof}

  \section{Lower Bound}\label{sec:lowerbound}

In this section, we present the proof of our lower bound on the sample complexity of agnostic weak-to-strong learning.
We first re-state the result in a more general form than that in \cref{sec:intro}.

\begin{theorem}\label{lem:lowerboundfinal}
  For all integer $d > 0$ and $\gamma \in (0, 1]$ such that $d \ge 8\log_{2}(2/\gamma^{2})$, %
  and all $\eps_{0}, \delta_{0} \in (0, 1]$,
  there exist
  a universe $\cX$,
  a base class $\cB \subseteq \Set{\pm 1}^\cX$ with VC dimension at most $d$,
  a reference class $\cF \subseteq \Set{\pm 1}^\cX$,
  and a $(\gamma, \delta_{0}, \eps_{0}, m_{0}, \cF, \cB)$ agnostic weak learner
  with $m_{0} = \Ceil[\bbig]{\frac{8d\ln\Par{4/(\delta_{0}\gamma^{2})}}{\eps_{0}^{2}}}$
  such that
  for any $L \in (0, 1/2)$
  and any learner $\cA$,
  it holds that there exists data distribution $\cD$
  such that
  $ \inf_{f \in \cF} \Set{\err_{\cD}(f)} = L$
  and for $m \ge \frac{d}{\gamma^{2}L(1-2L)^{2}}$
  we have with probability at least $ 1/50 $ over $ \rS\sim \cD^{m} $  that
  \begin{align}
    \Ev_{\rS \sim \cD^{m}}{\err_{\cD}(\cA(\rS))}
    \ge \inf_{f \in \cF}\err_{\cD}(f) + \frac{2}{50} \sqrt{\frac{d \inf_{f \in \cF}\err_{\cD}(f)}{32\gamma^{2} m \log_{2}(2/\gamma^{2})}}
    .
  \end{align}
  Furthermore,
  for all $\eps \in (0, \sqrt{2}/2]$,
  $\delta \in (0, 1)$,
  and any learning algorithm $\cA$,
  there exists data distribution $\cD$
  such that if $m \le \ln(1/(4\delta))/(2\eps^{2})$,
  then with probability at least $\delta$ over $\rS \sim \cD^{m}$
  we have that
  \begin{align}
    \err_{\cD}(\cA(\rS))
    \ge \inf_{f \in \cF} \err_{\cD}(f) + \eps
    ,
  \end{align}
  and for any $\eps \in (0, \sqrt{2}/16]$,
  and any learning algorithm $\cA\colon (\cX\times \Set{\pm 1})^{*} \to \Set{\pm 1}^\cX$
  there exists data distribution $\cD$ such that
  if $m < \frac{d}{2048\gamma^{2}\log_{2}(2/\gamma^{2})\eps^{2}}$,
  then with probability at least $1/8$ over $\rS \sim \cD^{m}$ we have that
  \begin{align}
    \err_{\cD}(\cA(\rS))
    \ge \inf_{f \in \cF} \err_{\cD}(f) + \eps
    .
  \end{align}
\end{theorem}

To prove \cref{lem:lowerboundfinal} we need the following lemma giving the construction of a hard instance.

\begin{lemma}\label{lem:lowerboundconstruction}
  Let $n, s > 0$ be integers, and $\eps_{0}, \delta_{0} \in (0, 1]$.
  If $n$ is a power of 2,
  then
  there exists
  a universe $\cX = [n \cdot s]$,
  a base class $\cB \subseteq \Set{\pm 1}^\cX$ with $\abs{\cB}=(2n)^{s}$,
  a reference class $\cF = \Set{\pm 1}^\cX$ (all possible mappings $\cX \to \{\pm 1\}$),
  and a $(1/\sqrt{n}, \delta_{0}, \eps_{0}, m_{0}, \cF, \cB)$ agnostic weak learner for any $m_{0} \ge \Ceil[\bbig]{\frac{8\ln(2\abs{\cB}/\delta_{0})}{\eps_{0}^{2}}}$.
\end{lemma}

We postpone the proof of \cref{lem:lowerboundconstruction} to the end of this section, and now show how to combine it with the following classic results to obtain the claimed bounds.

\begin{lemma}[\protect{\citet[Theorem 14.5]{DevroyeGL96}}]\label{lem:lowerbound1}
  Let $\cX$ be a universe
  and $\cF\subseteq \cX \to \Set{\pm 1}$ be a function class with $\VC(\cF) = d \ge 2$.
  Then,
  for any $L \in (0, 1/2)$,
  and any learning algorithm $\cA\colon (\cX \times \Set{\pm 1})^{*} \to \Set{\pm 1}^\cX$
  there exists data distribution $\cD$ such that
  $ \inf_{f \in \cF} \Set{\err_{\cD}(f)} = L$
  and for $m \ge \frac{d-1}{2L} \max\Set{9, \frac{1}{(1-2L)^{2}}}$
  we have that
  \begin{align}
    \Ev_{\rS \sim \cD^{m}}{\err_{\cD}(\cA(\rS))}
    \ge \inf_{f \in \cF} \err_{\cD}(f) + \sqrt{\frac{(d-1) \cdot \inf_{f \in \cF} \err_{\cD}(f)}{24m}} e^{-8}
    .
  \end{align}
\end{lemma}

By slightly modifying the proof of \citet{DevroyeGL96}, we conclude that the above lower bound holds with constant probability, a result provided in the next lemma.
For completeness, we provide its proof in the end of this appendix.

\begin{lemma}\label{lem:lowerbound1withconstantprob}
  Let $\cX$ be a universe
  and $\cF\subseteq \cX \to \Set{\pm 1}$ be a function class with $\VC(\cF) = d \ge 2$.
  Then,
  for any $L \in (0, 1/2)$,
  and any learning algorithm $\cA\colon (\cX \times \Set{\pm 1})^{*} \to \Set{\pm 1}^\cX$
  there exists data distribution $\cD$ such that
  $ \inf_{f \in \cF} \Set{\err_{\cD}(f)} = L$
  and for $m \ge \frac{d}{L(1/2-L)^{2}}$ it holds with probability at least $1/50$ over $\rS \sim \cD^{m}$ that
  \begin{align}
    \err_{\cD}(\cA(\rS))
    \ge \inf_{f \in \cF} \err_{\cD}(f) + \frac{2}{50}\sqrt{\frac{d \cdot \inf_{f \in \cF} \err_{\cD}(f)}{16m}}
    .
  \end{align}
\end{lemma}

We furthermore need the following lower bound on the sample complexity of agnostic learning.
\begin{lemma}[\protect{\citet[Section 28.2, pgs.\ 393-398]{understandingmachinelearning}}]\label{lem:lowerbound2}
  Let $\cX$ be a universe and $\cF \subseteq \Set{\pm 1}^\cX$ be a function class with $\VC(\cF) = d \ge 2$.
  Then, for any
  $\eps \in (0, 1/\sqrt{2}]$,
  $\delta \in (0,1)$,
  and any learning algorithm $\cA\colon (\cX \times \Set{\pm 1})^{*} \to \Set{\pm 1}^\cX$
  there exists a data distribution $\cD$ such that
  if $m \le \ln(1/(4\delta))/(2\eps^{2})$,
  then with probability at least $\delta$ over $\rS \sim \cD^{m}$
  we have that
  \begin{align}
    \err_{\cD}(\cA(\rS))
    \ge \inf_{f \in \cF}\err_{\cD}(f) + \eps
    .
  \end{align}
  Furthermore,
  for any $\eps \in (0, 1/(8\sqrt{2})]$
  and any learning algorithm $\cA\colon (\cX \times \Set{\pm 1})^{*} \to \Set{\pm 1}^\cX$
  there exists data distribution $\cD$
  such that if $m < \frac{d}{512\eps^{2}}$,
  then with probability at least $1/8$ over $\rS \sim \cD^{m}$
  we have that
  \begin{align}
    \err_{\cD}(\cA(\rS))
    \ge \inf_{f \in \cF}\err_{\cD}(f) + \eps
    .
  \end{align}
\end{lemma}

With the above lemmas in place we now give the proof of \cref{lem:lowerboundfinal}.

\begin{proof}[Proof of \cref{lem:lowerboundfinal}]
We start by applying \cref{lem:lowerboundconstruction} with parameters $\eps_0$, $\delta_0$, $n = 2^r$ for $r \in \Z_{\ge 0}$ such that
\begin{equation}
  n = 2^{r} \le 1/\gamma^{2} < 2^{r+1}
  ,
  \templabel
\end{equation}
and $s = \Floor{d/\log_{2}(2n)}$.
Notice that $s$ is a positive integer since $n \le 1/\gamma^{2}$, and, by hypothesis, $d \ge \log_{2}(2/\gamma^{2})$.
\begin{align}
  \frac{d}{\log_{2}(2n)}
  &\ge \frac{d}{\log_{2}(2/\gamma^{2})} \tag{by \cref{\lasttemplabel}}
  \\&\ge 1 \tag{as, by hypothesis, $d \ge \log_{2}(2/\gamma^{2})$}
  .
\end{align}
The base class $\cB$ ensured by \cref{lem:lowerboundconstruction} satisfies $\abs{\cB} = (2n)^s$,
so
\begin{align}
  \VC(\cB)
  &\le \log_{2}(\abs{\cB})
  \\&= s \log_{2}(2n)
  \\&= \Floor[\bbig]{\frac{d}{\log_{2}(2n)}} \log_{2}(2n) \tag{by the choice of $s$}
  \\&\le d
  ,
\end{align}
as desired.

Moreover, \cref{lem:lowerboundconstruction} guarantees the existence of a $(\frac{1}{\sqrt{n}},\delta_{0},\eps_{0},m_{0},\cF,\cB)$ agnostic weak learner, denoted $\Weaklearner$, for the reference class $\cF = \Set{\pm 1}^\cX$ for any $m_{0} \ge \Ceil*{8\ln(2\abs{\cB}/\delta_{0})/\eps_{0}^{2}}$.
For later use, we choose $m_0 = \Ceil*{8d \ln(4/(\delta_{0}\gamma^{2}))/\eps_{0}^{2}}$, which is a valid choice since
\begin{align}
  \Ceil*{\frac{8 \ln(2\abs{\cB}/\delta_{0})}{\eps_{0}^{2}}}
  &= \Ceil*{\frac{8 \ln(2 (2n)^s/\delta_{0})}{\eps_{0}^{2}}}
  \\&\le \Ceil*{\frac{8s \ln(2 (2n)/\delta_{0})}{\eps_{0}^{2}}} \tag{as $s \ge 1$}
  \\&\le \Ceil*{\frac{8d \ln(4/(\delta_{0}\gamma^{2}))}{\log_{2}(2n)\eps_{0}^{2}}} \tag{as $s = \Floor{d/\log_{2}(2n)}$ and, by choice, $n \le 1/\gamma^{2}$}
  \\&\le \Ceil*{\frac{8d \ln(4/(\delta_{0}\gamma^{2}))}{\eps_{0}^{2}}} \tag{as $n \ge 1$}
  .
\end{align}
We claim that $\Weaklearner$ is also a $(\gamma,\delta_{0},\eps_{0},m_{0},\cF,\cB)$ agnostic weak learner.
Indeed, given any $\cD' \in \DistribOver(\cX \times \Set{\pm 1})$,
as $\cF$ consists of all possible mappings from $\cX$ to $\Set{\pm 1}$,
we have that $\sup_{f \in \cF} \Ev_{(\rx,\ry) \sim \cD'}{\ry \cdot f(\rx)} \ge 0$.
Thus, it holds that
\begin{math}
  \frac{1}{\sqrt{n}} \sup_{f \in \cF} \Ev_{(\rx,\ry) \sim \cD'}{\ry \cdot f(\rx)}
  \ge \gamma \sup_{f \in \cF} \Ev_{(\rx,\ry) \sim \cD'}{\ry \cdot f(\rx)}
  ,
\end{math}
since $1/\sqrt{n} \ge \gamma$, by \cref{\lasttemplabel}.

Finally, since $\cF$ is the set of all possible mappings from $\cX = [n\cdot s]$ to $\Set{\pm 1}$,
we have that
\begin{align}
  \VC(\cF)
  &= n \cdot s
  \\&= 2^{r} \cdot \Floor*{\frac{d}{\log_{2}(2^{r+1})}} \tag{by the choice of $n$ and $s$}
  \\&= 2^{r} \cdot \Floor*{\frac{d}{r+1}}
  \\&\le \frac{d}{\gamma^{2}} \tag{by \cref{\lasttemplabel}}
  .
\end{align}
On the other hand,
\begin{align}
  \VC(\cF)
  &= 2^{r} \cdot \Floor*{\frac{d}{\log_{2}(2^{r+1})}}
  \\&\ge \frac{1}{2\gamma^2} \cdot \Floor*{\frac{d}{\log_{2}(2/\gamma^{2})}} \tag{by \cref{\lasttemplabel}}
  \\&\ge \frac{8}{2\gamma^{2}} \tag{as, by hypothesis, $d/\log_{2}(2/\gamma^{2}) \ge 8$}
  \\&\ge 2 \tag{as, by hypothesis, $\gamma \le 1$}
  ,
\end{align}
allowing us to apply \cref{lem:lowerbound1withconstantprob} and \cref{lem:lowerbound2}, respectively, to obtain the thesis.
\end{proof}

With the proof of \cref{lem:lowerboundfinal} done, we now prove \cref{lem:lowerboundconstruction}.

\begin{proof}[Proof of \cref{lem:lowerboundconstruction}]
We start by considering $[n]$ as our universe for $n$ a power of $2$.
Let $v^{(1)},\ldots,v^{(n)}$ be a set of $n$ pairwise orthogonal vectors in $\Set{\pm 1}^{n}$, which can be chosen as the rows of a Hadamard matrix of size $n \times n$.\footnote{We assume $n$ to be a power of $2$ as this suffices to ensure the existence of such an $n \times n$ Hadamard matrix.}

Let now $\cD$ be a probability distribution over $[n]$, which can be seen as a vector in $[0, 1]^{n}$ with $\sum_{i=1}^{n} \cD_{i} = 1$.
Scaling $v^{(1)},\ldots,v^{(n)}$ by $1/\sqrt{n}$ we obtain a orthonormal basis, whereby
\begin{align}
  \cD
  = \sum_{i=1}^{n} \Angle[\bbig]{\cD, \frac{v^{i}}{\sqrt{n}}} \cdot \frac{v^{i}}{\sqrt{n}}
  .
  \templabel
\end{align}
Moreover, we have that
\begin{align}
  \frac{1}{n}
  &= \frac{(\sum_{i=1}^{n}\cD_{i})^{2}}{n}
  \\&\le \sum_{i=1}^{n}\cD_{i}^{2} \tag{by Cauchy-Schwarz}
  \\&= \normtwo{\cD}^{2}
  \\&= \normtwo[\bbig]{\sum_{i=1}^{n} \Angle[\bbig]{\cD, \frac{v^{(i)}}{\sqrt{n}}} \cdot \frac{v^{(i)}}{\sqrt{n}}}^{2} \tag{by \cref{\lasttemplabel}}
  \\&= \frac{1}{n} \sum_{i=1}^{n} \Angle{\cD, v^{(i)}}^{2}
  ,
\end{align}
where the last equality follows from $\Angle{v^{(i)}, v^{(j)}}$ being $0$ for $i \ne j$ and $n$ for $i = j$.
By averaging, the above implies that for any $\cD \in \DistribOver([n])$ there exists $i \in [n]$ such that $\Angle{\cD, v^{(i)}}^{2} \ge 1/n$, so that either $\Angle{\cD, v^{(i)}} \ge 1/\sqrt{n}$ or $-\Angle{\cD, v^{(i)}} \ge 1/\sqrt{n}$.
Similarly, denoting by $\odot$ the entry-wise product, we have that for any $y \in \{\pm 1\}^{n}$, the vectors $y \odot v^{(1)}/\sqrt{n}, \ldots, y \odot v^{(n)}/\sqrt{n}$ form an orthonormal basis of $\mathbb{R}^{n}$.
Thus, an analogous implies that for any $\cD \in \DistribOver([n])$ and $y \in \{\pm 1\}^{n}$ there exists $i \in [n]$ such that either $\Angle{\cD, y \odot v^{(i)}} \ge 1/\sqrt{n}$ or $\Angle{\cD, y \odot (-v^{(i)})} \ge 1/\sqrt{n}$.
Overall, we can conclude that for any $\cD \in \DistribOver([n])$ and any labeling $y \in \{\pm 1\}^{n}$ there exists
\begin{math}
  v \in \Set{v^{(1)}, \ldots, v^{(n)}, -v^{(1)}, \ldots, -v^{(n)}}
\end{math}
such that
\begin{align}
  \Ev_{\rj \sim \cD}{y_{\rj} \cdot v_{\rj}}
  &= \sum_{j=1}^{n} \cD_{j} y_{j} v_{j}
  \\&= \Angle{\cD, y \odot v}
  \\&\ge \frac{1}{\sqrt{n}}
  .
  \templabel
\end{align}

Consider the base class $\cB\colon [n\cdot s] \to \Set{\pm 1}$ consisting of the possible concatenations of $s$ vectors in $V \coloneqq \{v^{(1)},\ldots,v^{(n)},-v^{(1)},\ldots,-v^{(n)}\}$.
That is, $\cB = \{(w^{(1)},\ldots,w^{(s)}) \in \Set{\pm 1}^{n \cdot s} : w^{(1)},\ldots,w^{(s)} \in V\}$.
We claim that for any $\cD \in \DistribOver([n\cdot s])$ and $y \in \Set{\pm 1}^{n\cdot s}$ there exists $h \in \cB$ such that $\Ev_{\rj \sim \cD}{y_{\rj} h_{\rj}} \ge 1/\sqrt{n}$.
To see this, consider $h = (w^{(1)},\ldots,w^{(s)})$ such that each $w^{(i)}$ satisfies $\sum_{j=(i-1) \cdot n+1}^{i \cdot n} \cD_{j}y_{j}w^{(i)}_{j - (i-1) \cdot n} \ge \frac{1}{\sqrt{n}} \sum_{j=(i-1) \cdot n + 1}^{i \cdot n} \cD_{j}$.
There must exist such $w^{(i)}$ as this is trivially the case when $\sum_{j=(i-1) \cdot n+1}^{i \cdot n} \cD_{j}=0$ and, otherwise, we have that
\begin{align}
  \sum_{j=(i-1) \cdot n+1}^{i \cdot n} \cD_{j} y_{j} w^{(i)}_{j - (i-1) \cdot n}
  = \sum_{j=(i-1) \cdot n+1}^{i \cdot n} \cD_{j} \cdot \sum_{j=(i-1) \cdot n+1}^{i \cdot n} y_{j}w^{(i)}_{j - (i-1) \cdot n} \frac{\cD_{j}}{\sum_{k=(i-1) \cdot n+1}^{i \cdot n} \cD_{k}}
  ,
\end{align}
with $\Par[\big]{\cD_{j}/(\sum_{k=(i-1) \cdot n+1}^{i \cdot n} \cD_{k})}_{j = (i-1) \cdot n + 1}^{i \cdot n}$ being a probability distribution, so the existence of the desired $w^{(i)}$ follows from \cref{\lasttemplabel}.

So far, for any $n$ integer power of $2$ and $s \in \mathbb{N}$ we have constructed a universe $\cX = [n \cdot s]$ and a base class $\cB \in \Set{\pm 1}^\cX$ such that for any $\cD \in \DistribOver(\cX)$ and $f\colon \cX \to \Set{\pm 1}$ there exists $h \in \cB$ such that
\begin{align}
  \Ev_{x \sim \cD}{h(x)f(x)}
  \ge \frac{1}{\sqrt{n}}
  .
  \label{eq:errorany5}
\end{align}

To conclude, we show the existence of a $(\frac{1}{\sqrt{n}},\delta_{0},\eps_{0},m_{0},\cF,\cB)$ agnostic weak learner, for the reference class $\cF =\Set{\pm 1}^\cX$, the set of all functions from $\cX$ to $\Set{\pm 1}$, as long as $m_{0} \ge \Ceil[\big]{\frac{8\ln(2\abs{\cB}/\delta_{0})}{\eps_{0}^{2}}}$.
Concretely, we shall prove that there exists a mapping $\cW\colon (\cX\times \Set{\pm 1})^{*} \to \Set{\pm 1}^{\cX}$ (from training sequences to classifiers), such that for all $\cD' \in \DistribOver(\cX \times \Set{\pm 1})$,
when $\cW$ is provided a sample $\rS \sim \cD'^{m_{0}}$ we have that $\cW(\rS) = h \in \cB$ and with probability at least $1 - \delta_{0}$ over $\rS$ it holds that
\begin{align}
  \Ev_{(\rx,\ry) \sim \cD'}{h(\rx)\ry}
  \ge \frac{1}{\sqrt{n}} \cdot \sup_{f \in \cF}\Ev_{(\rx,\ry) \sim \cD'}{f(\rx)\ry} - \eps_{0}
  .
\end{align}
To this end, it suffices to show that for any $\cD'$
\begin{align}
  \argmax_{h \in \cB}\Set{\Ev_{(\rx,\ry) \sim \cD'}{h(\rx)\ry}}
  \ge \frac{1}{\sqrt{n}} \cdot \sup_{f \in \cF}\Ev_{(\rx,\ry) \sim \cD'}{f(\rx)\ry}
  \label{eq:errorany3}
\end{align}
and to let $\Weaklearner(\sS) = \argmax_{h \in \cB}\Set{\Ev_{(\rx,\ry) \sim \sS}{h(\rx)\ry}}$.
To see why this is a $(\frac{1}{\sqrt{n}},\delta_{0},\eps_{0},m_{0},\cF,\cB)$ agnostic weak learner for any $m_{0} \ge \Ceil[\big]{\frac{8\ln(2\abs{\cB}/\delta_{0})}{\eps_{0}^{2}}}$, notice that, by Hoeffding's inequality, for any $h \in \cB$ it holds by the choice of $m_{0}$ that
\begin{align}
  \Prob_{\rS}*{\abs{\Ev_{(\rx,\ry) \sim \cD'}{h(\rx)\ry} - \Ev_{(\rx,\ry) \sim \rS}{h(\rx)\ry}} \le \eps_{0}/2}
  &\ge 1 - 2\exp\Par*{-\frac{m_{0}\eps_{0}^{2}}{8}}
  \\&\ge 1 - \delta_{0}/\abs{\cB}
  .
\end{align}
So, by the union bound, it holds with probability at least $1 - \delta_{0}$ over $\rS \sim \cD'^{m_{0}}$ that for all $h \in \cB$
\begin{align}
  \abs{\Ev_{(\rx,\ry) \sim \cD'}{h(\rx)\ry} - \Ev_{(\rx,\ry) \sim \rS}{h(\rx)\ry}}
  \le \eps_{0}/2
  ,
  \label{eq:anyerror4}
\end{align}
and, thus,
\begin{align}
  \Ev_{(\rx,\ry) \sim \cD'}{\cW(\rS)(\rx)\ry}
  &\ge \Ev_{(\rx,\ry) \sim \rS}{\cW(\rS)(\rx)\ry} - \eps_{0}/2 \tag{by \cref{eq:anyerror4}}
  \\&= \sup_{h \in \cB}\Ev_{(\rx,\ry) \sim \rS}{h(\rx)\ry} - \eps_{0}/2
  \\&\ge \sup_{h \in \cB}\Ev_{(\rx,\ry) \sim \cD'}{h(\rx)\ry} - \eps_{0} \tag{by \cref{eq:anyerror4}}
  \\&\ge \frac{1}{\sqrt{n}} \sup_{f \in \cF}\Ev_{(\rx,\ry) \sim \cD'}{f(\rx)\ry} - \eps_{0}
  ,
\end{align}
where last inequality follows as long as for any distribution $\cD' \in \DistribOver(\cX\times \Set{\pm 1})$ there exists $h \in \cB$ satisfying \cref{eq:errorany3}.

To see why \cref{eq:anyerror4} holds let now $\cD'$ be a probability distribution over $\cX\times \Set{\pm 1}$.
We then have that for any $h\colon \cX \to \Set{\pm 1}$ that
\begin{align}
  \Ev_{(\rx,\ry) \sim \cD'}{h(\rx)\ry}
  &= \sum_{(x,y) \in \cX\times \Set{\pm 1}} h(x) y \cdot \cD'(x,y)
  \\&= \sum_{x \in \cX} h(x) (\cD'(x,1)-\cD'(x,-1))
  .
  \label{eq:errorany}
\end{align}
If $\cD'(x,1)-\cD'(x,-1) = 0$ for all $x \in \cX$, then $\Ev_{(\rx,\ry) \sim \cD'}{f(\rx)\ry} = 0$ for any $f \in \Set{\pm 1}^\cX$, so the claim holds for any $h \in \cB$.
If $\cD'(x,1)-\cD'(x,-1)\ne 0$ for some $x \in \cX$, we further write that
\begin{align}
  \Ev_{(\rx,\ry) \sim \cD'}{h(\rx)\ry}
  &=\sum_{x \in \cX} h(x)\sign(\cD'(x,1)-\cD'(x,-1)) \cdot \abs{\cD'(x,1)-\cD'(x,-1)}
  \\&= \Par*{\sum_{x \in \cX} h(x)\sign(\cD'(x,1)-\cD'(x,-1))\frac{\abs{\cD'(x,1)-\cD'(x,-1)}}{\sum_{x \in \cX} \abs{\cD'(x,1)-\cD'(x,-1)}}}
  \\&\qquad\cdot \sum_{x \in \cX} \abs{\cD'(x,1)-\cD'(x,-1)}
  ,
  \label{eq:errorany2}
\end{align}
where we recall that we define $\sign(0) = 1$.
We now notice that $\sign(\cD'(x,1)-\cD'(x,-1))$ is indeed a mapping from $\cX$ to $\Set{\pm 1}$, and that $\abs{\cD'(x,1)-\cD'(x,-1)}/\sum_{x \in \cX} \abs{\cD'(x,1)-\cD'(x,-1)}$ defines a probability distribution over $\cX$.
Thus, by \cref{eq:errorany5}, there exists $h \in \cB$ such that
\begin{align}
  \sum_{x \in \cX} h(x)\sign(\cD'(x,1)-\cD'(x,-1))\frac{\abs{\cD'(x,1)-\cD'(x,-1)}}{\sum_{x \in \cX} \abs{\cD'(x,1)-\cD'(x,-1)}}
  &\ge \frac{1}{\sqrt{n}}
  .
\end{align}
Also, by \cref{eq:errorany}, we have that $\sign(\cD'(x,1)-\cD'(x,-1))$ is a maximizer of $\Ev_{(\rx,\ry) \sim \cD'}{f(\rx)\ry}$ over $f \in \cF$ and is such that
\begin{align}
  \Ev_{(\rx,\ry) \sim \cD'}{\sign(\cD'(x,1)-\cD'(x,-1))(\rx)\ry}
  &= \sum_{x \in \cX} |\cD'(x,1)-\cD'(x,-1)|
  \\&= \sup_{f \in \cF}\Ev_{(\rx,\ry) \sim \cD'}{f(\rx)\ry}
  .
\end{align}
Combining these two observations we conclude from \cref{eq:errorany2} that there exists $h \in \cB$ such that
\begin{align}
  \Ev_{(\rx,\ry) \sim \cD'}{h(\rx)\ry} \ge \frac{1}{\sqrt{n}} \sup_{f \in \cF}\Ev_{(\rx,\ry) \sim \cD'}{f(\rx)\ry}.
\end{align}
as claimed, which combined with the case where $\cD'(x,1)-\cD'(x,-1)=0$ for all $x \in \cX$ concludes the proof.
\end{proof}

With the proof of \cref{lem:lowerboundconstruction} done we now give the proof of \cref{lem:lowerbound1}, for completeness.
To this end we need the following two technical lemmas.
\begin{lemma}[\citet{understandingmachinelearning}, page 422, Lemma B.1]\label{lem:lowerbound1withconstantprobtechnical1}
Let \( \rZ \) be a random variable that takes values in \([0, 1]\).
Assume that \( \Ev{\rZ} = \mu \).
Then, for any \( a \in (0, 1) \),
\[
  \p[\rZ > 1 - a]
  \ge \frac{\mu - (1 - a)}{a}
  .
\]
\textit{This also implies that for every \( a \in (0, 1) \),}
\[
  \p[\rZ > a]
  \ge \frac{\mu - a}{1 - a} \ge \mu - a
  .
\]
\end{lemma}

\begin{lemma}[\citet{understandingmachinelearning}, page 428, Lemma B.11]\label{lem:lowerbound1withconstantprobtechnical2}
  Let \( \rX \) be a \( (m, p) \) binomial variable, i.e., $\rX=\sum_{i=1}^{m} \rZ_{i}$, where $\rZ_{i} \in \left\{0,1\right\}$ are \iid with $\Ev{\rZ_{i}} = p$, and assume that \( p = (1 - \epsilon)/2 \).
  Then,
  \[
    \p[\rX \ge m/2]
    \ge \frac{1}{2} \left( 1 - \sqrt{1 - \exp\left(- \frac{m \epsilon^2}{1 - \epsilon^2} \right)} \right)
    .
  \]
\end{lemma}
We now give the proof of \cref{lem:lowerbound1withconstantprob}.

\begin{proof}[Proof of \cref{lem:lowerbound1withconstantprob}]
Consider a sequence $x_{1},\ldots,x_{d}$ which is shattered by the function class $\cF$, we will in the following, for convenience consider, the enumeration of these points as the universe $[d]$, i.e., $i \in [d]$ is $x_{i}$.
Furthermore, let $b \in \{\pm 1\}^{d}$, and for each such $b$ consider a distribution $\cD_{b}$ on $[d]\times\{\pm 1\}$, where $p$ in the following is strictly less than $1/(d-1)$,
\begin{align}
  \cD_{b}(x,y) = \begin{cases}
    (1/2 + c)p \text{ if } y=b_{i}, x \in [d-1]
    \\(1/2 - c)p \text{ if } y=-b_{i}, x \in [d-1]
    \\1-(d-1)p \text{ if } y=1, x=d
  \end{cases}
\end{align}
that is any point $i$ in $[d-1]$ is chosen with probability $p$ and with probability $1/2+c$ it gets the same label as $b_{i}$ or with probability $1/2-c$ it gets the label $-b_{i}$.
We can see drawing a sample from the above distribution as follows: We first draw a random point $\rX$ from $[d]$, where $\rX$ is equal to $x$ for $x \in [d-1]$ with probability $p$ and $\rX$ is equal to $d$ with probability $1-(d-1)p$, (for later convince let this distribution over $[d]$ be denoted by $\cD$).
Furthermore, we draw a uniform random variable $\rU$, in $[0,1]$ and let
\begin{align}
  \rY
  &= \in d\{\rX = d\} + \sum_{j=1}^{d-1} 2 \in d\{\rX = j\} ( \in d\{\rU \le 1/2 + cb_{j}\}-1/2)
  \\&= \ell_{b}(\rX,\rU)
  .
\end{align}
We notice the loss of any $f$ is given by
\begin{align}
  \ls_{\cD_{b}}(f)
  &= \sum_{i=1}^{d}\sum_{y \in \{\pm 1\}} \cD_{b}(i,y) \cdot \in d\{f(i) \ne y\}
  \\&= p \left(\sum_{i=1}^{d-1} (1/2+c) \cdot \in d\{f(i) \ne b_{i}\} +(1/2-c) \cdot \in d\{f(i) \ne -b_{i}\} \right)
  \\&\quad+ (1 - (d-1)p) \cdot \in d\{f(d) \ne 1\}
  .
  \label{eq:loss1}
\end{align}
Thus, we see that a hypothesis in $\cF$ that evaluates to $b$ on $[d-1]$ and $1$ on $d$ will have the smallest possible loss (notice that such a hypothesis in $\cF$ exists since the hypothesis class shatters $x_{1},\ldots,x_{d}$), being equal to $p(d-1)(1/2-c)$, thus let $f_{b}$ be such a hypothesis in $\cF$, for a given $b \in \{\pm 1\}^{d}$, minimizing the loss function $\ls_{\cD_{b}}$.
We will set $p=\frac{L}{(d-1)(1/2-c)}$, such that the loss of $f_{b}$ is equal to $L$.
We notice that this implies that we have to set $c$ small enough such that $p< 1/(d-1)$, that is we have to have that $0<L/(1/2-c)<1$.
To this end, we will set $c = \sqrt{\frac{d}{64mL}}$, which satisfies the condition $c<1/2$ since $m \ge d/(L(1/2-L)^{2})$, implying that $L/(1/2-c) < 1$ is equivalent to $L<1/2-c =1/2-\sqrt{\frac{d}{64mL}}$, since $m \ge d/(L(1/2-L)^{2})$ we have that $1/2-\sqrt{\frac{d}{64mL}} \ge 1/2-(1/2-L)\sqrt{\frac{1}{32}}>L$, thus for these values of $c$, $p$ and $m \ge d/(L(1/2-L)^{2})$ we have that $p<1/(d-1)$.
Now using the expression of \cref{eq:loss1} we get that the excess risk of any $f$ to $f_{b}$ is lower bounded as follows,
\begin{align}
  \ls_{\cD_{b}}(f)-\ls_{\cD_{b}}(f_{b})
  \ge 2pc\sum_{i=1}^{d-1} \in d\{f(i)\ne f_{b}(i)\}
  .
\end{align}
this also implies that given a sample $\rS=(\rX_{1},\rY_{1}),\ldots,(\rX_{m},\rY_{m})$ drawn from $\cD_{b}$ we have for any learning algorithm $\cA$ that
\begin{align}
  \ls_{\cD_{b}}(\cA(\rS))-\ls_{\cD_{b}}(f_{b})
  \ge 2pc\sum_{i=1}^{d-1} \in d\{\cA(\rS)(i) \ne f_{b}(i)\}
  .
  \label{eq:lowerbound1withconstantprob2}
\end{align}

We will now show that
\begin{align}
  \e_{\rb \sim \{\pm 1\}^{d}}\left[\e_{\rS \sim \cD_{\rb}^{m}}\left[2pc\sum_{i=1}^{d-1} \in d\{\cA(\rS)(i) \ne f_{\rb}(i)\}\right]\right]
  \ge \frac{2pcd}{25}
  .
  \label{eq:lowerbound1withconstantprob1}
\end{align}
We notice that since $2pc\sum_{i=1}^{d-1} \in d\{\cA(\rS)(i)\ne f_{\rb}(i)\} \le 2pc d$, and is non negative, we have that for $a<2pcd$ an application of \cref{lem:lowerbound1withconstantprobtechnical1} gives us that
\begin{align}
    &\Ev_{\rb \sim \{\pm 1\}^{d}}*{\Prob_{\rS \sim \cD_{\rb}^{m}}*{2pc\sum_{i=1}^{d-1} \in d\{\cA(\rS)(i)\ne f_{\rb}(i)\} \ge a}}
    \\&\quad= \Ev_{\rb \sim \{\pm 1\}^{d}}*{\Prob_{\rS \sim \cD_{\rb}^{m}}*{\frac{2pc\sum_{i=1}^{d-1} \in d\{\cA(\rS)(i)\ne f_{\rb}(i)\}}{2pcd} \ge \frac{a}{2pcd}}}
    \\&\quad\ge \Ev_{\rb \sim \{\pm 1\}^{d}}*{\Ev_{\rS \sim \cD_{\rb}^{m}}*{2pc\sum_{i=1}^{d-1} \in d\{\cA(\rS)(i) \ne f_{\rb}(i)\}}}/(2pcd) -\frac{a}{2pcd}
    \\&\quad\ge \frac{1}{25}-\frac{a}{2pcd}
    ,
\end{align}
where we in the last inequality have used the lower bound of \cref{eq:lowerbound1withconstantprob1},
thus setting $a=2pcd/50$ we get that
\begin{align}
  &\e_{\rb \sim \{\pm 1\}^{d}}\left[\p_{\rS \sim \cD_{\rb}^{m}}\left[\ls_{\cD_{\rb}}(\cA(\rS))-\ls_{\cD_{\rb}}(f_{\rb}) \ge 2pcd/50 \right]\right]
  \\&\quad\ge \e_{\rb \sim \{\pm 1\}^{d}}\left[\p_{\rS \sim \cD_{\rb}^{m}}\left[2pc\sum_{i=1}^{d-1} \in d\{\cA(\rS)(i) \ne f_{\rb}(i)\} \ge 2pcd/50\right]\right]
  \\&\quad\ge \frac{1}{25} - \frac{1}{50}
  \\&\quad= \frac{1}{50}
  ,
\end{align}
implying that there exists $b \in \{\pm 1\}^{d}$ such that
\begin{align}
  \p_{\rS \sim \cD_{b}^{m}}\left[\ls_{\cD_{b}}(\cA(\rS))-\ls_{\cD_{b}}(f_{b}) \ge 2pcd/50 \right]
  \ge \frac{1}{50}
  .
\end{align}

Now using that $p = \frac{L}{(d-1)(1/2-c)}$, $c = \sqrt{\frac{d}{64mL}}$ we get that
\begin{align}
  pcd
  &= \frac{L}{(d-1)(1/2-c)}\cdot \sqrt{\frac{d}{64mL}} \cdot d
  \\&\ge \sqrt{\frac{dL}{16n}}
  ,
\end{align}
where the first inequality follows from $d \ge 2$ so $1/(d-1) \ge 1/d$ and $1/(1/2-c)\ge 2 $.
Thus, we conclude that there exists $b \in \{\pm 1\}^{d}$ such that
\begin{align}
  \p_{\rS \sim \cD_{b}^{m}}\left[\ls_{\cD_{b}}(\cA(\rS))-\ls_{\cD_{b}}(f_{b})\ge \frac{2}{50}\sqrt{\frac{dL}{16n}}\right]
  \ge \frac{1}{50}
  ,
\end{align}
as claimed.

Thus, we now show \cref{eq:lowerbound1withconstantprob1} that is
\begin{align}
  \e_{\rb \sim \{\pm 1\}^{d}}\left[\e_{\rS \sim \cD_{\rb}^{m}}\left[2pc\sum_{i=1}^{d-1} \in d\left\{ \cA(\rS)(i)\ne f_{\rb}(i)\right\}\right]\right] \ge\frac{2pcd}{25}.
\end{align}

We now use that $\rS \sim \cD_{b}^{m}$ has the same distribution as
\begin{align}
  \rS
  &= ((\rX_{1},\rY_{1}),\ldots,(\rX_{m},\rY_{m}))
  \\&\stackrel{\mathrm{distribution}}{=} ((\rX_{1},\ell_{b}(\rX_{1},\rU_{1})),\ldots,(\rX_{m},\ell_{b}(\rX_{m},\rU_{m})))
  \coloneqq (\rX,\ell_{b}(\rX,\rU))
  ,
\end{align}
where $\rX \sim \cD^{m}$ and $\rU \sim [0,1]^{m}$, and we use the above entrywise notation for $(\rX,\ell_{b}(\rX,\rU))$.
Using the above and \cref{eq:lowerbound1withconstantprob2} we have that
\begin{align}
  &\e_{\rS \sim \cD_{b}^{m}}\left[\ls_{\cD_{b}}(\cA(\rS))-\ls_{\cD_{b}}(f_{b})\right]
  \\&\quad\ge 2pc\sum_{i=1}^{d-1}\e_{\rX \sim \cD^{m}}\left[\e_{\rU \sim [0,1]^{m}}\left[ \in d\left\{ \cA((\rX,\ell_{b}(\rX,\rU)))(i) \ne f_{b}(i)\right\}\right]\right]
  ,
\end{align}
and taking expectation over $\rb \sim \{\pm 1\}^{d}$ we get that
\begin{align}\label{eq:lowerboundwithprob2}
  &\e_{\rb \sim \{\pm 1\}^{d}} \left[\e_{\rS \sim \cD_{\rb}^{m}}\left[\ls_{\cD_{\rb}}(\cA(\rS))-\ls_{\cD_{\rb}}(f_{\rb})\right] \right]
  \\&\ge 2pc\sum_{i=1}^{d-1}\e_{\rX \sim \cD^{m}}\left[\e_{\rb \sim \{\pm 1\}^{d}} \left[\e_{\rU \sim [0,1]^{m}}\left[ \in d\left\{ \cA((\rX,\ell_{\rb}(\rX,\rU)))(i)\ne f_{\rb}(i)\right\}\right]\right]\right]
  \\&= 2pc\sum_{i=1}^{d-1}\e_{\rX \sim \cD^{m}}\left[\e_{\rb_{-i} \sim \{\pm 1\}^{d-1}}\left[\e_{\rb_{i} \sim \{\pm 1\}}\left[\e_{\rU \sim [0,1]^{m}}\left[ \in d\left\{ \cA((\rX,\ell_{\rb}(\rX,\rU)))(i)\ne \rb_{i}\right\}\right]\right]\right]\right]
  ,
\end{align}
where we use $\rb_{i}$ to denote the $i^\text{th}$ entry of $\rb$ and $\rb_{-i}$ the remaining $d-1$ entries.
We now bound each term in this sum.

To that end we now consider any realization of $\rX=x$ and $\rb_{-i}=b_{-i}$.
Furthermore let $J_{i}=(j^i_{1},\ldots,j^i_{k})$, such that $j^i_{1}<\ldots<j^i_{k}$ and $x_{j^i_{1}},\ldots,x_{j^i_{k}} =i$, the indexes where $x$ is equal to $i$.
We recall that $(x,\ell_{\rb}(x,\rU)) = ((x_{1},\ell_{(b_{1},\ldots,\rb_{i},\ldots,b_{d})}(x_{1},\rU_{1})),\ldots,(x_{m},\ell_{(b_{1},\ldots,\rb_{i},\ldots,b_{d})}(x_{m},\rU_{m})))$, and let $(x,\ell_{\rb}(x,\rU))_{J_{i}}=((x_{j^i_{1}},\ell_{(b_{1},\ldots,\rb_{i},\ldots,b_{d})}(x_{j^i_{1}},\rU_{j^i_{1}})),\ldots,(x_{m},\ell_{(b_{1},\ldots,\rb_{i},\ldots,b_{d})}(x_{j^i_{k}},\rU_{j^i_{k}})))$ and $(x,\ell_{\rb}(x,\rU))_{-J_{i}}$ be the remaining entries of $(x,\ell_{\rb}(x,\rU))$.
In words, $(x,\ell_{\rb}(x,\rU))_{J_{i}}$ are the entries of $(x,\ell_{\rb}(x,\rU))$ that are has $x_{j}$ equal to $i$ and $(x,\ell_{\rb}(x,\rU))_{-J_{i}}$ are the remaining entries of $(x,\ell_{\rb}(x,\rU))$ that are not equal to $i$.
By the definition of $\ell_{(b_{1},\ldots,\rb_{i},\ldots,b_{d})}$ we have that $\ell_{(b_{1},\ldots,\rb_{i},\ldots,b_{d})}(x_{j^{i}_{t}},\rU_{j^{i}_{t}})=2( \in d\left\{\rU_{j^{i}_{t}}\le 1/2+c\rb_{i}\right\}-1/2):=\ell'(\rU_{j^{i}_{t}},\rb_{i})$ for $t=1,\ldots,k$ so only a function of $\rU_{j^i_{t}}$ and $\rb_{i}$.
Furthermore by the definition of $\ell_{(b_{1},\ldots,\rb_{i},b_{d})}(x,\rU)_{-J_{i}}=\ell''_{b_{-i}}(\rU_{-J_{i}})$ is only a function of $\rU_{-J_{i}}$ (the coordinates of $\rU$ not with index in $J_{i}$) and $b_{-i}$.
Using these observation we get that
\begin{align}
  &\e_{\rU \sim [0,1]^{m}}\left[ \in d\{ \cA((x,\ell_{(b_{1},\ldots,\rb_{i},\ldots,b_{d})}(x,\rU)))(i) \ne \rb_{i}\}\right]
  \\&= \sum_{y \in \{\pm 1\}^{m}} \p_{\rU \sim [0,1]^{m}}\left[(\ell_{(b_{1},\ldots,\rb_{i},\ldots,b_{d})}(x,\rU))=y\right] \cdot \in d\{\cA((x,y))(i)\ne \rb_{i}\}
  \\&= \sum_{y \in \{\pm 1\}^{m}} \p_{\rU \sim [0,1]^{|J_{i}|}}\left[\ell'(\rU_{J_i},\rb_i) = y_{J_{i}}\right] \cdot \p_{\rU \sim [0,1]^{m-|J_{i}|}}\left[ \in d\{\ell''_{b_{-i}}(\rU_{-J_{i}})\} = y_{-J_{i}}\right]
  \\&\qquad\cdot \in d\{ \cA((x,y))(i)\ne \rb_{i}\}
  \\&= \sum_{y_{-J_{i}} \in \{\pm 1\}^{d-|J_{i}|}} \p_{\rU \sim [0,1]^{m-|J_{i}|}}\left[\ell''_{b_{-i}}(\rU_{-J_{i}}) =y_{-J_{i}}\right]
  \\&\qquad\cdot \sum_{y_{J_{i}} \in \{\pm 1\}^{|J_{i}|}} \p_{\rU \sim [0,1]^{|J_{i}|}}\left[\ell'(\rU_{J_i},\rb_i)=y_{J_{i}}\right] \cdot \in d\{\cA((x,y))(i)\ne \rb_{i}\}
  .
  \label{eq:lowerboundwithprob1}
\end{align}

We notice that the first sum in the above expression is independent of $\rb_{i}$ thus we focus on the sum over $y_{J_{i}}$.
To this end consider any $y_{-J_{i}}$ we then have when taking expectation of $\rb_{i}$ over the second sum in the above that
\begin{align}
  &\e_{\rb_{i} \sim \{\pm 1\}}\left[\sum_{y_{J_{i}} \in \{\pm 1\}^{|J_{i}|}} \p_{\rU \sim [0,1]^{|J_{i}|}}\left[\ell'(\rU_{J_i},\rb_{i})=y_{J_{i}}\right] \cdot \in d\{\cA((x,y))(i)\ne \rb_{i}\}\right]
  \\&= \frac{1}{2} \sum_{y_{J_{i}} \in \{\pm 1\}^{|J_{i}|}} \p_{\rU \sim [0,1]^{|J_{i}|}}\left[\ell'(\rU_{J_{i}},1)=y_{J_{i}}\right] \cdot \in d\{\cA((x,y))(i)\ne 1\}
  \\&\qquad+ \frac{1}{2}\sum_{y_{J_{i}} \in \{\pm 1\}^{|J_{i}|}} \p_{\rU \sim [0,1]^{|J_{i}|}}\left[\ell'(\rU_{J_{i}},1)=y_{J_{i}}\right] \cdot \in d\{\cA((x,y))(i)\ne 1\}
\end{align}

We have by independence of $\rU_{j^i_{1}},\ldots,\rU_{j^i_{k}}$ that
\begin{align}
  \p_{\rU \sim [0,1]^{|J_{i}|}}\left[\ell'(\rU_{J_{i}},1) = y_{J_{i}}\right]
  &= \prod_{t=1}^{k}\p_{\rU_{j^{i}_{t}} \sim [0,1]} \left[\ell'(\rU_{j^{i}_{t}}, 1) = y_{j^{i}_{t}}\right]
  \\&= \prod_{t=1}^{k} (1/2+c)^{ \in d\{y_{j^{i}_{t}} = 1\}} \cdot (1/2-c)^{ \in d\{y_{j^{i}_{t}} = -1\}}
  \\&= (1/2+c)^{\sum_{t=1}^{k} \in d\{y_{j^{i}_{t}} = 1\}} \cdot (1/2-c)^{\sum_{t=1}^{k} \in d\{y_{j^{i}_{t}} = -1\}}
  ,
\end{align}
and similarly we have that
\begin{align}
  \p_{\rU \sim [0,1]^{|J_{i}|}}\left[\ell'(\rU_{J_{i}},-1) = y_{J_{i}}\right]
  &= \prod_{t=1}^{k}\p_{\rU_{j^{i}_{t}} \sim [0,1]} \left[\ell'(\rU_{j^{i}_{t}}, -1) = y_{j^{i}_{t}}\right]
  \\&= \prod_{t=1}^{k} (1/2-c)^{ \in d\{y_{j^{i}_{t}} = 1\}} \cdot (1/2+c)^{ \in d\{y_{j^{i}_{t}} = -1\}}
  \\&= (1/2-c)^{\sum_{t = 1}^{k} \in d\{y_{j^{i}_{t}} = 1\}} \cdot (1/2+c)^{\sum_{t = 1}^{k} \in d\{y_{j^{i}_{t}}=-1\}}
  ,
\end{align}
implying that if $\sum_{t=1}^{k} \in d\{y_{j^{i}_{t}}=1\}\ge \sum_{t=1}^{k} \in d\{y_{j^{i}_{t}}=-1\}$ or equivalently $\sign(\sum_{t=1}^{k} y_{j^{i}_{t}})=1$ (we take $\sign(0)=1$) then
\begin{align}
  \p_{\rU \sim [0,1]^{|J_{i}|}}\left[\ell'(\rU_{J_{i}},1)=y_{J_{i}}\right]
  &\ge \p_{\rU \sim [0,1]^{|J_{i}|}}\left[\ell'(\rU_{J_{i}},-1)=y_{J_{i}}\right]
\end{align}
and if $\sum_{t=1}^{k} \in d\{y_{j^{i}_{t}}=1\} < \sum_{t=1}^{k} \in d\{y_{j^{i}_{t}}=-1\}$ or equivalently $\sign(\sum_{t=1}^{k} y_{j^{i}_{t}})=-1$ then
\begin{align}
  \p_{\rU \sim [0,1]^{|J_{i}|}}\left[\ell'(\rU_{J_{i}},1)=y_{J_{i}}\right]
  &< \p_{\rU \sim [0,1]^{|J_{i}|}}\left[\ell'(\rU_{J_{i}},-1)=y_{J_{i}}\right]
\end{align}
whereby we conclude that
\begin{align}
  &\e_{\rb_{i} \sim \{\pm 1\}}\left[\sum_{y_{J_{i}} \in \{\pm 1\}^{|J_{i}|}} \p_{\rU \sim [0,1]^{|J_{i}|}}\left[\ell'(\rU_{J_i},\rb_i)=y_{J_{i}}\right] \in d\{\cA((x,y))(i)\ne \rb_{i}\}\right]
  \\&\quad= \frac{1}{2}\sum_{y_{J_{i}} \in \{\pm 1\}^{|J_{i}|}} \p_{\rU \sim [0,1]^{|J_{i}|}}\left[\ell'(\rU_{J_{i}},1)=y_{J_{i}}\right] \cdot \in d\{ \cA((x,y))(i) \ne 1\}
  \\&\qquad+ \frac{1}{2}\sum_{y_{J_{i}} \in \{\pm 1\}^{|J_{i}|}}
    \p_{\rU \sim [0,1]^{|J_{i}|}}\left[\ell'(\rU_{J_{i}},-1)=y_{J_{i}}\right] \cdot \in d\{\cA((x,y))(i)\ne -1\}
  \\&\quad\ge \frac{1}{2}\sum_{y_{J_{i}} \in \{\pm 1\}^{|J_{i}|}} \p_{\rU \sim [0,1]^{|J_{i}|}}\left[\ell'(\rU_{J_{i}},1)=y_{J_{i}}\right] \cdot \in d\Set*{\sign\left(\sum_{t=1}^{k} y_{j^{i}_{t}}\right)\ne 1}
  \\&\qquad+ \frac{1}{2}\sum_{y_{J_{i}} \in \{\pm 1\}^{|J_{i}|}}
    \p_{\rU \sim [0,1]^{|J_{i}|}}\left[\ell'(\rU_{J_{i}},-1)=y_{J_{i}}\right] \cdot \in d\Set*{\sign\left(\sum_{t=1}^{k} y_{j^{i}_{t}}\right)\ne -1}
  \\&\quad= \e_{\rb_{i} \sim \{\pm 1\}}\left[\sum_{y_{J_{i}} \in \{\pm 1\}^{|J_{i}|}} \p_{\rU \sim [0,1]^{|J_{i}|}}\left[\ell'(\rU_{J_i},\rb_i)=y_{J_{i}}\right] \cdot \in d\Set*{\sign\left(\sum_{t=1}^{k} y_{j^{i}_{t}}\right)\ne \rb_{i}}\right]
  ,
\end{align}
which furthermore by \cref{eq:lowerboundwithprob1} implies that
\begin{align}
  &\e_{\rb_{i} \sim \{\pm 1\}}\e_{\rU \sim [0,1]^{m}}\left[ \in d\left\{ \cA((x,\ell_{(b_{1},\ldots,\rb_{i},\ldots,b_{d})}(x,\rU)))(i)\ne \rb_{i}\right\}\right]
  \\&\;\ge \e_{\rb_{i} \sim \{\pm 1\}}\left[\sum_{y_{J_{i}} \in \{\pm 1\}^{|J_{i}|}} \p_{\rU \sim [0,1]^{|J_{i}|}}\left[\ell'(\rU_{J_i},\rb_i)=y_{J_{i}}\right] \cdot \in d\left\{ \sign\left(\sum_{t=1}^{k} y_{j^{i}_{t}}\right)\ne \rb_{i}\right\}\right]
  \\&\;= \e_{\rb_{i} \sim \{\pm 1\}}\left[\e_{\rU \sim [0,1]^{|J_{i}|}}\left[ \in d\Set*{\sign\left(\sum_{t=1}^{k} \ell'(\rU_{j^i_{t}},\rb_i)\right) \ne \rb_{i}}\right]\right]
  ,
\end{align}
where the sum over $y_{-J_{i}}$ becomes one since $ \in d\Set*{\sign\left(\sum_{t=1}^{k} y_{j^{i}_{t}}\right)\ne \rb_{i}}$, does not depend on $y_{-J_{i}}$ and thus we can take it out of the sum over $y_{-J_{i}}$ .

Now in the case that $|J_{i}|=0$ we have that $\sign\left(\sum_{t=1}^{k} \ell'(\rU_{j^i_{t}},\rb_i)\right)=\sign(0)=1$ and thus we have the above is $1/2$.
Now in the case that $|J_{i}|>0$ we have that
\begin{align}
  &\Ev_{\rb_{i} \sim \{\pm 1\}}*{\Ev_{\rU \sim [0,1]^{|J_{i}|}}*{ \in d\Set*{\sign\Par[\bbig]{\sum_{t=1}^{k} \ell'(\rU_{j^i_{t}},\rb_i)} \ne \rb_{i}}}}
  \\&= \frac{1}{2} \Par*{\Prob_{\rU \sim [0,1]^{|J_{i}|}}*{\sign\Par[\bbig]{\sum_{t=1}^{k} \ell'(\rU_{j^i_{t}},1)} \ne 1} + \Prob_{\rU \sim [0,1]^{|J_{i}|}}*{\sign\Par[\bbig]{\sum_{t=1}^{k} \ell'(\rU_{j^i_{t}},-1)} \ne -1}}
  .
\end{align}
By the definition of $\ell'(\rU_{j^i_{t}}, b_{i}) \coloneqq 2( \in d\{\rU_{j^{i}_{t}}\le 1/2+cb_{i}\} - 1/2)$ we have that the event $\sign\left(\sum_{t=1}^{k} \ell'(\rU_{j^i_{t}},-1)\right)\ne -1$ happens when $k/2 \le |\{t : \ell'(\rU_{j^{i}_{t}}, -1) = 1\}|$, where the less than or equal to is due to us taking $\sign(0)=1$.
We notice that $|\{t : \ell'(\rU_{j^{i}_{t}},-1)=1\}|$ has a binomial distribution with $k$ trials and success probability $1/2-c$.
Thus, we have by \cref{lem:lowerbound1withconstantprobtechnical1}
\begin{align}
  \p_{\rU \sim [0,1]^{|J_{i}|}}\left[\sign\Par[\bbig]{\sum_{t=1}^{k} \ell'(\rU_{j^i_{t}}, -1)} \ne -1\right]
  &\ge \p_{\rU \sim [0,1]^{|J_{i}|}}\left[\{t : \ell'(\rU_{j^{i}_{t}},-1) = 1\} \ge k/2\right]
  \\&\ge \frac{1}{2}\left(1 - \sqrt{1 - \exp{\left(-|J_{i}|(2c)^{2}/(1-(2c)^{2}) \right)}}\right)
  \\&\ge \frac{1}{2}\left(1 - \sqrt{|J_{i}|(2c)^{2}/(1-(2c)^{2})}\right)
  ,
\end{align}
where the last inequality follows from $\exp(x)\ge 1+x$ for all $x \in \mathbb{R}$.
Which furthermore implies that
\begin{align}
  &\e_{\rb_{i} \sim \{\pm 1\}}\left[\e_{\rU \sim [0,1]^{|J_{i}|}}\left[ \in d\Set*{\sign\Par[\bbig]{\sum_{t=1}^{k} \ell'(\rU_{j^i_{t}},\rb_i)} \ne \rb_{i}}\right]\right]
  \\&\quad= \frac{1}{4} (1 - \sqrt{|J_{i}|(2c)^{2}/(1-(2c)^{2})})
  ,
\end{align}
which also holds for $|J_{i}| = 0$ since $\Prob_{\rU \sim [0,1]^{|J_{i}|}}[\big]{\sign\Par[\big]{\sum_{t=1}^{k} \ell'(\rU_{j^i_{t}},-1)} \ne -1} = 1/2$ in this case.

Thus, by the above we showed that for any realization $b_{-i}$ of $\rb_{-i}$ and $x$ of $\rX$ we have that
\begin{align}
  &\e_{\rb_{i} \sim \{\pm 1\}}\left[\e_{\rU \sim [0,1]^{m}}\left[ \in d\left\{ \cA((\rX,\ell_{\rb}(\rX,\rU)))(i) \ne \rb_{i}\right\}\right]\right]
  \\&\quad\ge \e_{\rb_{i} \sim \{\pm 1\}} \e_{\rU \sim [0,1]^{m}}\left[ \in d\left\{ \cA((x,\ell_{(b_{1},\ldots,\rb_{i},\ldots,b_{d})}(x,\rU)))(i)\ne \rb_{i}\right\}\right]
  \\&\quad\ge \frac{1}{4}(1-\sqrt{|J_{i}|(2c)^{2}/(1-(2c)^{2})})
  .
\end{align}

Now using this and plugging into \cref{eq:lowerboundwithprob2} we get the following lower bounded
\begin{align}\label{eq:lowerboundwithprop3}
  &\e_{\rb \sim \{\pm 1\}^{d}} \left[\e_{\rS \sim \cD_{\rb}^{m}}\left[\ls_{\cD_{\rb}}(\cA(\rS))-\ls_{\cD_{\rb}}(f_{\rb})\right] \right]
  \\&\quad\ge 2pc\sum_{i=1}^{d-1}\e_{\rX \sim \cD^{m}}\left[\frac{1}{4}(1-\sqrt{|\rJ_{i}|(2c)^{2}/(1-(2c)^{2})})\right]
  ,
\end{align}
where we recall that $\rJ_{i}$ is the indexes of $\rX$ that are equal to $i$.
Now using that $\sqrt{\cdot}$ is a concave function and Jensen's inequality we have that
\begin{align}
  \e_{\rX \sim \cD^{m}}\left[\frac{1}{4}(1-\sqrt{|\rJ_{i}|(2c)^{2}/(1-(2c)^{2})})\right]
  &\ge\frac{1}{4}\left(1-\sqrt{\frac{(2c)^{2}}{(1-(2c)^{2})}\cdot \e_{\rX \sim \cD^{m}}\left[|\rJ_{i}|\right]}\right)
  \\&= \frac{1}{4}\left(1-\sqrt{\frac{(2c)^{2}\cdot m\cdot p}{(1-(2c)^{2})}}\right)
  ,
\end{align}
where the last inequality follows from that $|\rJ_{i}|=\sum_{j=1}^{m} \in d\left\{ \rX_{j}=i \right\}$ and $p = \p_{\rX_{j} \sim \cD}\left[\rX_{j}=i\right]$ for $i \in [d-1]$, and $j \in [m]$.
Recalling that we had $p=\frac{L}{(d-1)(1/2-c)}$, and $c= \sqrt{\frac{d}{64mL}}$, which by $m\ge \frac{d}{L(1/2-L)^{2}}$ implies $c \le \sqrt{1/64}$, we make the following calculations on the right hand side of the above to get that
\begin{align}
  (2c)^{2}\cdot m\cdot p
  &= \left(2\sqrt{\frac{d}{64mL}}\right)^{2}\cdot m\cdot \frac{L}{(d-1)(1/2-c)}
  \\&=\frac{1}{16}\frac{d}{(d-1)(1/2-c)}
  \\&\le \frac{1}{8}\frac{1}{1/2-\sqrt{1/64}}
  ,
\end{align}
where the last inequality follows from $d\ge2$ so $d/(d-1)\le 2$ and $c\le \sqrt{1/64}$, furthermore since we have that $1/(1-(2c)^{2})\le \frac{1}{1-4/64}$ implying that
\begin{align}
  \frac{(2c)^{2} \cdot m\cdot p}{1-(2c)^{2}}
  \le \frac{1}{8} \cdot \frac{1}{1/2-\sqrt{1/64}} \cdot \frac{1}{1-4/64}
  \le \frac{2}{3}
  ,
\end{align}
so that
\begin{align}
  \e_{\rX \sim \cD^{m}}\left[\frac{1}{4}(1-\sqrt{|\rJ_{i}|(2c)^{2}/(1-(2c)^{2})})\right]
  \ge \frac{1}{4}\left(1-\sqrt{2/3}\right)
  \ge 1/25
  ,
\end{align}
implying by \cref{eq:lowerboundwithprop3} that we have shown that
\begin{align}
  \e_{\rb \sim \{\pm 1\}^{d}}\left[\e_{\rS \sim \cD_{\rb}^{m}}\left[\ls_{\cD_{\rb}}(\cA(\rS))-\ls_{\cD_{\rb}}(f_{\rb})\right] \right]
  \ge \frac{2pcd}{25}
  ,
\end{align}
which was the claim of \cref{eq:lowerbound1withconstantprob1}, concluding the proof of \cref{lem:lowerbound1withconstantprob}.
\end{proof}

\end{document}